\documentclass{article} 
\usepackage{iclr2026_conference,times}

\usepackage{amsmath,amsfonts,bm}

\def\eqref#1{equation~\ref{#1}}

\def\1{\bm{1}}

\DeclareMathAlphabet{\mathsfit}{\encodingdefault}{\sfdefault}{m}{sl}
\SetMathAlphabet{\mathsfit}{bold}{\encodingdefault}{\sfdefault}{bx}{n}

\usepackage{url}

\usepackage{graphicx}
\usepackage{algorithm}
\usepackage{algorithmic}
\usepackage{amsmath}
\usepackage{bm}
\usepackage{amssymb}
\usepackage{amsthm}
\usepackage{thmtools}
\usepackage{thm-restate}
\usepackage{booktabs}
\usepackage{caption}
\usepackage{multirow}
\usepackage{etoc}
\usepackage[dvipsnames]{xcolor}
\usepackage{hyperref}

\definecolor{cite_color}{HTML}{114083}
\definecolor{link_color}{RGB}{0,102,102}
\definecolor{link_color}{RGB}{153, 0,0}  
\definecolor{url_color}{RGB}{153, 102,  0}
\definecolor{emp_color}{RGB}{0,0,255}
\hypersetup{
	colorlinks=true,
	linkcolor=link_color,
	citecolor=cite_color,
	urlcolor=black,
	linktoc=section
}

\newtheorem{theorem}{Theorem}[section]
\newtheorem{proposition}[theorem]{Proposition}

\theoremstyle{definition}

\theoremstyle{remark}

\newcommand{\theoremfootnote}[1]{\footnote{#1}}

\title{Advancing Optimal Subset Oracle via \\ Learning Relaxation of Neural Set Functions}

\usepackage{etoc}
\etocdepthtag.toc{mtchapter}
\etocsettagdepth{mtchapter}{subsection}
\etocsettagdepth{mtappendix}{none}

\author{
	Yongquan Shi$^{1}$, Zijing Ou$^{2}$, Shiping Wang$^{1}$,  Yatao Bian$^{3}$ \\
	$^1$Fuzhou University, \ $^2$Imperial College London, \ $^3$National University of Singapore \\
	\texttt{losparksayoji@outlook.com} 
}

\iclrfinalcopy
\begin{document}

\maketitle

\begin{abstract}
	Learning neural set functions is pivotal to a wide range of important
	applications, including compound selection in AI-driven drug discovery
	and product recommendation.
	Recent work has introduced optimal subset oracles to implicitly learn
	set functions under practical weakly supervised settings, where model
	parameters are optimized through mean-field variational inference.
	However, these frameworks rely on Monte Carlo sampling to estimate
	gradients of the evidence lower bound when updating the variational
	distribution.
	Repeated sampling across iterations incurs substantial computational
	overhead, while the resulting stochasticity can destabilize the
	optimization trajectory.
	In this work, we reinterpret the evidence lower bound as a continuous
	relaxation of the set function and learn a surrogate objective that
	replaces sampling-based ELBO gradient estimation during variational
	optimization.
	The learned surrogate provides stable and efficient gradients
	throughout the continuous domain, thereby reducing computational
	overhead and accelerating inference.
	Furthermore, we establish an approximation guarantee for the proposed
	framework under submodular maximization and characterize its connection
	to variational free energy.
	Experiments on a variety of real-world tasks demonstrate consistent
	improvements over existing baselines.
\end{abstract}

\section{Introduction}
Set-value prediction has a wide range of applications in real-world scenarios and plays a crucial role in many tasks.
For example, recommendation systems select products that are likely to
interest a user \citep{coppolillo2024relevance}, anomaly detection identifies outliers from the
majority of observations \citep{zhang2020set}, and AI-driven drug discovery prioritizes
promising compounds from large candidate databases \citep{gimeno2019light}.
These tasks require explicitly or implicitly learning a set function
\citep{rezatofighi2017deepsetnet, zaheer2017deep} that assigns a utility value to each candidate subset, with more
desirable subsets receiving higher values.

More formally, our objective is to select an optimal subset $S^*$ from a given large ground set $V$, such that it attains the highest utility value among all candidate subsets according to a set function $F_\theta(S;V)$ parameterized by $\theta$. This process can be understood as optimizing the following criteria:
\begin{equation}\label{argmax F}
	S^* = \operatorname*{arg\,max}_{S \in 2^V} F_{\theta}(S; V).
\end{equation}
A direct approach is to learn $F_\theta(S;V)$ in a supervised manner from tuples ${(V_i,S_i,U_i)}_{i=1}^N$, where $U_i$ denotes the utility of the candidate subset $S_i$, a setting commonly referred to as a function-value (FV) oracle \citep{balcan2018submodular}.
However, this training paradigm is often prohibitively expensive, as it requires collecting utility annotations for a sufficiently large and diverse set of candidate subsets \citep{ou2022learning}.
To overcome this limitation, an alternative approach to optimizing objective (\ref{argmax F}) is to implicitly learn the set function from a probabilistic perspective \citep{tschiatschek2018differentiable}.
This approach estimates the parameter $\theta$ in a supervised manner using pairs $\{(V_i, S^*_i)\}_{i=1}^N$, where $S^*_i$ denotes the optimal subset corresponding to $V_i$, serving as an optimal subset (OS) oracle. 
With limited data sampled from the underlying distribution $\mathbb{P}(S, V)$, the OS oracle learns latent patterns by maximizing the empirical log-likelihood $\sum_{i=1}^{N} \log p_\theta(S_i^* | V_i)$ over the observed data.
Compared with the FV oracle, the OS oracle is generally more practical, as it avoids explicit utility labeling for candidate subsets, thereby reducing annotation costs in practice.

Although the OS oracle is conceptually appealing, the distribution $p(S | V)$ is generally intractable in practice. 
Consequently, most existing OS oracles adopt a variational inference framework \citep{blei2017variational}, in which a variational distribution $q_{\bm{\psi}}(S|V)$ is fitted to approximate the target distribution. 
Here, $\bm{\psi}\in[0,1]^{|V|}$ parameterizes $|V|$ independent Bernoulli
variables, with $\psi_i$ representing the probability that element $i$
is included in the predicted subset $S$.
Within this framework, optimizing the variational distribution requires maximizing the \textit{evidence lower bound} (ELBO), which is equivalent to minimizing the divergence between the two distributions.
Unfortunately, exact evaluation of the ELBO gradient is computationally prohibitive, as it involves an exponential summation over all possible subsets $S$. When the ground set $V$ becomes moderately large (e.g., $|V| \geq 30$), enumerating the entire $2^{|V|}$-subset space becomes intractable.

\begin{figure}[!t]
	\centering
	\includegraphics[width=1.0\linewidth]{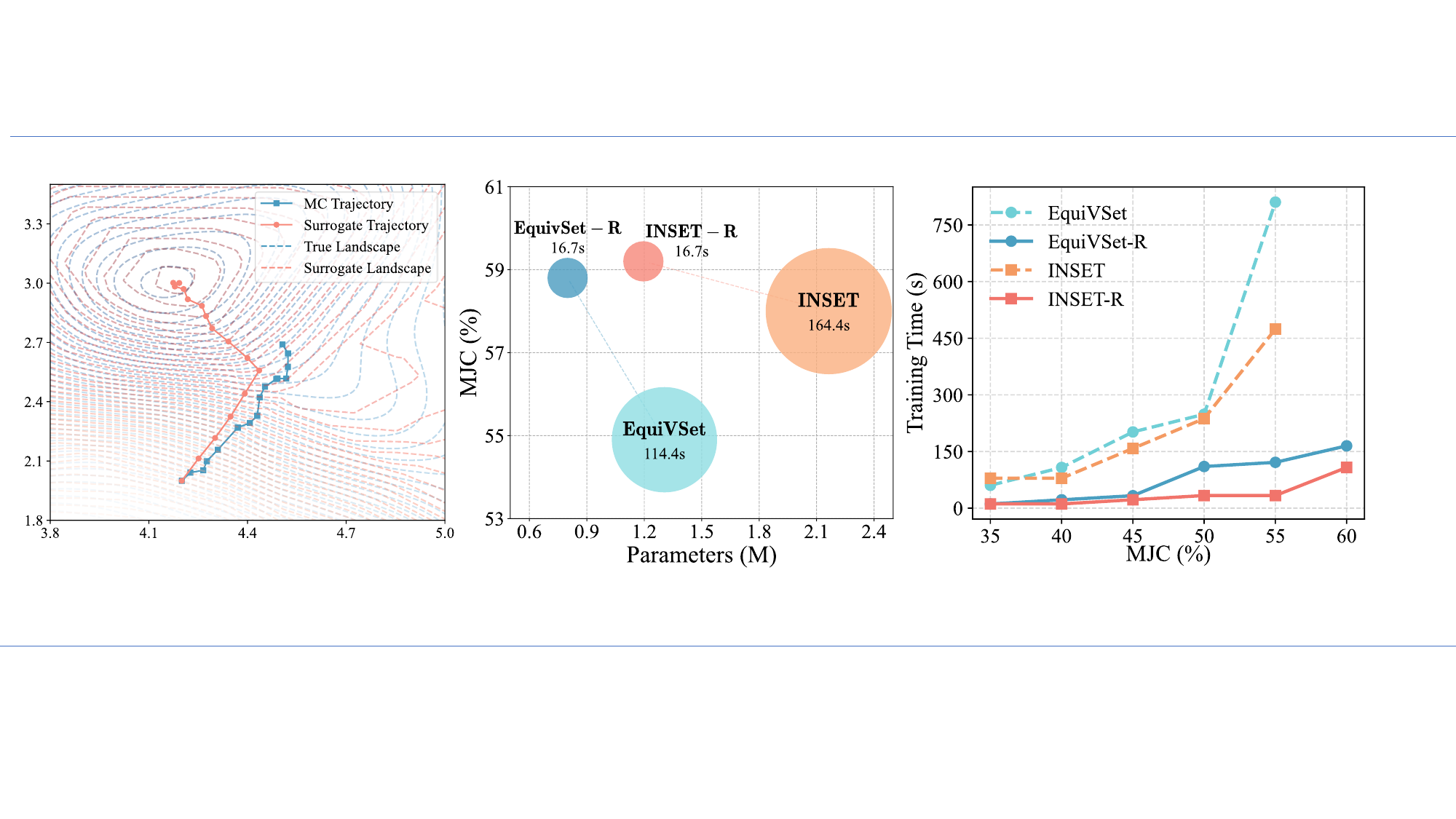}
	\caption{
		\textbf{(left)} Optimization trajectories over the same number of steps on a black-box objective: Monte Carlo sampling (blue) introduces stochastic noise, while surrogate-based optimization (red) converges faster and more smoothly.
		\textbf{(middle)} Comparison of different baselines on the CelebA dataset in terms of performance, per-iteration training time, and model size.
		\textbf{(right)} Time required by different baselines to reach the same validation performance on the CelebA dataset.
		Here, ``$-\operatorname{R}$'' denotes variants that replace Monte Carlo sampling with a learned set function relaxation.
	}
	\label{intro_fig}
	\vspace{-5mm}
\end{figure}

Existing approaches attempt to address this challenge through the Monte Carlo (MC) sampling-based approximation \citep{xie2024enhancing, xie2024horse, ozcan2025learning}; however, this strategy comes with limitations.
Accompanied by the unrolled multi-step gradient ascent, extensive sampling incurs substantial computational overhead. Meanwhile, the induced stochasticity destabilizes the optimization trajectory, making the model prone to suboptimal solutions.

In practice, the ELBO can be viewed as a relaxation of the underlying set function. 
Motivated by this, our core idea is to learn a differentiable surrogate objective that replaces sampling-based ELBO gradient estimation during variational optimization.
In this paper, we propose to learn a \textbf{Re}laxation of \textbf{Set} functions ($\operatorname{ReSet}$) within optimal subset oracles, thereby replacing MC–based approximations with a flexible and learnable paradigm.
By leveraging the learned relaxation,
$\operatorname{ReSet}$ substantially reduces optimization overhead and
enables faster updates of the variational distribution.
We provide illustrative comparisons in Figure~\ref{intro_fig} to demonstrate this.
This optimization process can be interpreted as a greedy strategy for continuous submodular maximization, and we derive an approximation ratio that provides a theoretical lower-bound guarantee.
Furthermore, we explore the intrinsic relationship between the ELBO and variational free energy, revealing that our method offers advantages beyond merely optimizing a fixed objective by adaptively learning the trade-off between expected energy and entropy.
Our main contributions can be summarized as follows:
\begin{itemize}
	\item We propose to learn a set function relaxation that replaces sampling-based ELBO gradient estimation, enabling efficient and stable learning of neural set functions.
	
	\item We guarantee that the proposed method converges to a stationary point and derive a constant approximation ratio for the solution, ensuring theoretical soundness.
	
	\item We revisit the variational inference process through the lens of free energy, demonstrating that our method is able to learn an adaptive trade-off between expected energy and entropy.
	
	\item Experimental results indicate that the proposed method consistently outperforms baseline methods across diverse real-world tasks.
\end{itemize}

\section{Preliminaries}\label{Preliminary}
We begin by introducing the optimization framework for optimal subset oracles.
From a probabilistic perspective, neural set function learning under the OS oracle can be formulated as a maximum likelihood problem \citep{stigler1990history}:
\begin{equation}
	\label{OS oracle}
	\begin{aligned}
		&\arg\max_{\theta} \mathbb{E}_{\mathbb{P}(V,S)}[\log p_{\theta}(S | V)], \\
		&\text{s. t. } p_{\theta}(S | V) \propto \exp(F_{\theta}(S ; V)), \forall S \in 2^{V},
	\end{aligned}
\end{equation}
where $p_{\theta}(S | V)$ denotes the target set distribution we aim to identify.
By treating the utility function $F_{\theta}(S;V)$ as the negative energy, the set distribution can be formulated via an energy-based model:
\begin{equation}\label{EBM}
	p_{\theta}(S | V)  =  \frac{\exp(F_{\theta}(S;V))}{Z}, 
	\qquad
	Z  =  \sum_{S' \subseteq V} \exp(F_{\theta}(S';V)).
\end{equation}
Such formulation follows the principle of maximum entropy \citep{guiasu1985principle}, thereby embodying the least prior assumptions. This endows the model with the ability to maintain the least possible bias toward unknown information, consistent with the notion of \textit{non-informative priors} in Bayesian modeling \citep{jeffreys1946invariant}.
Although this approach is conceptually attractive, learning the parameters $\theta$ is notoriously difficult.
Following \citet{ou2022learning}, the training of energy-based model is cast in a variational inference framework, where a variational distribution $q(\bm{\psi})$ is introduced as an approximation to the set mass function $p_\theta(S|V)$, the fitting target can be expressed as\footnote{We use the shorthand $q(\bm{\psi})$ to denote $q_{\bm{\psi}}(S | V)$ here.}
\begin{equation}\label{variational approximation}
	\bm{\psi}^* = \mathop {\arg\min}\limits_{\bm{\psi}} D[q(\bm{\psi}) || p_\theta(S | V)],
\end{equation}
where $D(\cdot||\cdot)$ is a discrepancy measure between two distributions, $\bm{\psi} \in [0, 1]^{|V|}$ represents the probability that each element $s \in V$ is included in the optimal subset $S^*$.
By achieving this objective, the variational distribution $q(\bm{\psi})$ can be regarded as a function of $\theta$, which allows the parameter $\theta$ to be effectively learned in a straightforward manner through the cross-entropy loss:
\begin{equation}\label{cross-entropy loss}
	\begin{split}
		\mathcal{L}(\theta; \bm{\psi}^*) &= \mathbb{E}_{\mathbb P(V,S)}[-\log q(\bm{\psi}^*)] \\
		&\approx   -\sum_{i \in S^{*}} \log \psi_{i}^{*} - \sum_{i \in V \setminus S^{*}} \log (1 - \psi_{i}^{*}).
	\end{split}
\end{equation}
In this way, the entire framework can be trained in a cooperative learning fashion \citep{xie2018cooperative}: with $\theta$ fixed, the variational distribution $q(\bm{\psi})$ is optimized to approximate the energy model $p_\theta(S | V)$ via (\ref{variational approximation}); then, $\theta$ is updated through gradient flow on the cross-entropy loss in (\ref{cross-entropy loss}).

\section{Method}
Although Section~\ref{Preliminary} outlines a feasible framework for learning set functions under energy-based modeling, the resulting optimization remains nontrivial.
In this section, we show how this challenge can be effectively addressed through a straightforward end-to-end learnable paradigm.
\subsection{Problem Setup}
In order to optimize objective (\ref{variational approximation}), we need to specify both the divergence measure $D(\cdot||\cdot)$ and the form of the variational distribution $q(\bm{\psi})$, while ensuring that $\bm{\psi}^*$ remains differentiable with respect to $\theta$.
A common choice is to define the variational distribution $q(\bm{\psi})$ as a product of $|V|$ independent Bernoulli variables, \textit{i.e.}, $q(\bm{\psi})=\prod_{i\in S}\psi_i
\prod_{i\in V\setminus S}(1-\psi_i)$, which implies that $q(\bm{\psi})$ is fully factorized and can be regarded as a mean-field approximation of $p_\theta(S | V)$. By further choosing the Kullback-Leibler divergence as the measure, the objective (\ref{variational approximation}) can be reformulated as maximizing the evidence lower bound (ELBO):
\begin{equation}\label{ELBO}
	\min_{\psi} \mathbb{KL}(q(\bm{\psi})||p_{\theta}(S|V)) \iff \max_{\bm{\psi}} \underbrace{f_{\text{mt}}^{F_\theta}(\bm{\psi}) + \mathbb{H}(q(\bm{\psi}))}_{\mathrm{ELBO}},
\end{equation}
where $f_{\text{mt}}^{F_\theta}(\bm{\psi})$ denotes the multilinear extension of $F_\theta(S)$ \citep{calinescu2007maximizing}, defined as
\begin{equation}\label{f_mt}
	f_{\text{mt}}^{F_\theta}(\bm{\psi}):= \sum_{S \subseteq V} F_{\theta}(S) \prod_{i \in S} \psi_{i} \prod_{i \notin S} (1 - \psi_{i}),
\end{equation}
and $\mathbb{H}(\cdot)$ is the entropy of a given distribution.
To maximize the ELBO in (\ref{ELBO}), a straightforward approach is to apply fixed-point iteration, where the stationary condition is
$\psi_i = \sigma ( \nabla_{\psi_i} f_{\text{mt}}^{F_{\theta}}(\boldsymbol{\psi}))$\footnote{The detailed derivation of this stationary condition is provided in Appendix~\ref{appendix:Derivations of the Fixed Point Iteration}.},
here $\sigma(\cdot)$ denotes the sigmoid function. 
In this way, $\bm{\psi}$ can be updated via the following procedure:
\begin{align}\label{MFVI}
	\boldsymbol{\psi}^{(0)} &\leftarrow \text{Initialize in } [0, 1]^{|V|},\\
	\label{FPI}
	\boldsymbol{\psi}^{(k)} &\leftarrow (1 + \exp(-\nabla_{\boldsymbol{\psi}^{(k-1)}} f_{\text{mt}}^{F_{\theta}}(\boldsymbol{\psi}^{(k-1)})))^{-1}, \\
	\boldsymbol{\psi}^\ast &\leftarrow \boldsymbol{\psi}^{(K)},
\end{align}
which was referred to as differentiable mean-field variational inference by \citet{ou2022learning}.

The crux of the matter is that computing the derivative of the multilinear extension $\nabla f_{\text{mt}}^{F_{\theta}}(\boldsymbol{\psi})$ entails summing over all possible subsets, which is computationally intractable.
Existing methods \citep{xie2024enhancing, xie2024horse, ozcan2025learning} approximate using Monte Carlo (MC) sampling\footnote{Further details on gradient approximation via Monte Carlo sampling are provided in Appendix~\ref{appendix:Details of Monte Carlo Gradient Estimation}.}:
\begin{equation}
	\nabla_{\psi_i} f_{\text{mt}}^{F_{\theta}}(\boldsymbol{\psi}) \approx  \frac{1}{M} \sum_{m=1}^M \left[ F_{\theta}(S^{(m)} + i) -  F_{\theta}(S^{(m)}) \right],
\end{equation}
where $S ^{(m)}$ is the $m$-th independent sample from $q(S;(\boldsymbol{\psi}|\psi_i \leftarrow 0))$, and $S^{(m)} + i$ denotes the union of the set $S^{(m)} \cup {i}$. 
However, the reliance on extensive sampling incurs substantial computational overhead, and the induced stochasticity can further destabilize the optimization trajectory.

\subsection{Learning Adaptive Set Function Relaxation}
In practice, when optimizing the ELBO with respect to $\theta$ while holding $\bm{\psi}$ fixed, the entropy term $\mathbb{H}(\cdot)$ in Eq.~(\ref{ELBO}) depends only on $\bm{\psi}$ and is therefore constant with respect to $\theta$. It can thus be omitted without affecting the optimization or the gradients with respect to $\theta$.
Consequently, we can further treat the ELBO as a multilinear extension of the set function $F_\theta(S; V)$.
Motivated by this observation, our core idea is to learn a differentiable surrogate objective $g_\theta:[0,1]^{|V|}\to\mathbb{R}_{+}$ as a relaxation of the set function, replacing sampling-based ELBO gradient estimation during variational optimization.\footnote{Note that we do not introduce an additional neural network here. The original OS framework already uses a network to approximate $F_\theta$ for MC-based gradient estimation, while our method directly learns the relaxation.} 
The surrogate enables stable gradient computation throughout the continuous domain without requiring repeated MC estimation at each optimization step.
Notably, constructing this surrogate function is easy to implement in practice.
According to \citet{karalias2022neural}, the extensions of set function, including $g_\theta(\cdot)$ here, can be modeled using simple neural networks.
Consequently, the variational parameter $\bm{\psi}$ can be effectively optimized through gradient ascent
\begin{equation}\label{gradient descent}
	\bm{\psi}^{(k+1)} = \bm{\psi}^{(k)} + \alpha \nabla_{\bm{\psi}^{(k)}}  g_\theta(\bm{\psi}^{(k)}),
\end{equation}
where $\alpha$ is the step size. 
Subsequently, the parameters $\theta$ can be updated through the marginal-based loss \citep{domke2013learning} defined in Equation (\ref{cross-entropy loss}).
Note that, in this case, the loss is propagated through the entire optimization process, which involves computing second-order derivatives (i.e., gradients of gradients). These derivatives can be efficiently evaluated using Hessian–vector products, which scale linearly with the model size \citep{dagreou2024compute}, similar to standard first-order backpropagation in feed-forward architectures.
A brief description of our method is given in Algorithm~\ref{algorithm1}, with more comprehensive procedure is detailed in Appendix~\ref{appendix:Detailed Pseudo Code of ReSet}.

\begin{algorithm}[t!]
	\caption{Variational Optimization with $\operatorname{ReSet}$}
	\label{algorithm1}
	\begin{algorithmic}[1]
		\STATE Initialize the variational parameter via the auxiliary recognition network\\
		$\bm{\psi}^{(0)} \leftarrow \mathrm{RecNet}_{\phi}(V)$
		\FOR{$k = 0$ to $K-1$}
		\STATE Compute the ELBO surrogate $g_{\theta}(\bm{\psi}^{(k)})$ with Eq.~(\ref{Network Architecture})
		
		\STATE Perform gradient ascent on $\bm{\psi}$ using Eq.~(\ref{add noise})\\
		
		$\bm{\psi}^{(k+1)} \leftarrow \bm{\psi}^{(k)} + \alpha \nabla_{\bm{\psi}} g_{\theta}(\bm{\psi}^{(k)}) + \bm{\epsilon}_k$
		\ENDFOR
		\STATE Set $\bm{\psi}^{*} \leftarrow \bm{\psi}^{(K)}$
	\end{algorithmic}
\end{algorithm}

In fact, this framework can be viewed as a continuous submodular maximization problem:
\begin{equation}
	\max_{\bm{\psi}} g_\theta(\bm{\psi}), \quad \text{s.t.} \quad \bm{\psi} \in \mathcal{X},
\end{equation}
where the constraint set $\mathcal{X}$ is a box constraint, forming a standard compact convex set
\begin{equation}
	\mathcal{X} =  \left\{\bm{\psi} \in \mathbb{R}^{|V|} \mid 0 \le \psi_i \le 1, \forall i \in \{1, \dots, |V|\} \right\}.
\end{equation}
The gradient $\nabla g_\theta(\cdot)$ of the differentiable surrogate
objective provides continuous surrogate marginal-gain signals,
indicating which elements (probability components) should be increased to improve the objective.
Through projected gradient ascent in (\ref{gradient descent}), the model selects a locally optimal direction at each iteration, which globally emulating the continuous greedy algorithm \citep{calinescu2007maximizing}.
The learned surrogate function is typically a weakly DR-submodular function, as enforcing exact submodularity is difficult with neural parameterizations.
In this case, the proposed framework admits the following guarantees \citep{pedramfar2024unifiedapproachmaximizingcontinuous, hassani2017gradient}:
\begin{restatable}{theorem}{MainTheorem} \label{theorem1}
	Suppose the relaxation surrogate function $g_\theta(\cdot): [0, 1]^{|V|} \to \mathbb{R}_{+}$ satisfies $\gamma$-weakly submodularity.\theoremfootnote{This assumption is not overly restrictive, as $\gamma \in [0,1]$ continuously characterizes the degree of submodularity: $\gamma=1$ recovers exact DR-submodularity, whereas $\gamma=0$ imposes no submodularity requirement.}
	Then the following result holds:
	
	1) For any stationary point $\bm{\psi}^* \in [0, 1]^{|V|}$, we have
	\begin{equation}
		g_\theta(\bm{\psi}^*) \ge \gamma e^{-\gamma} g_\theta(\bm{\psi}_{\mathrm{OPT}}),
	\end{equation}
	where $g_\theta(\bm{\psi}_{\mathrm{OPT}})
	 = \max_{\bm{\psi}} g_\theta(\bm{\psi})$ is the globally optimum.
	
	2) If $g_\theta(\cdot)$ is $L$-smooth\theoremfootnote{$L$-smoothness is a standard regularity assumption widely adopted in first-order optimization and continuous DR-submodular maximization.}
	, gradient ascent with step size $\alpha < \frac{1}{L}$ converges to a stationary point.
\end{restatable}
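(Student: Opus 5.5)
We prove the two parts separately. Both rely on the $\gamma$-weak DR-submodularity inequality along monotone directions, in the form used by \citet{hassani2017gradient}: for all $\bm{x}\le \bm{y}$ and all $i$,
$$\nabla_i g_\theta(\bm{x}) \ge \gamma\, \nabla_i g_\theta(\bm{y}).$$
This yields the standard first-order inequality
$$\langle \nabla g_\theta(\bm{x}), \bm{y}-\bm{x}\rangle \ge \gamma\, g_\theta(\bm{x}\vee\bm{y}) + \tfrac{1}{\gamma}\, g_\theta(\bm{x}\wedge\bm{y}) - (\gamma + \tfrac{1}{\gamma})\, g_\theta(\bm{x}),$$
in the monotone case.

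\textbf{Part 1 (stationary point bound).} Let $\bm{\psi}^*$ be stationary over the box $\mathcal{X}$, meaning $\langle \nabla g_\theta(\bm{\psi}^*), \bm{y}-\bm{\psi}^*\rangle \le 0$ for all $\bm{y}\in\mathcal{X}$.

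1. Take $\bm{y}=\bm{\psi}_{\mathrm{OPT}}$ and split $\bm{\psi}_{\mathrm{OPT}}-\bm{\psi}^*$ into its positive part $\bm{\psi}^*\vee\bm{\psi}_{\mathrm{OPT}}-\bm{\psi}^*$ and its negative part.
2. On the positive part, integrate along the segment from $\bm{\psi}^*$ to $\bm{\psi}^*\vee\bm{\psi}_{\mathrm{OPT}}$ and apply weak DR to get
$$\langle \nabla g_\theta(\bm{\psi}^*), (\bm{\psi}^*\vee\bm{\psi}_{\mathrm{OPT}})-\bm{\psi}^*\rangle \ge \gamma\bigl(g_\theta(\bm{\psi}^*\vee\bm{\psi}_{\mathrm{OPT}}) - g_\theta(\bm{\psi}^*)\bigr).$$
3. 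Handle the negative part symmetrically, using $\bm{\psi}^*\wedge\bm{\psi}_{\mathrm{OPT}}$.
4. Combine with stationarity and nonnegativity $g_\theta\ge 0$. If $g_\theta$ is monotone, $g_\theta(\bm{\psi}^*\vee\bm{\psi}_{\mathrm{OPT}})\ge g_\theta(\bm{\psi}_{\mathrm{OPT}})$, which gives the Hassani et al.\ ratio $\frac{\gamma^2}{1+\gamma^2}$.

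The advertised constant $\gamma e^{-\gamma}$ is weaker than $\frac{\gamma^2}{1+\gamma^2}$ for $\gamma\in[0,1]$. It matches instead the bound of \citet{pedramfar2024unifiedapproachmaximizingcontinuous} for stationary points of non-monotone weakly DR-submodular functions over a box containing $\bm{0}$, obtained via a boosting/non-oblivious argument.

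So the plan is to cite that unified framework: verify its hypotheses (box convex set, $g_\theta\ge0$, $\gamma$-weak DR-submodularity, differentiability) and invoke its stationary-point guarantee. Alternatively, one can reproduce it by a direct argument. Define the auxiliary path $\bm{z}(t)=\bm{\psi}^* + t(\bm{1}-\bm{\psi}^*)\odot\bm{\psi}_{\mathrm{OPT}}$-type interpolation, and derive a differential inequality $\frac{d}{dt}g(\bm{z}(t)) \ge \gamma(\ldots) - \gamma\, g$. Solving it by Gr\"onwall gives the $\gamma e^{-\gamma}$ factor.

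\textbf{Part 1 obstacle.} This step is the main difficulty: controlling the lower end $g_\theta(\bm{\psi}^*\vee\bm{\psi}_{\mathrm{OPT}})\ge (1-\|\bm{\psi}^*\|_\infty)\,g_\theta(\bm{\psi}_{\mathrm{OPT}})$-type terms without monotonicity. If that is too delicate, I would fall back to citing the theorem outright.

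\textbf{Part 2 (convergence).} This is the standard ascent lemma for $L$-smooth functions with projected gradient steps onto $\mathcal{X}$ and step size $\alpha<1/L$. Writing $P_{\mathcal{X}}$ for the projection onto $\mathcal{X}$, each step satisfies
$$g_\theta(\bm{\psi}^{(k+1)}) \ge g_\theta(\bm{\psi}^{(k)}) + \bigl(\tfrac{1}{\alpha}-\tfrac{L}{2}\bigr)\|\bm{\psi}^{(k+1)}-\bm{\psi}^{(k)}\|^2.$$
The values $g_\theta(\bm{\psi}^{(k)})$ are therefore nondecreasing and bounded above by $\max_{\mathcal{X}}g_\theta$, which is finite since $\mathcal{X}$ is compact and $g_\theta$ is continuous. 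Telescoping gives $\sum_k\|\bm{\psi}^{(k+1)}-\bm{\psi}^{(k)}\|^2<\infty$, so the gradient mapping $\frac{1}{\alpha}\bigl(P_{\mathcal{X}}(\bm{\psi}+\alpha\nabla g_\theta(\bm{\psi}))-\bm{\psi}\bigr)$ tends to $0$. By compactness and continuity of this mapping, every limit point is a fixed point of the projected step, which is exactly a stationary point in the variational-inequality sense used in Part 1.

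The noise $\bm{\epsilon}_k$ in Algorithm~\ref{algorithm1} would need to be summable or vanishing. I would state the result for the noiseless iteration (\ref{gradient descent}).
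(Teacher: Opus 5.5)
Your Part~2 is the paper's argument (ascent lemma for the projected step, telescoping, vanishing gradient mapping, stationarity of accumulation points), and it is fine. The gap is in Part~1. The stationary-point guarantee you plan to cite cannot exist, because the claim ``every stationary point satisfies $g_\theta(\bm{\psi}^*)\ge\gamma e^{-\gamma}g_\theta(\bm{\psi}_{\mathrm{OPT}})$'' is false without monotonicity.

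Take $g(x_1,x_2)=x_1(1-x_2)$ on $[0,1]^2$. It is nonnegative, $1$-smooth and DR-submodular ($\gamma=1$, since $\nabla g=(1-x_2,\,-x_1)$ is coordinatewise antitone). Yet $\nabla g(0,1)=\bm{0}$, so $(0,1)$ is stationary with $g(0,1)=0$, while $\max g=g(1,0)=1$.

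The obstacle you flagged is exactly where this breaks. The bound $g_\theta(\bm{x}\vee\bm{y})\ge(1-\gamma\|\bm{x}\|_\infty)g_\theta(\bm{y})$ is true; the paper proves it with a rescaled line integral plus $g_\theta\ge0$. But at an arbitrary stationary point $\|\bm{\psi}^*\|_\infty$ can equal $1$, and then the bound gives nothing. Boosting and non-oblivious arguments certify stationary points of an \emph{auxiliary} function, not of $g_\theta$ itself. Your Gr\"onwall sketch is the right mechanism, but only for a trajectory grown from $\bm{0}$, not for a path leaving an arbitrary stationary point.

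As a side remark, your comparison of constants is off: $\gamma e^{-\gamma}>\gamma^2/(1+\gamma^2)$ for $0<\gamma\lesssim0.74$ (e.g.\ $0.30$ vs.\ $0.2$ at $\gamma=1/2$). In the monotone case on the box, a stationary point is a global maximizer anyway, by your step~2 with $\bm{y}=\bm{1}$.

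The paper sidesteps this by not analyzing an arbitrary stationary point at all. Its proof runs a measured Frank--Wolfe (continuous-greedy) iteration from $\bm{\psi}^{(0)}=\bm{0}$: $\bm{\psi}^{(t+1)}=\bm{\psi}^{(t)}+\eta\bm{v}^{(t)}$ with $\eta=1/T$, where $\bm{v}^{(t)}$ maximizes $\langle\nabla g_\theta(\bm{\psi}^{(t)}),\bm{v}\rangle$ over $\bm{0}\le\bm{v}\le\bm{1}-\bm{\psi}^{(t)}$. Write $G_t=g_\theta(\bm{\psi}^{(t)})$. Comparing $\bm{v}^{(t)}$ with the feasible direction $(\bm{\psi}_{\mathrm{OPT}}-\bm{\psi}^{(t)})_+$, then applying your step~2 and the join lemma, gives $\langle\nabla g_\theta(\bm{\psi}^{(t)}),\bm{v}^{(t)}\rangle\ge\gamma\bigl[(1-\gamma\|\bm{\psi}^{(t)}\|_\infty)G_{\mathrm{OPT}}-G_t\bigr]$. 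The measured update is precisely what keeps $1-\gamma\|\bm{\psi}^{(t)}\|_\infty\ge(1-\eta\gamma)^t$. With $L$-smoothness this yields $G_{t+1}\ge(1-\eta\gamma)G_t+\eta\gamma(1-\eta\gamma)^tG_{\mathrm{OPT}}-L\eta^2|V|/2$, and unrolling gives $g_\theta(\bm{\psi}^{(T)})\ge\gamma e^{-\gamma}g_\theta(\bm{\psi}_{\mathrm{OPT}})-L|V|/(2T)$.

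So the constant $\gamma e^{-\gamma}$ (equal to $1/e$ at $\gamma=1$, the measured-continuous-greedy constant) is a property of that specific trajectory. It comes with an additive error and uses smoothness, and it says nothing about the stationary points reached by the projected gradient ascent of Part~2. To make your Part~1 sound, you must either add monotonicity or restate it for such an iterate, which is what the paper's argument actually establishes.
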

Above theorem indicates that our learning framework is theoretically sound. By choosing smooth activation functions such as sigmoid, we ensure that the optimization process will converge to a stationary point. More importantly, the stationary point to which it converges has a constant approximation ratio with a strict guarantee (at least $\gamma e^{-\gamma}$ of the global optimal value).
As the final solution $\bm{\psi}^{*}$ obtained is typically fractional, we can obtain a high-quality discrete solution $S^*$ using rounding techniques, which can be interpreted as Pipage rounding \citep{calinescu2007maximizing}, ensuring that the expected utility matches the fractional solution:
\begin{equation}
	\mathbb{E}_{S^*\sim\mathcal{R}(\bm{\psi}^*)}
	[F_\theta(S^*)]
	=
	f_{\mathrm{mt}}^{F_\theta}(\bm{\psi}^*).
\end{equation}
After sufficient training, the learned surrogate and the underlying multilinear objective are expected to yield aligned assessments of solution quality.\footnote{See
	Appendix~\ref{appendix:Derivation of the Discrete Approximation Guarantee}
	for a detailed formalization and derivation.}
Therefore, we can further derive a lower-bound guarantee for the resulting discrete solution:
\begin{equation}
	\mathbb{E}_{S^*\sim\mathcal{R}(\bm{\psi}^*)}
	\left[F_\theta(S^*)\right]
	\gtrsim
	\gamma e^{-\gamma}
	F_\theta(S_{\mathrm{OPT}}).
	\label{eq:discrete_bound_with_solution_error}
\end{equation}


\subsection{Connection to Variational Free Energy}
The energy (utility) function in (\ref{EBM}) evaluates the plausibility of a specific state, which inherently reflects a localized perspective. In contrast, the Helmholtz free energy provides a global measure, which is defined as
\begin{equation}\label{Helmholtz free energy}
	\mathcal{F} = -k_{B}T \log Z,
\end{equation}
where $k_{B}$ is the Boltzmann constant, and $T$ denotes the temperature.
It is worth noting that $Z$ here is the partition function from Equation (\ref{EBM}). It aggregates all possible states of the system, thereby encapsulating the model’s energy distribution across the entire data space. 
Within the variational framework, for a given ground set $V$ and
energy function $E_\theta(\cdot;V)$, the free energy in (\ref{Helmholtz free energy}) can be reformulated as
\begin{equation}\label{reformulated Helmholtz free energy}
	\mathcal{F}_\theta(\bm{\psi})
	=
	\mathbb{E}_{S\sim q({\bm{\psi}})}
	\left[E_\theta(S;V)\right]
	+
	k_BT\sum_{i\in V}
	\left[
	\psi_i\log\psi_i
	+
	(1-\psi_i)\log(1-\psi_i)
	\right].
\end{equation}
Here, the first term denotes the expected energy under the mean-field
variational distribution $q({\bm{\psi}})$, while the second term accounts for its entropy.
Here, $k_B$ is the Boltzmann constant, $T$ denotes the temperature,
and their product $k_BT$ controls the trade-off between expected energy
and entropy.
\begin{restatable}[Variational Free Energy]{proposition}{VariationalFreeEnergy} \label{Variational Free Energy}
	Consider a probabilistic model $p(x,z)$ defined by an energy function $E(x,z)$, together with a variational distribution $q(z|x)$ that approximates the true posterior $p(z|x)$, the variational free energy associated admits a decomposition into an expected energy term and an entropy term:
	\begin{equation}
		\mathcal{F} = \langle E \rangle_q - \mathcal{H}_q,
	\end{equation}
	where $\langle E \rangle_q = \mathbb{E}_{q(z|x)}[-\log p_\theta(x,z)]$ denotes the expected energy under the variational distribution $q(z|x)$, and $\mathcal{H}_q = \mathbb{E}_{q(z|x)}[-\log q(z|x)]$ corresponds to the entropy.
\end{restatable}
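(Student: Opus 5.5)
The statement is essentially a definition-unpacking identity, so I would prove it by direct computation from the definition of the variational free energy as the negative ELBO. Concretely, I take
\[
\mathcal{F} := \mathbb{E}_{q(z|x)}\left[\log q(z|x) - \log p_\theta(x,z)\right],
\]
which equals $\mathbb{KL}(q(z|x)\,\|\,p_\theta(z|x)) - \log p_\theta(x)$. Its negative is exactly the ELBO used in Eq.~(\ref{ELBO}). I would first justify this choice by recalling that $\log p_\theta(x) = \mathrm{ELBO} + \mathbb{KL}(q\|p_\theta(z|x))$. This shows that minimizing $\mathcal{F}$ over $q$ is the same as the variational fitting target (\ref{variational approximation}), and that $\mathcal{F} \ge -\log p_\theta(x)$. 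The bound is the Helmholtz free energy (\ref{Helmholtz free energy}) with $k_BT=1$, since under the energy-based model (\ref{EBM}) one has $-\log p_\theta(x) = -\log Z$ for the marginal partition function.

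The key step is then linearity of expectation. I split
\[
\mathbb{E}_q[\log q - \log p_\theta] = \mathbb{E}_q[-\log p_\theta(x,z)] - \mathbb{E}_q[-\log q(z|x)].
\]
I identify the first term as $\langle E\rangle_q$, using that the energy is defined up to the normalizer by $p_\theta(x,z)\propto \exp(-E(x,z))$; I would take $E(x,z) = -\log p_\theta(x,z)$ so that the constant is absorbed. The second term is by definition $\mathcal{H}_q$, which gives $\mathcal{F} = \langle E\rangle_q - \mathcal{H}_q$.

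Finally, I would specialize to the set setting as a consistency check. There $z=S$, $x=V$, $E_\theta(S;V) = -F_\theta(S;V)$ (up to $\log Z$), and $q$ is the mean-field Bernoulli distribution. The entropy then factorizes as $-\sum_i[\psi_i\log\psi_i + (1-\psi_i)\log(1-\psi_i)]$, and $\langle E\rangle_q = -f_{\text{mt}}^{F_\theta}(\bm{\psi})$ plus a constant, recovering (\ref{reformulated Helmholtz free energy}) with $k_BT=1$.

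The only real obstacle is bookkeeping, not mathematics. The paper's energy $E$ and $-\log p_\theta(x,z)$ differ by the additive constant $\log Z$, so I must state clearly that the decomposition holds with this identification, or equivalently up to a $q$-independent constant. I must also ensure that expectations are finite, which is automatic for a finite ground set and $q$ absolutely continuous with respect to $p_\theta$, the latter holding because $p_\theta>0$ everywhere under (\ref{EBM}).
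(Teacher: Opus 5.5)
Your argument is correct. The paper gives no separate proof of this proposition, so the real difference is which identity is taken as primitive. The paper treats $\mathcal{F}=\langle E\rangle_q-\mathcal{H}_q$ as the \emph{definition} of the variational free energy (it is simply invoked in the first line of the proof of Corollary~\ref{corollary1}) and then derives $\mathcal{F}=-\mathrm{ELBO}$ from it. You instead define $\mathcal{F}:=-\mathrm{ELBO}=\mathbb{E}_q[\log q-\log p_\theta(x,z)]$ and read off the decomposition by linearity. Because the statement itself fixes $\langle E\rangle_q:=\mathbb{E}_q[-\log p_\theta(x,z)]$, the two choices are interchangeable. Yours makes the proposition a one-line computation, at the cost of turning Corollary~\ref{corollary1} into a restatement of your definition rather than a consequence. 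Your explicit tracking of the $\log Z$ gap between the energy $E$ and $-\log p_\theta(x,z)$ is more careful than the paper. Its corollary proof silently switches between $\mathbb{E}_q[E_\theta]$ and $\mathbb{E}_q[-\log p_\theta]$, and uses the same symbol $\mathcal{F}$ for both the Helmholtz and the variational free energy.

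There is one slip in that bookkeeping: you use two incompatible conventions. You claim that $-\mathcal{F}$ is \emph{exactly} the ELBO of (\ref{ELBO}) and that $-\log p_\theta(x)=-\log Z$. Both claims require $p_\theta(x,z)$ to be the unnormalized weight $\exp(F_\theta(S;V))$, whose marginal over $S$ is the partition function. But then $-\log p_\theta(x,z)=E_\theta(S;V)$ exactly, and there is no $\log Z$ offset, contrary to your last two paragraphs. If instead you use the normalized model (\ref{EBM}), then:
\begin{itemize}
\item $p_\theta(x)=\sum_S p_\theta(S|V)=1$;
\item your $\mathcal{F}$ equals $\mathbb{KL}(q\,\|\,p_\theta(S|V))\ge 0$;
\item $\mathcal{F}$ differs from $-\mathrm{ELBO}$ in (\ref{ELBO}) by $+\log Z$;
\item the Helmholtz bound becomes $\mathcal{F}_\theta(\bm{\psi})=\mathcal{F}-\log Z\ge-\log Z$ for the quantity in (\ref{reformulated Helmholtz free energy}).
\end{itemize}
Fix one convention throughout; the core identity is unaffected either way.
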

Notably, the set utility function naturally plays the role of negative energy here, \textit{i.e.}, $F_\theta(\cdot)=-E_\theta(\cdot)$, which implies the expected energy term in (\ref{reformulated Helmholtz free energy}) can be interpreted as the negative counterpart of the multilinear extension $f_{\text{mt}}^{F_\theta}$ in (\ref{ELBO}).
Consequently, by setting $k_B T = 1$, the variational free energy becomes equivalent to the negative ELBO.
\begin{restatable}{corollary}{ELBOCorollary} \label{corollary1}
	Consider a probabilistic model with joint distribution $p(x,z)$ and a variational distribution $q(z|x)$ approximating the true posterior $p(z|x)$. When setting $k_BT=1$, the variational free energy is equal to the negative evidence lower bound, \textit{i.e.}, $\mathcal{F}=-\mathrm{ELBO}$.
\end{restatable}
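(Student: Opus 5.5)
The statement just above is Corollary~\ref{corollary1}. I plan to derive it directly from Proposition~\ref{Variational Free Energy}. The idea is to write out the ELBO explicitly and compare it term by term with the decomposition $\mathcal{F} = \langle E\rangle_q - \mathcal{H}_q$, keeping careful track of where the factor $k_BT$ enters.

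The first step is to reinstate the temperature, which Proposition~\ref{Variational Free Energy} states implicitly at unit scale. Following (\ref{reformulated Helmholtz free energy}), the general form is
\begin{equation*}
\mathcal{F} = \langle E\rangle_q - k_BT\,\mathcal{H}_q ,
\end{equation*}
where the entropy carries the factor $k_BT$. For the mean-field Bernoulli $q(\bm{\psi})$, the sum $\sum_i[\psi_i\log\psi_i+(1-\psi_i)\log(1-\psi_i)]$ is exactly $-\mathcal{H}_q$, so (\ref{reformulated Helmholtz free energy}) is a special case of this form. Setting $k_BT=1$ gives $\mathcal{F} = \mathbb{E}_{q}[-\log p(x,z)] - \mathbb{E}_q[-\log q(z|x)]$.

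The second step is to recall the standard definition
\begin{equation*}
\mathrm{ELBO} = \mathbb{E}_{q(z|x)}\left[\log p(x,z) - \log q(z|x)\right],
\end{equation*}
which comes from $\log p(x) = \mathrm{ELBO} + \mathbb{KL}(q(z|x)\,\|\,p(z|x))$. By linearity of expectation,
\begin{equation*}
\mathrm{ELBO} = -\mathbb{E}_q[-\log p(x,z)] + \mathbb{E}_q[-\log q(z|x)] = -\langle E\rangle_q + \mathcal{H}_q .
\end{equation*}
Comparing with the first step gives $\mathcal{F} = -\mathrm{ELBO}$.

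As a consistency check in the paper's setting, I would substitute $E_\theta = -F_\theta$. The expected energy then becomes $-f_{\text{mt}}^{F_\theta}(\bm{\psi})$, so $\mathcal{F}_\theta(\bm{\psi}) = -\bigl(f_{\text{mt}}^{F_\theta}(\bm{\psi}) + \mathbb{H}(q(\bm{\psi}))\bigr)$, which matches (\ref{ELBO}).

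One subtlety concerns the normalizer. The energy-based $p_\theta$ in (\ref{EBM}) contains $Z$, so using $E=-F_\theta$ versus $E=-\log p_\theta$ shifts the expected energy by the constant $\log Z$. I will therefore fix the convention stated in Proposition~\ref{Variational Free Energy}, namely $\langle E\rangle_q=\mathbb{E}_q[-\log p_\theta(x,z)]$ with the joint. Under this convention the identity holds exactly. Under the unnormalized convention it holds up to an additive constant independent of $\bm{\psi}$, and I will note this explicitly.

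The main obstacle is not computational; it is this bookkeeping. One must ensure that the temperature multiplies only the entropy term and that the energy convention is consistent. Once both are fixed, the corollary is a one-line identity.
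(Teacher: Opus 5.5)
Your proof is correct. It departs from the paper's in the one place that actually matters: the normalizer.

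Both arguments expand the ELBO by linearity and match it term by term against $\langle E\rangle_q - \mathcal{H}_q$. You keep $\langle E\rangle_q = \mathbb{E}_q[-\log p_\theta(x,z)]$ exactly as the Variational Free Energy proposition defines it. No partition function ever appears, and the identity is immediate.

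The paper instead switches to an unnormalized energy, $\langle E\rangle_q = \mathbb{E}_q[E_\theta(x,z)]$ with $\log p_\theta(x,z) = -E_\theta(x,z) - \log Z_\theta$. It then tries to cancel the leftover $-\log Z_\theta$ by identifying it with the Helmholtz free energy. That step has two problems:
\begin{itemize}
\item it conflates the Helmholtz and variational free energies, which are both written $\mathcal{F}$;
\item it flips a sign, since $-\log Z_\theta = +\mathcal{F}_{\mathrm{Helm}}$, whereas the paper substitutes $-\mathcal{F}$.
\end{itemize}
Done consistently, the paper's own substitution gives $\mathcal{F} = -\mathrm{ELBO} - \log Z_\theta$ for the variational free energy. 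This is exactly the $\bm{\psi}$-independent offset you flag. In the set setting it reads $\mathcal{F}_\theta(\bm{\psi}) = -\log Z_\theta + \mathbb{KL}(q(\bm{\psi})\,\|\,p_\theta(S|V))$. So the Helmholtz free energy is a lower bound on $\mathcal{F}_\theta(\bm{\psi})$, not a term that cancels.

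Your route therefore yields an exact identity under one fixed convention, whereas the paper's detour through $Z_\theta$ does not close as written.

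One refinement of your final remark: the offset arises only when the two sides use different conventions. With the paper's set-level ELBO $f_{\text{mt}}^{F_\theta}(\bm{\psi}) + \mathbb{H}(q(\bm{\psi}))$, which is built from the unnormalized $F_\theta$, and $E_\theta = -F_\theta$, the identity is again exact. Your consistency check already shows this.
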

Therefore, maximizing ELBO in (\ref{ELBO}) is equivalent to minimizing the variational free energy $\mathcal{F}_\theta(\bm{\psi})$:
\begin{equation}\label{min F}
	\max_{\bm{\psi}} \mathrm{ELBO}  \iff  \min_{\bm{\psi}} f_{\text{mt}}^{E_\theta}(\bm{\psi}) - \mathbb{H}(q(\bm{\psi}))  =  \mathcal{F}_\theta(\bm{\psi}).
\end{equation}
Conceptually, this equivalence indicates that variational inference can be interpreted as an energy minimization process: the model seeks balance between fitting the observed data and maintaining uncertainty. 
By minimizing $\mathcal{F}_\theta(\bm{\psi})$, the model assigns low energy to real samples while encouraging the variational distribution to maintain high entropy, thereby preventing mode collapse.
More importantly, this implies that our method offers advantages beyond the objective in (\ref{ELBO}). 
In contrast to directly optimizing the ELBO at a fixed balancing relation (temperature), we can directly fit the free energy in (\ref{reformulated Helmholtz free energy}) to learn an adaptive temperature $T$, thereby allowing the model to identify an adaptive trade-off between expected energy and entropy.

\section{Network Architecture}
Although we have established theoretical guarantees above, realizing a smooth and well-behaved energy landscape remains notoriously difficult in practice \citep{gladstone2025energy}, as real-world problems often involve high-dimensional objective surfaces.
To address this, we leverage Langevin Dynamics \citep{du2019implicit} by injecting a stochastic noise term, thereby promoting exploration of the energy landscape:
\begin{equation}\label{add noise}
	 \bm{\psi}^{(k+1)}  =  \bm{\psi}^{(k)}  +  \alpha \nabla_{\bm{\psi}}  g_\theta(\bm{\psi}^{(k)})  +  \bm{\epsilon}_k,  \qquad  \bm{\epsilon}_k \sim \mathcal{N}(0, \tau),
\end{equation}
where $\tau$ is the magnitude of the noise $\bm{\epsilon}$. Besides this, to ensure that the optimization trajectory of the predictive distribution can adapt to diverse problem settings, we randomize the gradient ascent step size $\alpha$, which improves generalization.
Taken together, these improvements equip the model with a stronger capacity to navigate complex and rugged objective landscapes in real-world tasks.
In addition, to further accelerate inference and improve scalability, we introduce an auxiliary recognition network\footnote{Following the setup of \citet{ou2022learning}; see Appendix~\ref{appendix:Accelerating Inference with Equivariant Neural Networks} for details.} that amortizes inference by providing a reasonable initialization of the variational distribution $q(\bm{\psi})$ for a given set $V$ before optimizing the objective in (\ref{ELBO}):
\begin{equation}
	\bm{\psi}^{(0)} = \operatorname{RecNet}_\phi(V).
\end{equation}
Furthermore, modeling the surrogate $g_\theta$ requires permutation invariance, which cannot be achieved by classic feed-forward networks. We adopt the DeepSets architecture \citep{zaheer2017deep}, which overcomes this limitation through the following proposition:
\begin{proposition}
	\label{Deep Set}
	All permutation invariant set functions can be decomposed in the form $f(S)=\rho\Big(\sum_{s_i\in S}\kappa(s_i)\Big)$, for suitable transformations $\kappa$ and $\rho$.
\end{proposition}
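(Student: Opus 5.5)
We prove Proposition~\ref{Deep Set} in the setting where it is most cleanly true: the ground set is countable (or $V$ finite), and the set function is defined on finite subsets. The strategy is the classical sum-decomposition argument of \citet{zaheer2017deep}. We build an injective encoding of finite subsets as real numbers by summing per-element codes. Then $\rho$ is defined on the image of that encoding as $f$ composed with the inverse map.

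\textbf{Step 1 (codes).} Since the universe $\mathfrak{X}$ of possible elements is countable, fix an enumeration $c:\mathfrak{X}\to\mathbb{N}$ that is injective. Define $\kappa(s)=4^{-c(s)}$.

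\textbf{Step 2 (injectivity).} Consider the map $\Phi(S)=\sum_{s\in S}\kappa(s)$ for finite $S$. Its value is a number whose base-$4$ expansion has digit $1$ exactly at the positions $c(s)$, $s\in S$, and $0$ elsewhere. Because each digit is at most $1<3$, no carries occur. Finite expansions of this type are unique, so $\Phi(S)=\Phi(S')$ forces $\{c(s)\}_{s\in S}=\{c(s)\}_{s\in S'}$. Injectivity of $c$ then gives $S=S'$. If multisets must be handled, replace base $4$ with a base larger than the maximal multiplicity plus one; when $|V|$ is fixed and finite, base $|V|+2$ suffices.

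\textbf{Step 3 (define $\rho$).} Set $\rho(y)=f(\Phi^{-1}(y))$ for $y$ in the image $\Phi(2^{\mathfrak X}_{\mathrm{fin}})$, and extend $\rho$ arbitrarily (say by $0$) elsewhere. Then $\rho(\sum_{s\in S}\kappa(s))=f(S)$ holds by construction. Permutation invariance is automatic, because $\Phi$ depends only on the set $S$, not on any ordering of its elements.

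\textbf{Main obstacle.} The delicate point is the word ``all'': for uncountable element domains (e.g.\ $s_i\in\mathbb{R}^d$), a discontinuous $\kappa$ into $\mathbb{R}$ still exists by cardinality arguments. However, requiring continuous $\kappa,\rho$ needs a latent dimension at least the maximal set size, using power-sum or symmetric-polynomial embeddings together with the fundamental theorem of symmetric polynomials. I would state the proof for the countable or finite-$V$ case, which is all the paper uses since $V$ is a finite ground set. I would then remark that the continuous uncountable case follows by mapping $S$ to the vector of power sums $(\sum_{s}s^{k})_{k=1}^{|S|}$, which is injective on sets of bounded size via Newton's identities.
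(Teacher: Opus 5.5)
Your argument is correct and is exactly the countable-universe proof of \citet{zaheer2017deep}: codes $4^{-c(s)}$, injectivity of the sum map, and $\rho=f\circ\Phi^{-1}$ on its image. That citation is all the paper relies on, since it gives no proof of Proposition~\ref{Deep Set} and drops the countability hypothesis that you rightly restore. Only your closing remarks on the uncountable case are loose; none of them is needed for the case you actually prove.
\begin{itemize}
\item A real-valued sum-injective $\kappa$ on $\mathbb{R}^d$ requires a continuum-sized family of reals with distinct finite subset sums, e.g.\ part of a Hamel basis of $\mathbb{R}$ over $\mathbb{Q}$. Cardinality alone does not supply this.
\item The power-sum map needs a fixed range $k=0,\dots,N$, with $N$ the maximal set size; otherwise $\{0,1\}$ and $\{1\}$ collide. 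As written it also handles only scalar elements.
\item If $\kappa$ may be vector-valued, the unrestricted statement is immediate with $\kappa(s)=\delta_s$. This is the empirical-measure (one-hot) representation the paper invokes right after the proposition.
\end{itemize}
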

Following the DeepSets framework, we further construct permutation-invariant sufficient representations\footnote{Following \citet{xie2024enhancing}, we incorporate background information from $V$ into the set function; see Appendix~\ref{appendix:Incorporate Sufficient Invariant Statistics of Background Information} for details.}
for both the superset $V$ and the subset $S$ by employing the soft assignment $\bm{\psi}$. These representations can be approximated by empirical estimations $R(s_i)=\sum_{s_i \in S} \delta(s_i)$ \cite{bloem2020probabilistic}, where $\delta(s_i)$ represents an atom of unit mass located at $s_i$, such as one-hot embeddings.
Hence, the neural network construction can be outlined as follows:
\begin{equation}\label{Network Architecture}
	 g_\theta(\bm{\psi})  =  \sigma \bigg(
	\theta_1 \sum_{i=1}^{|V|} \zeta_1(\psi_i s_i) + \theta_2 \sum_{i=1}^{|V|} \zeta_2((1-\psi_i) s_i)
	\bigg).
\end{equation}
Here, the feed-forward modules $\theta_1$ and $\theta_2$ are accompanied by a non-linear activation layer denoted by $\sigma(\cdot)$. Due to the limitations of the mean-field assumption, the variational distribution $q(\bm{\psi})$ is insufficient for capturing interactions among elements within the input set. To remedy this, we design the architecture of function $\zeta$ as the multi-head attention \citep{vaswani2017attention}, enabling the capture of interactions among elements.
The overall architecture of the proposed network is illustrated in Appendix~\ref{appendix:Overall Framework}.
Leveraging this network architecture, we show in the following section that our proposed framework empirically outperforms the baselines.

\section{Empirical Studies}
We evaluate the proposed method on a diverse range of tasks, including product recommendation, set anomaly detection, compound selection, and synthetic experiments. All experiments are conducted with five different random seeds, and we report the corresponding mean performance along with standard deviation. Detailed descriptions of the model architectures and training configurations are provided in Appendix~\ref{appendix:Experimental Details}.
Additional evaluations including ablation studies, hyperparameter sensitivity analyses, and experiments on varying ground set sizes are provided in Appendix~\ref{appendix:Additional Experiments}.

\textbf{Evaluation Metric.} 
The goal of optimal subset selection is to identify an optimal subset $S^*$
from a given ground set $V$. In experiments, we adopt the mean Jaccard coefficient (MJC) as the evaluation metric to assess the performance of different methods. For each ground set, the Jaccard coefficient is
defined as $\operatorname{JC}(S', S^*) = \frac{|S' \cap S^*|}{|S' \cup S^*|}$, which evaluates the similarity between the predicted subset $S^{\prime}$ and the ground-truth optimal subset $S^*$. Subsequently, the MJC is obtained by averaging the Jaccard coefficients across all test instances. 

\textbf{Baselines.} 
Our method is designed as a plug-in module that can be integrated into existing optimal subset oracle frameworks. 
The resulting variants are denoted by the suffix ``-$\operatorname{R}$''.
We evaluate these variants against four standard baselines: Random, Probabilistic Greedy Model (PGM) \citep{tschiatschek2018differentiable}, DeepSet \citep{zaheer2017deep} and Set Transformer \citep{lee2019set}; as well as two optimal subset oracles: EquiVSet \citep{ou2022learning} and INSET \citep{xie2024enhancing}. Detailed descriptions of these baselines are provided in Appendix \ref{Baselines}.

\begin{figure}[h]
	\centering
	\begin{minipage}[t]{0.42\linewidth}
		\centering
		\captionof{table}{Results on the synthetic datasets.}
		\vspace{-5pt}
		\resizebox{\linewidth}{!}{
			\begin{tabular}{l|cc}
				\toprule
				Method & Two Moons & Gaussian Mixture  \\
				\midrule
				Random & 5.5 & 5.5 \\
				PGM & 36.0 $\pm$ 2.0 & 43.8 $\pm$ 0.9 \\
				DeepSet & 47.2 $\pm$ 0.3 & 44.6 $\pm$ 0.2 \\
				Set Transformer & 57.4 $\pm$ 0.2 & 90.5 $\pm$ 0.2 \\
				\midrule
				EquiVSet & 58.7 $\pm$ 0.2 & 90.9 $\pm$ 0.2 \\
				EquiVSet-$\operatorname{R}$
				& 99.2 $\pm$ 0.3 
				& \textbf{91.0 $\pm$ 0.1} \\
				\midrule
				INSET & 59.0 $\pm$ 0.3 & 90.9 $\pm$ 0.2 \\ 
				INSET-$\operatorname{R}$
				& \textbf{99.7 $\pm$ 0.1} 
				& 90.9 $\pm$ 0.1 \\
				\bottomrule
		\end{tabular}}
		\label{Synthetic}
	\end{minipage}
	\hfill
	\begin{minipage}[t]{0.56\linewidth}
		\centering
		\vspace{0pt}
		\includegraphics[width=\linewidth]{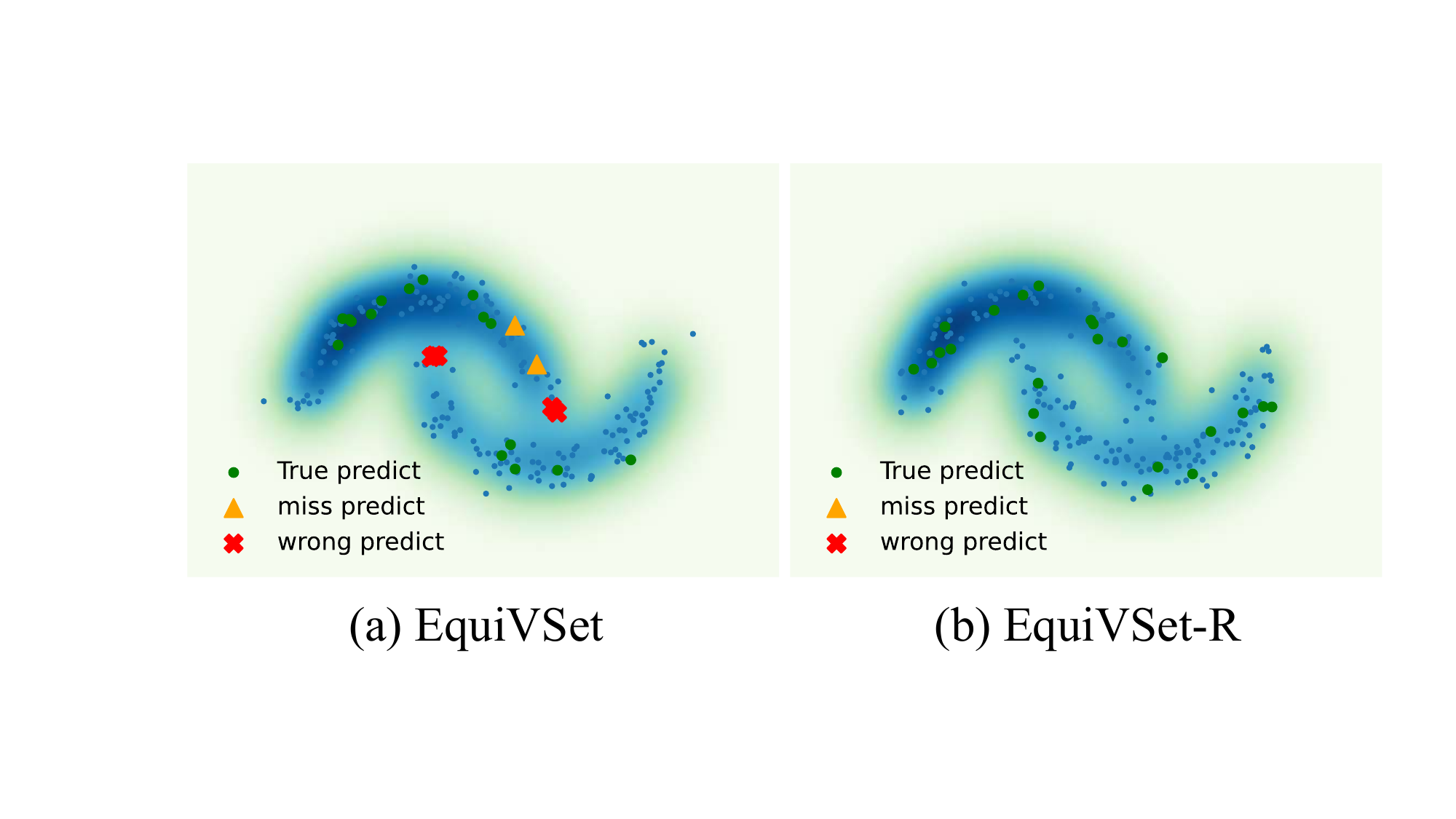}
		\vspace{-5mm}
		\captionof{figure}{Visualization on Two-Moons.}
		\label{moons_fig}
	\end{minipage}
\end{figure}

\textbf{Synthetic Experiments.}
We first evaluate the methods on two synthetic datasets: the two-moons dataset \citep{pedregosa2011scikit} with additional noise of variance $\sigma^2=1$, and the mixture of Gaussians represented as $\frac{1}{2}\mathcal{N}(\boldsymbol{\mu}_{0}, \boldsymbol{\Sigma}) + \frac{1}{2}\mathcal{N}(\boldsymbol{\mu}_{1}, \boldsymbol{\Sigma})$, where $\boldsymbol{\mu}_{0} = [\frac{1}{\sqrt{2}}, \frac{1}{\sqrt{2}}]^{\top}, \boldsymbol{\mu}_{1} = -\boldsymbol{\mu}_{0}, \boldsymbol{\Sigma} = \frac{1}{4}\mathbf{I}.$
Taking the Gaussian Mixture dataset as an example, the data are generated according to the following procedure: i) Initially, we sample an index $b \sim \operatorname{Bernoulli}(\frac{1}{2})$; ii) we then sample 10 points from the Gaussian distribution $\mathcal{N}(\boldsymbol{\mu}_{b}, \boldsymbol{\Sigma})$ to construct the ground-truth subset $S^*$; iii) Finally, we sample additional 90 points from $\mathcal{N}(\boldsymbol{\mu}_{1-b}, \boldsymbol{\Sigma})$ to construct $V \backslash S^*$.
For both datasets, we generate 1,000 samples for training, validation, and test, respectively.

From the quantitative results in Table \ref{Synthetic}, it can be observed that the proposed method achieves a significant improvement of nearly 70\% over the baseline on the more challenging Two Moons dataset. 
In addition, we provide visualization results of the model predictions after training for 50 epochs in Figure~\ref{moons_fig}.
The green dots represent correct model predictions, the red crosses are incorrect model predictions, and the yellow triangles represent data points in subset oracle $S^*$ that are missed by the model. 
One can see that EquiVSet tends to misclassify more points at the intersection, while also missing some in-distribution points.
In comparison, EquiVSet-$\operatorname{R}$ learns a more cohesive and clearer distribution boundary, while correctly identifying all the difficult samples in the overlapping region.

\begin{table*}[h]
	\caption{Experimental results (\%) on product recommendation across categories in the Amazon dataset, where ``Set-T'' denotes Set Transformer.}
	\centering
	\label{Product Recommendation}
	\resizebox{1\textwidth}{!}{
		\begin{tabular}{c|c|ccc|cc|cc}
			\toprule
			Categories & Random & PGM & DeepSet & Set-T 
			& EquiVSet & EquiVSet-$\operatorname{R}$ 
			& INSET & INSET-$\operatorname{R}$ \\
			\midrule
			Toys 
			& 8.3 
			& 44.1 $\pm$ 0.4 
			& 42.9 $\pm$ 0.5 
			& 62.6 $\pm$ 2.0 
			& 68.0 $\pm$ 2.0 
			& \textbf{77.9 $\pm$ 1.0} 
			& 76.9 $\pm$ 0.5 
			& \textbf{79.0 $\pm$ 0.6} \\
			
			Furniture 
			& 6.5 
			& 17.5 $\pm$ 0.7 
			& 17.6 $\pm$ 0.7 
			& 17.6 $\pm$ 0.8 
			& 17.2 $\pm$ 0.9 
			& \textbf{35.9 $\pm$ 1.3} 
			& 16.9 $\pm$ 5.0 
			& \textbf{36.5 $\pm$ 1.5} \\
			
			Gear 
			& 7.7 
			& 47.1 $\pm$ 0.4 
			& 38.1 $\pm$ 0.2 
			& 64.7 $\pm$ 0.6 
			& 70.0 $\pm$ 0.2 
			& \textbf{78.7 $\pm$ 2.6} 
			& 80.8 $\pm$ 1.2 
			& \textbf{82.1 $\pm$ 1.1} \\
			
			Carseats 
			& 6.6 
			& 23.0 $\pm$ 1.0 
			& 21.0 $\pm$ 1.0 
			& 22.0 $\pm$ 1.0 
			& 21.0 $\pm$ 1.0 
			& \textbf{26.9 $\pm$ 1.0} 
			& 23.1 $\pm$ 3.4 
			& \textbf{30.2 $\pm$ 3.9} \\
			
			Bath 
			& 7.6 
			& 56.4 $\pm$ 0.8 
			& 42.4 $\pm$ 0.6 
			& 71.6 $\pm$ 0.5 
			& 75.7 $\pm$ 0.9 
			& \textbf{84.7 $\pm$ 0.9} 
			& 86.2 $\pm$ 0.5 
			& \textbf{86.7 $\pm$ 0.6} \\
			
			Health 
			& 7.6 
			& 44.9 $\pm$ 0.2 
			& 44.8 $\pm$ 0.4 
			& 69.0 $\pm$ 1.0 
			& 70.0 $\pm$ 2.0 
			& \textbf{81.4 $\pm$ 0.6} 
			& 81.2 $\pm$ 0.5 
			& \textbf{82.2 $\pm$ 0.2} \\
			
			Diaper 
			& 8.4 
			& 58.0 $\pm$ 0.9 
			& 45.7 $\pm$ 0.5 
			& 78.9 $\pm$ 0.5 
			& 83.0 $\pm$ 1.0 
			& \textbf{90.4 $\pm$ 0.6} 
			& 88.0 $\pm$ 0.7 
			& \textbf{90.4 $\pm$ 0.5} \\
			
			Bedding 
			& 7.9 
			& 48.0 $\pm$ 0.6 
			& 48.2 $\pm$ 0.8 
			& 76.0 $\pm$ 2.0 
			& 77.0 $\pm$ 1.0 
			& \textbf{86.8 $\pm$ 1.2} 
			& 85.7 $\pm$ 1.0 
			& \textbf{87.6 $\pm$ 1.7} \\
			
			Safety 
			& 6.5 
			& 25.0 $\pm$ 0.6 
			& 22.1 $\pm$ 0.4 
			& 23.4 $\pm$ 0.9 
			& 25.0 $\pm$ 3.0 
			& \textbf{32.1 $\pm$ 9.4} 
			& 23.8 $\pm$ 1.5 
			& \textbf{38.3 $\pm$ 4.0} \\
			
			Feeding 
			& 9.3 
			& 56.0 $\pm$ 0.8 
			& 43.0 $\pm$ 0.2 
			& 75.3 $\pm$ 0.6 
			& 81.0 $\pm$ 0.7 
			& \textbf{87.2 $\pm$ 1.4} 
			& 88.2 $\pm$ 1.0 
			& \textbf{88.6 $\pm$ 0.5} \\
			
			Apparel 
			& 9.0 
			& 53.3 $\pm$ 0.5 
			& 50.7 $\pm$ 0.4 
			& 68.0 $\pm$ 2.0 
			& 75.0 $\pm$ 1.0 
			& \textbf{82.2 $\pm$ 0.9} 
			& \textbf{83.7 $\pm$ 0.3} 
			& 83.3 $\pm$ 1.3 \\
			
			Media 
			& 9.4 
			& 44.1 $\pm$ 0.9 
			& 42.0 $\pm$ 1.0 
			& 53.0 $\pm$ 2.0 
			& 57.0 $\pm$ 1.0 
			& \textbf{67.0 $\pm$ 1.0} 
			& 62.0 $\pm$ 2.3 
			& \textbf{68.1 $\pm$ 1.1} \\
			\bottomrule
		\end{tabular}
	}
\end{table*}

\textbf{Product Recommendation.}
We use the Amazon baby registry dataset \citep{gillenwater2014expectation} as a benchmark for the product recommendation task, which contains multiple categories of sub-products selected by different customers. 
Each product in the dataset is registered under a specific category, such as “Diaper” or “Feeding”, and is represented by a brief textual description. 
In practice, each product is represented by a 768-dimensional embedding obtained from a pre-trained BERT model \citep{lee2018pre}.
Following \citet{ou2022learning}, we partition the data into training, validation, and test sets using a 1:1:1 ratio. The objective is to identify the subset of products that a customer finds most relevant within a candidate set of size 30.
The performance of all methods across different categories is reported in Table~\ref{Product Recommendation}.
Overall, our approach consistently surpasses the two optimal subset oracle baselines. 
Notably, the performance improvements are more evident on challenging categories, such as Furniture, Carseats, Safety, and Media.

\textbf{Set Anomaly Detection.}
Experiments on set anomaly detection are conducted using four widely adopted benchmarks: Double MNIST \citep{sun2019multi}, F-MNIST \citep{xiao2017fashion}, CelebA \citep{liu2015deep}, and CIFAR-10 \citep{krizhevsky2009learning}.\footnote{Due to the absence of public code and the relative ambiguity in the protocol descriptions of \citet{ou2022learning}, we provide a reconstructed and well-defined experimental standard for Double MNIST, F-MNIST, and CIFAR-10; further discussions are provided in Appendix~\ref{Experimental Settings for Set Anomaly Detection}.}
According to the protocol of \citet{zaheer2017deep, ou2022learning}, we split each dataset into 10,000 training, 1,000 validation, and 1,000 test set instances.
Taking the Double MNIST dataset as an example, we form the OS oracle $S^*$ by sampling $n \in \{2,3,4\}$ images of the same class, with the complement $V \backslash S^*$ comprising $20 - |S^*|$ images sampled from non-target classes.
\begin{table*}[t]
	\centering
	\caption{Experimental results (\%) on Anomaly Detection and Compound Selection tasks.}
	\label{tab:AD-CS-results}
	\resizebox{0.88\textwidth}{!}{
		\begin{tabular}{l|cccc|cc}
			\toprule
			& \multicolumn{4}{c|}{{Anomaly Detection}} 
			& \multicolumn{2}{c}{{Compound Selection}} \\
			& \textbf{Double MNIST} 
			& \textbf{CelebA} 
			& \textbf{F-MNIST} 
			& \textbf{CIFAR10} 
			& \textbf{PDBBind} 
			& \textbf{BindingDB} \\
			\midrule
			Random 
			& 9.3 
			& 21.9 
			& 20.9 
			& 20.1 
			& 9.9 
			& 0.9 \\
			
			PGM 
			& - 
			& 48.1 $\pm$ 0.6 
			& - 
			& - 
			& 91.0 $\pm$ 1.0 
			& 69.0 $\pm$ 2.0 \\
			
			DeepSet 
			& 18.5 $\pm$ 1.0 
			& 44.0 $\pm$ 0.5 
			& 59.7 $\pm$ 1.2 
			& 29.0 $\pm$ 0.3 
			& 90.1 $\pm$ 1.1 
			& 71.0 $\pm$ 2.0 \\
			
			Set Transformer 
			& 22.0 $\pm$ 2.6 
			& 52.7 $\pm$ 0.8 
			& 69.8 $\pm$ 2.9 
			& 27.0 $\pm$ 1.7 
			& 91.9 $\pm$ 1.5 
			& 71.5 $\pm$ 1.0 \\
			
			\midrule
			EquiVSet 
			& 38.6 $\pm$ 2.5 
			& 54.9 $\pm$ 0.5 
			& 58.4 $\pm$ 1.2 
			& 30.4 $\pm$ 0.7 
			& 92.4 $\pm$ 1.1 
			& 72.1 $\pm$ 0.9 \\
			
			EquiVSet-$\operatorname{R}$ 
			& \textbf{71.0 $\pm$ 1.6} 
			& \textbf{58.8 $\pm$ 0.6} 
			& \textbf{72.6 $\pm$ 2.1} 
			& \textbf{32.4 $\pm$ 2.2} 
			& \textbf{93.0 $\pm$ 2.6} 
			& \textbf{78.3 $\pm$ 0.8} \\
			
			\midrule
			INSET 
			& 46.2 $\pm$ 1.1 
			& 58.0 $\pm$ 1.2 
			& 66.0 $\pm$ 2.8 
			& 32.0 $\pm$ 0.9 
			& 93.5 $\pm$ 0.8 
			& 73.4 $\pm$ 1.0 \\
			
			INSET-$\operatorname{R}$ 
			& \textbf{73.2 $\pm$ 1.8} 
			& \textbf{59.2 $\pm$ 0.2} 
			& \textbf{75.9 $\pm$ 1.4} 
			& \textbf{35.8 $\pm$ 1.3} 
			& \textbf{96.0 $\pm$ 0.6} 
			& \textbf{77.8 $\pm$ 1.2} \\
			\bottomrule
	\end{tabular}}
\end{table*}
From Table~\ref{tab:AD-CS-results}, it is evident that our proposed variants consistently outperform the baseline methods with notable improvements across all datasets.\footnote{Results for PGM are unavailable as the official implementation has not been released.}
For instance, on the Double MNIST dataset, our proposed variant EquiVSet-$\operatorname{R}$ achieves an improvement of approximately 83.9\% compared to the original framework. 

\textbf{Compound Selection in AI-aided Drug Discovery.}
In the drug discovery process, screening compounds with high bioactivity \citep{wallach2015atomnet, li2021structure, ji2023drugood}, structural diversity, and favorable ADME (absorption, distribution, metabolism, and excretion) properties \citep{gimeno2019light} is a critical step.
Virtual screening pipelines typically proceed through multiple sequential filtering stages. 
For example, researchers first select compounds with high bioactivity, then extract diverse subsets, and finally remove candidates with unfavorable ADME properties.
After these steps, a final subset of compounds is obtained.
Due to privacy protection policies in pharmaceutical practice, intermediate supervision signals are often difficult to access or prohibitively expensive to obtain. As a result, models are expected to learn the composite screening process in an end-to-end manner, which is well aligned with the characteristics of the OS oracle.
Nevertheless, full end-to-end prediction is often more complex and requires additional domain expertise. 
Here, we focus on the high-bioactivity selection stage, following the experimental setting of \citet{xie2024enhancing}.
In the experiments, we evaluate the methods on PDBBind \citep{liu2015pdb} and BindingDB \citep{liu2007bindingdb} datasets using the same high-bioactivity filter.
As shown in Table~\ref{tab:AD-CS-results}, our method consistently
improves upon both OS-oracle baselines across the two datasets. The
gains are particularly pronounced on the more challenging BindingDB
benchmark.

\section{Related Work}
Set function learning is commonly studied through function-value (FV) oracles, which learn to predict the utility of a given set \citep{wendler2019powerset, wendler2021learning, de2022neural}. Representative architectures include DeepSets \citep{zaheer2017deep}, Set Transformer \citep{lee2019set}, and neural models for submodular functions \citep{bilmes2017deep, dolhansky2016deep, balcan2018submodular}. However, these methods typically require dense utility supervision, limiting their practicality. Optimal subset (OS) oracles address this issue by learning latent set functions from optimal subsets under an energy-based variational inference framework \citep{ou2022learning}, with subsequent extensions incorporating contextual information, hierarchical attention, and implicit differentiation \citep{xie2024enhancing, xie2024horse, ozcan2025learning}. Nevertheless, existing OS methods use Monte Carlo sampling for gradient estimation, leading to inefficient training. Our work builds on this line by learning a relaxation of neural set functions, while also connecting OS oracle learning with energy minimization and weakly DR-submodular maximization.
Further discussion is provided in Appendix~\ref{appendix:Related Work}.

\section{Conclusion}
In this paper, we propose $\operatorname{ReSet}$, which learns a relaxation of neural set functions to advance optimal subset oracles.
The central idea of our approach is to replace Monte Carlo sampling over the fixed landscape with a learnable paradigm.
By constructing a surrogate of the evidence lower bound, our method enables efficient and stable gradient estimation across the continuous domain, thereby accelerating and improving the variational inference process.
We further establish approximation guarantees for the resulting solutions, characterize their convergence to stationary points, and show that the proposed method can adaptively learn the trade-off between expected utility and entropy.
As a general plug-in module, our method can be integrated into various optimal subset oracle frameworks, with extensive experiments demonstrating its effectiveness and scalability.


\bibliography{iclr2026_conference}

@book{stigler1990history,
	title={The history of statistics: The measurement of uncertainty before 1900},
	author={Stigler, Stephen M},
	year={1990},
	publisher={Harvard University Press}
}

@article{guiasu1985principle,
	title={The principle of maximum entropy},
	author={Guiasu, Silviu and Shenitzer, Abe},
	journal={The mathematical intelligencer},
	volume={7},
	number={1},
	pages={42--48},
	year={1985},
}

@article{jeffreys1946invariant,
	title={An invariant form for the prior probability in estimation problems},
	author={Jeffreys, Harold},
	journal={Proceedings of the Royal Society of London. Series A. Mathematical and Physical Sciences},
	volume={186},
	number={1007},
	pages={453--461},
	year={1946},
}

@inproceedings{ou2022learning,
	title={Learning neural set functions under the optimal subset oracle},
	author={Ou, Zijing and Xu, Tingyang and Su, Qinliang and Li, Yingzhen and Zhao, Peilin and Bian, Yatao},
	booktitle={Advances in Neural Information Processing Systems},
	pages={35021--35034},
	year={2022}
}

@inproceedings{xie2018cooperative,
	title={Cooperative learning of energy-based model and latent variable model via MCMC teaching},
	author={Xie, Jianwen and Lu, Yang and Gao, Ruiqi and Wu, Ying Nian},
	booktitle={Proceedings of the AAAI Conference on Artificial Intelligence},
	year={2018}
}

@article{he2025feat,
	title={FEAT: Free energy Estimators with Adaptive Transport},
	author={He, Jiajun and Du, Yuanqi and Vargas, Francisco and Wang, Yuanqing and Gomes, Carla P and Hern{\'a}ndez-Lobato, Jos{\'e} Miguel and Vanden-Eijnden, Eric},
	journal={arXiv preprint arXiv:2504.11516},
	year={2025}
}

@inproceedings{dagreou2024compute,
	title={How to compute Hessian-vector products?},
	author={Dagr{\'e}ou, Mathieu and Ablin, Pierre and Vaiter, Samuel and Moreau, Thomas},
	booktitle={The Third Blogpost Track at ICLR},
	year={2024}
}

@article{domke2013learning,
	title={Learning graphical model parameters with approximate marginal inference},
	author={Domke, Justin},
	journal={IEEE transactions on pattern analysis and machine intelligence},
	volume={35},
	number={10},
	pages={2454--2467},
	year={2013},
}

@inproceedings{calinescu2007maximizing,
	title={Maximizing a Submodular Set Function Subject to a Matroid Constraint},
	author={Calinescu, Gruia and Chekuri, Chandra and P{\'a}l, Martin and Vondr{\'a}k, Jan},
	booktitle={Integer Programming and Combinatorial Optimization},
	pages={182--196},
	year={2007},
}

@article{gladstone2025energy,
	title={Energy-Based Transformers are Scalable Learners and Thinkers},
	author={Gladstone, Alexi and Nanduru, Ganesh and Islam, Md Mofijul and Han, Peixuan and Ha, Hyeonjeong and Chadha, Aman and Du, Yilun and Ji, Heng and Li, Jundong and Iqbal, Tariq},
	journal={arXiv preprint arXiv:2507.02092},
	year={2025}
}

@article{du2019implicit,
	title={Implicit generation and modeling with energy based models},
	author={Du, Yilun and Mordatch, Igor},
	journal={Advances in neural information processing systems},
	volume={32},
	year={2019}
}

@article{zaheer2017deep,
	title={Deep sets},
	author={Zaheer, Manzil and Kottur, Satwik and Ravanbakhsh, Siamak and Poczos, Barnabas and Salakhutdinov, Russ R and Smola, Alexander J},
	journal={Advances in neural information processing systems},
	volume={30},
	year={2017}
}

@inproceedings{xie2024enhancing,
	title={Enhancing neural subset selection: Integrating background information into set representations},
	author={Xie, Binghui and Bian, Yatao and Chen, Yongqiang and Zhao, Peilin and Han, Bo and Meng, Wei and Cheng, James and others},
	booktitle={International Conference on Learning Representations},
	year={2024}
}

@article{bloem2020probabilistic,
	title={Probabilistic symmetries and invariant neural networks},
	author={Bloem-Reddy, Benjamin and Teh, Yee Whye},
	journal={Journal of Machine Learning Research},
	volume={21},
	number={90},
	pages={1--61},
	year={2020}
}

@inproceedings{vaswani2017attention,
	title={Attention is all you need},
	author={Vaswani, Ashish and Shazeer, Noam and Parmar, Niki and Uszkoreit, Jakob and Jones, Llion and Gomez, Aidan N and Kaiser, {\L}ukasz and Polosukhin, Illia},
	booktitle={Advances in neural information processing systems},
	volume={30},
	year={2017}
}

@inproceedings{tschiatschek2018differentiable,
	title={Differentiable submodular maximization},
	author={Tschiatschek, Sebastian and Sahin, Aytunc and Krause, Andreas},
	booktitle={Proceedings of the International Joint Conference on Artificial Intelligence},
	pages={2731--2738},
	year={2018}
}

@inproceedings{lee2019set,
	title={Set transformer: A framework for attention-based permutation-invariant neural networks},
	author={Lee, Juho and Lee, Yoonho and Kim, Jungtaek and Kosiorek, Adam and Choi, Seungjin and Teh, Yee Whye},
	booktitle={International conference on machine learning},
	pages={3744--3753},
	year={2019},
}

@inproceedings{xie2024horse,
	title={HORSE: hierarchical representation for large-scale neural subset selection},
	author={Xie, Binghui and Wang, Yixuan and Chen, Yongqiang and Zhou, Kaiwen and Li, Yu and Meng, Wei and Cheng, James},
	booktitle={Advances in neural information processing systems},
	pages={4852--4877},
	year={2024}
}

@article{pedregosa2011scikit,
	title={Scikit-learn: Machine learning in Python},
	author={Pedregosa, Fabian and Varoquaux, Ga{\"e}l and Gramfort, Alexandre and Michel, Vincent and Thirion, Bertrand and Grisel, Olivier and Blondel, Mathieu and Prettenhofer, Peter and Weiss, Ron and Dubourg, Vincent and others},
	journal={the Journal of machine Learning research},
	volume={12},
	pages={2825--2830},
	year={2011},
}

@article{gillenwater2014expectation,
	title={Expectation-maximization for learning determinantal point processes},
	author={Gillenwater, Jennifer A and Kulesza, Alex and Fox, Emily and Taskar, Ben},
	journal={Advances in Neural Information Processing Systems},
	volume={27},
	year={2014}
}

@article{lee2018pre,
	title={Pre-training of deep bidirectional transformers for language understanding},
	author={Lee, JDMCK and Toutanova, K},
	journal={arXiv preprint arXiv:1810.04805},
	volume={3},
	number={8},
	pages={4171--4186},
	year={2018}
}

@inproceedings{zhang2020set,
	title={Set Prediction without Imposing Structure as Conditional Density Estimation},
	author={Zhang, David W and Burghouts, Gertjan J and Snoek, Cees GM},
	booktitle={International Conference on Learning Representations},
	year={2020}
}

@article{gimeno2019light,
	title={The light and dark sides of virtual screening: what is there to know?},
	author={Gimeno, Aleix and Ojeda-Montes, Mar{\'\i}a Jos{\'e} and Tom{\'a}s-Hern{\'a}ndez, Sarah and Cereto-Massagu{\'e}, Adri{\`a} and Beltr{\'a}n-Deb{\'o}n, Ra{\'u}l and Mulero, Miquel and Pujadas, Gerard and Garcia-Vallv{\'e}, Santiago},
	journal={International journal of molecular sciences},
	volume={20},
	number={6},
	pages={1375},
	year={2019},
}

@inproceedings{rezatofighi2017deepsetnet,
	title={Deepsetnet: Predicting sets with deep neural networks},
	author={Rezatofighi, S Hamid and Bg, Vijay Kumar and Milan, Anton and Abbasnejad, Ehsan and Dick, Anthony and Reid, Ian},
	booktitle={International Conference on Computer Vision},
	pages={5257--5266},
	year={2017},
}

@inproceedings{coppolillo2024relevance,
	title={Relevance meets diversity: A user-centric framework for knowledge exploration through recommendations},
	author={Coppolillo, Erica and Manco, Giuseppe and Gionis, Aristides},
	booktitle={Proceedings of the ACM SIGKDD conference on knowledge discovery and data mining},
	pages={490--501},
	year={2024}
}

@article{balcan2018submodular,
	title={Submodular functions: Learnability, structure, and optimization},
	author={Balcan, Maria-Florina and Harvey, Nicholas JA},
	journal={SIAM Journal on Computing},
	volume={47},
	number={3},
	pages={703--754},
	year={2018},
}

@inproceedings{ozcan2025learning,
	title={Learning Set Functions with Implicit Differentiation},
	author={{\"O}zcan, G{\"o}zde and Shi, Chengzhi and Ioannidis, Stratis},
	booktitle={Proceedings of the AAAI Conference on Artificial Intelligence},
	pages={19777--19785},
	year={2025}
}

@article{blei2017variational,
	title={Variational inference: A review for statisticians},
	author={Blei, David M and Kucukelbir, Alp and McAuliffe, Jon D},
	journal={Journal of the American statistical Association},
	volume={112},
	number={518},
	pages={859--877},
	year={2017},
}

@inproceedings{karalias2022neural,
	title={Neural set function extensions: Learning with discrete functions in high dimensions},
	author={Karalias, Nikolaos and Robinson, Joshua and Loukas, Andreas and Jegelka, Stefanie},
	booktitle={Advances in Neural Information Processing Systems},
	volume={35},
	pages={15338--15352},
	year={2022}
}

@inproceedings{hassani2017gradient,
	title={Gradient methods for submodular maximization},
	author={Hassani, Hamed and Soltanolkotabi, Mahdi and Karbasi, Amin},
	booktitle={Advances in Neural Information Processing Systems},
	volume={30},
	year={2017}
}

@article{sun2019multi,
	title={Multi-digit MNIST for few-shot learning},
	author={Sun, Shao-Hua},
	journal={GitHub repository},
	year={2019}
}

@inproceedings{liu2015deep,
	title={Deep learning face attributes in the wild},
	author={Liu, Ziwei and Luo, Ping and Wang, Xiaogang and Tang, Xiaoou},
	booktitle={Proceedings of the IEEE international conference on computer vision},
	pages={3730--3738},
	year={2015}
}

@article{xiao2017fashion,
	title={Fashion-mnist: a novel image dataset for benchmarking machine learning algorithms},
	author={Xiao, Han and Rasul, Kashif and Vollgraf, Roland},
	journal={arXiv preprint arXiv:1708.07747},
	year={2017}
}

@article{krizhevsky2009learning,
	title={Learning multiple layers of features from tiny images},
	author={ Krizhevsky, A.  and  Hinton, G. },
	journal={Handbook of Systemic Autoimmune Diseases},
	volume={1},
	number={4},
	year={2009},
}

@article{wallach2015atomnet,
	title={AtomNet: a deep convolutional neural network for bioactivity prediction in structure-based drug discovery},
	author={Wallach, Izhar and Dzamba, Michael and Heifets, Abraham},
	journal={arXiv preprint arXiv:1510.02855},
	year={2015}
}

@inproceedings{li2021structure,
	title={Structure-aware interactive graph neural networks for the prediction of protein-ligand binding affinity},
	author={Li, Shuangli and Zhou, Jingbo and Xu, Tong and Huang, Liang and Wang, Fan and Xiong, Haoyi and Huang, Weili and Dou, Dejing and Xiong, Hui},
	booktitle={Proceedings of the 27th ACM SIGKDD conference on knowledge discovery \& data mining},
	pages={975--985},
	year={2021}
}

@inproceedings{ji2023drugood,
	title={Drugood: Out-of-distribution dataset curator and benchmark for ai-aided drug discovery--a focus on affinity prediction problems with noise annotations},
	author={Ji, Yuanfeng and Zhang, Lu and Wu, Jiaxiang and Wu, Bingzhe and Li, Lanqing and Huang, Long-Kai and Xu, Tingyang and Rong, Yu and Ren, Jie and Xue, Ding and others},
	booktitle={Proceedings of the AAAI Conference on Artificial Intelligence},
	pages={8023--8031},
	year={2023}
}

@article{liu2015pdb,
	title={PDB-wide collection of binding data: current status of the PDBbind database},
	author={Liu, Zhihai and Li, Yan and Han, Li and Li, Jie and Liu, Jie and Zhao, Zhixiong and Nie, Wei and Liu, Yuchen and Wang, Renxiao},
	journal={Bioinformatics},
	volume={31},
	number={3},
	pages={405--412},
	year={2015},
}

@article{liu2007bindingdb,
	title={BindingDB: a web-accessible database of experimentally determined protein--ligand binding affinities},
	author={Liu, Tiqing and Lin, Yuhmei and Wen, Xin and Jorissen, Robert N and Gilson, Michael K},
	journal={Nucleic acids research},
	volume={35},
	number={suppl\_1},
	pages={D198--D201},
	year={2007},
}

@article{lecun2006tutorial,
	title={A tutorial on energy-based learning},
	author={LeCun, Yann and Chopra, Sumit and Hadsell, Raia and Ranzato, M and Huang, Fujie and others},
	journal={Predicting structured data},
	volume={1},
	number={0},
	year={2006}
}

@inproceedings{wendler2021learning,
	title={Learning set functions that are sparse in non-orthogonal Fourier bases},
	author={Wendler, Chris and Amrollahi, Andisheh and Seifert, Bastian and Krause, Andreas and P{\"u}schel, Markus},
	booktitle={Proceedings of the AAAI Conference on Artificial Intelligence},
	pages={10283--10292},
	year={2021}
}

@inproceedings{de2022neural,
	title={Neural estimation of submodular functions with applications to differentiable subset selection},
	author={De, Abir and Chakrabarti, Soumen},
	booktitle={Advances in Neural Information Processing Systems},
	pages={19537--19552},
	year={2022}
}

@article{bilmes2017deep,
	title={Deep submodular functions},
	author={Bilmes, Jeffrey and Bai, Wenruo},
	journal={arXiv preprint arXiv:1701.08939},
	year={2017}
}

@inproceedings{dolhansky2016deep,
	title={Deep submodular functions: Definitions and learning},
	author={Dolhansky, Brian W and Bilmes, Jeff A},
	booktitle={Advances in Neural Information Processing Systems},
	volume={29},
	year={2016}
}

@inproceedings{wendler2019powerset,
	title={Powerset convolutional neural networks},
	author={Wendler, Chris and P{\"u}schel, Markus and Alistarh, Dan},
	booktitle={Advances in Neural Information Processing Systems},
	year={2019}
}

@inproceedings{du2022learning,
	title={Learning iterative reasoning through energy minimization},
	author={Du, Yilun and Li, Shuang and Tenenbaum, Joshua and Mordatch, Igor},
	booktitle={International Conference on Machine Learning},
	pages={5570--5582},
	year={2022},
}

@inproceedings{du2024learning,
	title={Learning Iterative Reasoning through Energy Diffusion},
	author={Du, Yilun and Mao, Jiayuan and Tenenbaum, Joshua B},
	booktitle={International Conference on Machine Learning},
	pages={11764--11776},
	year={2024},
}

@article{wang2025equilibrium,
	title={Equilibrium Matching: Generative Modeling with Implicit Energy-Based Models},
	author={Wang, Runqian and Du, Yilun},
	journal={arXiv preprint arXiv:2510.02300},
	year={2025}
}

@inproceedings{carreira2005contrastive,
	title={On contrastive divergence learning},
	author={Carreira-Perpinan, Miguel A and Hinton, Geoffrey},
	booktitle={International workshop on artificial intelligence and statistics},
	pages={33--40},
	year={2005},
}

@article{hyvarinen2005estimation,
	title={Estimation of non-normalized statistical models by score matching.},
	author={Hyv{\"a}rinen, Aapo and Dayan, Peter},
	journal={Journal of Machine Learning Research},
	volume={6},
	number={4},
	year={2005}
}

@article{vincent2011connection,
	title={A connection between score matching and denoising autoencoders},
	author={Vincent, Pascal},
	journal={Neural computation},
	volume={23},
	number={7},
	pages={1661--1674},
	year={2011},
}

@inproceedings{song2020sliced,
	title={Sliced score matching: A scalable approach to density and score estimation},
	author={Song, Yang and Garg, Sahaj and Shi, Jiaxin and Ermon, Stefano},
	booktitle={Uncertainty in artificial intelligence},
	pages={574--584},
	year={2020},
}

@inproceedings{tieleman2008training,
	title={Training restricted Boltzmann machines using approximations to the likelihood gradient},
	author={Tieleman, Tijmen},
	booktitle={Proceedings of the 25th international conference on Machine learning},
	pages={1064--1071},
	year={2008}
}

@misc{kingma2017adammethodstochasticoptimization,
	title={Adam: A Method for Stochastic Optimization}, 
	author={Diederik P. Kingma and Jimmy Ba},
	year={2017},
	eprint={1412.6980},
	archivePrefix={arXiv},
}

@article{loshchilov2016sgdr,
	title={Sgdr: Stochastic gradient descent with warm restarts},
	author={Loshchilov, Ilya and Hutter, Frank},
	journal={arXiv preprint arXiv:1608.03983},
	year={2016}
}

@article{gomes2017atomic,
	title={Atomic convolutional networks for predicting protein-ligand binding affinity},
	author={Gomes, Joseph and Ramsundar, Bharath and Feinberg, Evan N and Pande, Vijay S},
	journal={arXiv preprint arXiv:1703.10603},
	year={2017}
}

@article{ozturk2018deepdta,
	title={DeepDTA: deep drug--target binding affinity prediction},
	author={{\"O}zt{\"u}rk, Hakime and {\"O}zg{\"u}r, Arzucan and Ozkirimli, Elif},
	journal={Bioinformatics},
	volume={34},
	number={17},
	pages={i821--i829},
	year={2018},
}

@inproceedings{djolonga2017differentiable,
	author = {Djolonga, Josip and Krause, Andreas},
	booktitle = {Advances in Neural Information Processing Systems},
	title = {Differentiable Learning of Submodular Models},
	year = {2017}
}

@misc{bhatt2024deepsubmodularperipteralnetworks,
	title={Deep Submodular Peripteral Networks}, 
	author={Gantavya Bhatt and Arnav Das and Jeff Bilmes},
	year={2024},
	eprint={2403.08199},
	archivePrefix={arXiv},
}

@misc{kothawade2020deepsubmodularnetworksextractive,
	title={Deep Submodular Networks for Extractive Data Summarization}, 
	author={Suraj Kothawade and Jiten Girdhar and Chandrashekhar Lavania and Rishabh Iyer},
	year={2020},
	eprint={2010.08593},
	archivePrefix={arXiv},
}

@misc{bloemreddy2020probabilisticsymmetriesinvariantneural,
	title={Probabilistic symmetries and invariant neural networks}, 
	author={Benjamin Bloem-Reddy and Yee Whye Teh},
	year={2020},
	eprint={1901.06082},
	archivePrefix={arXiv}, 
}

@misc{maron2020learningsetssymmetricelements,
	title={On Learning Sets of Symmetric Elements}, 
	author={Haggai Maron and Or Litany and Gal Chechik and Ethan Fetaya},
	year={2020},
	eprint={2002.08599},
	archivePrefix={arXiv},
}

@misc{pedramfar2024unifiedapproachmaximizingcontinuous,
	title={A Unified Approach for Maximizing Continuous DR-submodular Functions}, 
	author={Mohammad Pedramfar and Christopher John Quinn and Vaneet Aggarwal},
	year={2024},
	eprint={2305.16671},
	archivePrefix={arXiv},
}
\bibliographystyle{iclr2026_conference}

\clearpage

\appendix

\begin{center}
	\LARGE
	\textbf{Appendix for ``Advancing Optimal Subset Oracle via Learning Relaxation of Neural Set Functions''}
\end{center}

\etocdepthtag.toc{mtappendix}
\etocsettagdepth{mtchapter}{none}
\etocsettagdepth{mtappendix}{subsection}

{\small \tableofcontents}

\section{Proofs and Derivations}

\subsection{Proof of Theorem \ref{theorem1}} \label{appendix:Proof of Theorem1}
{
	\renewcommand{\theoremfootnote}[1]{}
	\MainTheorem*
}
\begin{proof}
	Recall that we consider the continuous relaxation
	\begin{equation}
		\max_{\bm{\psi}\in\mathcal{X}}g_\theta(\bm{\psi}),
		\qquad
		\mathcal{X}=[0,1]^{|V|}.
		\nonumber
	\end{equation}
	Here, $g_\theta(\cdot)$ is a differentiable surrogate of the
	multilinear extension induced by the discrete set function
	$F_\theta(\cdot)$. Under the assumptions of the theorem,
	$g_\theta:[0,1]^{|V|}\to\mathbb{R}_{+}$ is $L$-smooth and
	$\gamma$-weakly DR-submodular for some $\gamma\in(0,1]$.
	Specifically, for any $\bm{x},\bm{y}\in[0,1]^{|V|}$ satisfying
	$\bm{x}\le\bm{y}$,
	\begin{equation}
		\nabla g_\theta(\bm{x})
		\ge
		\gamma\nabla g_\theta(\bm{y}),
		\label{eq:weak_dr_definition}
	\end{equation}
	where all vector inequalities are understood coordinate-wise. Let
	\[
	\bm{\psi}_{\mathrm{OPT}}
	\in
	\arg\max_{\bm{\psi}\in\mathcal{X}}
	g_\theta(\bm{\psi})
	\]
	denote a global maximizer.
	
	For brevity, let $d\triangleq |V|$, $\eta\triangleq1/T$, and define
	\[
	G_t
	\triangleq
	g_\theta\left(\bm{\psi}^{(t)}\right),
	\qquad
	G_{\mathrm{OPT}}
	\triangleq
	g_\theta\left(\bm{\psi}_{\mathrm{OPT}}\right).
	\]
	
	We first verify that the iterates generated by
	Eq.~(\ref{gradient descent}) remain feasible. The claim holds at
	$t=0$ because $\bm{\psi}^{(0)}=\bm{0}$. Moreover, since
	\[
	\bm{0}
	\le
	\bm{v}^{(t)}
	\le
	\bm{1}-\bm{\psi}^{(t)}
	\]
	and $\eta\in(0,1]$, we have
	\[
	\bm{0}
	\le
	\bm{\psi}^{(t+1)}
	=
	\bm{\psi}^{(t)}+\eta\bm{v}^{(t)}
	\le
	(1-\eta)\bm{\psi}^{(t)}+\eta\bm{1}
	\le
	\bm{1}.
	\]
	Therefore, $\bm{\psi}^{(t)}\in[0,1]^d$ for every $t$.
	
	We next establish two consequences of $\gamma$-weak
	DR-submodularity that will be used below. For any
	$\bm{x},\bm{y}\in[0,1]^d$ satisfying $\bm{x}\le\bm{y}$,
	Eq.~(\ref{eq:weak_dr_definition}) implies
	\begin{equation}
		\left\langle
		\nabla g_\theta(\bm{x}),
		\bm{y}-\bm{x}
		\right\rangle
		\ge
		\gamma
		\left(
		g_\theta(\bm{y})-g_\theta(\bm{x})
		\right).
		\label{eq:weak_dr_ascent}
	\end{equation}
	Indeed, letting
	$\bm{z}\triangleq\bm{y}-\bm{x}\ge\bm{0}$ gives
	\begin{align}
		g_\theta(\bm{y})-g_\theta(\bm{x})
		&=
		\int_0^1
		\left\langle
		\nabla g_\theta(\bm{x}+s\bm{z}),
		\bm{z}
		\right\rangle ds \nonumber\\
		&\le
		\frac{1}{\gamma}
		\left\langle
		\nabla g_\theta(\bm{x}),
		\bm{z}
		\right\rangle,
	\end{align}
	where the inequality follows from
	$\bm{x}\le\bm{x}+s\bm{z}$,
	$\bm{z}\ge\bm{0}$, and
	Eq.~(\ref{eq:weak_dr_definition}).
	
	In addition, nonnegativity and $\gamma$-weak DR-submodularity
	imply that, for any $\bm{x},\bm{y}\in[0,1]^d$,
	\begin{equation}
		g_\theta(\bm{x}\vee\bm{y})
		\ge
		\left(1-\gamma\|\bm{x}\|_\infty\right)
		g_\theta(\bm{y}).
		\label{eq:join_lower_bound}
	\end{equation}
	To establish this inequality, let
	$r\triangleq\|\bm{x}\|_\infty$. The result is immediate when
	$r=0$. Otherwise, define
	\[
	\bm{z}
	\triangleq
	\bm{x}\vee\bm{y}-\bm{y}
	\ge
	\bm{0}.
	\]
	For every $s\in[0,1/r]$, both
	$\bm{y}+rs\bm{z}$ and $\bm{y}+s\bm{z}$ belong to $[0,1]^d$,
	and
	$\bm{y}+rs\bm{z}\le\bm{y}+s\bm{z}$. It follows that
	\begin{align}
		g_\theta(\bm{x}\vee\bm{y})-g_\theta(\bm{y})
		&=
		r\int_0^{1/r}
		\left\langle
		\nabla g_\theta(\bm{y}+rs\bm{z}),
		\bm{z}
		\right\rangle ds \nonumber\\
		&\ge
		r\gamma\int_0^{1/r}
		\left\langle
		\nabla g_\theta(\bm{y}+s\bm{z}),
		\bm{z}
		\right\rangle ds \nonumber\\
		&=
		r\gamma
		\left[
		g_\theta\left(\bm{y}+\frac{\bm{z}}{r}\right)
		-
		g_\theta(\bm{y})
		\right] \nonumber\\
		&\ge
		-r\gamma g_\theta(\bm{y}),
	\end{align}
	where the last inequality follows from
	$g_\theta(\cdot)\ge0$. Rearranging proves
	Eq.~(\ref{eq:join_lower_bound}).
	
	We now apply these inequalities to bound the progress at each
	iteration. Consider the comparison direction
	\begin{equation}
		\bm{d}^{(t)}
		\triangleq
		\bm{\psi}_{\mathrm{OPT}}
		\vee
		\bm{\psi}^{(t)}
		-
		\bm{\psi}^{(t)}
		=
		\left(
		\bm{\psi}_{\mathrm{OPT}}
		-
		\bm{\psi}^{(t)}
		\right)_{+}.
		\label{eq:comparison_direction}
	\end{equation}
	Since
	\[
	\bm{0}
	\le
	\bm{d}^{(t)}
	\le
	\bm{1}-\bm{\psi}^{(t)},
	\]
	the vector $\bm{d}^{(t)}$ is feasible for the linear optimization
	problem in Eq.~(\ref{gradient descent}). The optimality of
	$\bm{v}^{(t)}$, together with
	Eqs.~(\ref{eq:weak_dr_ascent}) and
	(\ref{eq:join_lower_bound}), therefore yields
	\begin{align}
		\left\langle
		\nabla g_\theta(\bm{\psi}^{(t)}),
		\bm{v}^{(t)}
		\right\rangle
		&\ge
		\left\langle
		\nabla g_\theta(\bm{\psi}^{(t)}),
		\bm{d}^{(t)}
		\right\rangle \nonumber\\
		&\ge
		\gamma
		\left[
		g_\theta\left(
		\bm{\psi}^{(t)}
		\vee
		\bm{\psi}_{\mathrm{OPT}}
		\right)
		-
		G_t
		\right] \nonumber\\
		&\ge
		\gamma
		\left[
		\left(
		1-\gamma
		\|\bm{\psi}^{(t)}\|_\infty
		\right)
		G_{\mathrm{OPT}}
		-
		G_t
		\right].
		\label{eq:greedy_direction_bound}
	\end{align}
	
	For every coordinate $i$, the direction constraint and
	$\gamma\le1$ imply
	\[
	v_i^{(t)}
	\le
	1-\psi_i^{(t)}
	\le
	1-\gamma\psi_i^{(t)}.
	\]
	Consequently,
	\begin{align}
		1-\gamma\psi_i^{(t+1)}
		&=
		1-\gamma\psi_i^{(t)}
		-
		\eta\gamma v_i^{(t)} \nonumber\\
		&\ge
		(1-\eta\gamma)
		\left(
		1-\gamma\psi_i^{(t)}
		\right).
	\end{align}
	Since $\bm{\psi}^{(0)}=\bm{0}$, induction gives
	\[
	1-\gamma\psi_i^{(t)}
	\ge
	(1-\eta\gamma)^t
	\]
	for every coordinate $i$. Hence,
	\begin{equation}
		1-\gamma\|\bm{\psi}^{(t)}\|_\infty
		\ge
		(1-\eta\gamma)^t.
		\label{eq:slack_bound}
	\end{equation}
	Combining Eqs.~(\ref{eq:greedy_direction_bound}) and
	(\ref{eq:slack_bound}), we obtain
	\begin{equation}
		\left\langle
		\nabla g_\theta(\bm{\psi}^{(t)}),
		\bm{v}^{(t)}
		\right\rangle
		\ge
		\gamma
		\left[
		(1-\eta\gamma)^tG_{\mathrm{OPT}}
		-
		G_t
		\right].
		\label{eq:direction_final}
	\end{equation}
	
	By $L$-smoothness and Eq.~(\ref{eq:direction_final}),
	\begin{align}
		G_{t+1}
		&\ge
		G_t
		+
		\eta
		\left\langle
		\nabla g_\theta(\bm{\psi}^{(t)}),
		\bm{v}^{(t)}
		\right\rangle
		-
		\frac{L\eta^2}{2}
		\|\bm{v}^{(t)}\|_2^2 \nonumber\\
		&\ge
		(1-\eta\gamma)G_t
		+
		\eta\gamma(1-\eta\gamma)^tG_{\mathrm{OPT}}
		-
		\frac{L\eta^2d}{2},
		\label{eq:one_step_recurrence}
	\end{align}
	where the last inequality also uses
	$\|\bm{v}^{(t)}\|_2^2\le d$.
	
	Letting $a\triangleq1-\eta\gamma$ and recursively applying
	Eq.~(\ref{eq:one_step_recurrence}) yields
	\begin{align}
		G_T
		&\ge
		a^TG_0
		+
		T\eta\gamma a^{T-1}G_{\mathrm{OPT}}
		-
		\frac{L\eta^2d}{2}
		\sum_{t=0}^{T-1}a^{T-1-t} \nonumber\\
		&\ge
		T\eta\gamma a^{T-1}G_{\mathrm{OPT}}
		-
		\frac{TL\eta^2d}{2},
		\label{eq:unrolled_recurrence}
	\end{align}
	where we used $G_0\ge0$, $a\in[0,1]$, and
	$\sum_{t=0}^{T-1}a^{T-1-t}\le T$.
	Substituting $\eta=1/T$ gives
	\begin{equation}
		G_T
		\ge
		\gamma
		\left(1-\frac{\gamma}{T}\right)^{T-1}
		G_{\mathrm{OPT}}
		-
		\frac{Ld}{2T}.
		\label{eq:finite_iteration_result}
	\end{equation}
	
	For $\gamma\in(0,1]$ and $T\ge2$, applying
	$\log(1-u)\ge-u/(1-u)$ for $u\in[0,1)$ gives
	\[
	(T-1)
	\log\left(1-\frac{\gamma}{T}\right)
	\ge
	-\frac{\gamma(T-1)}{T-\gamma}
	\ge
	-\gamma,
	\]
	and hence
	\[
	\left(1-\frac{\gamma}{T}\right)^{T-1}
	\ge
	e^{-\gamma}.
	\]
	Substituting this inequality into
	Eq.~(\ref{eq:finite_iteration_result}) and recalling that
	$d=|V|$, we conclude that
	\begin{equation}
		g_\theta\left(\bm{\psi}^{(T)}\right)
		\ge
		\gamma e^{-\gamma}
		g_\theta\left(\bm{\psi}_{\mathrm{OPT}}\right)
		-
		\frac{L|V|}{2T}.
		\label{eq:final_approximation}
	\end{equation}
	Thus, choosing
	$T\ge L|V|/(2\varepsilon)$ gives the stated
	$\gamma e^{-\gamma}$ approximation guarantee up to an additive
	error $\varepsilon$.
	
	Having established the objective-value guarantee, we now show
	that the optimization process also enjoys a standard convergence
	guarantee when instantiated with projected gradient ascent.
	Specifically, consider
	\begin{equation}
		\bm{\psi}^{(k+1)}
		=
		\Pi_{\mathcal{X}}
		\left(
		\bm{\psi}^{(k)}
		+
		\alpha\nabla g_\theta(\bm{\psi}^{(k)})
		\right),
		\qquad
		0<\alpha<\frac{1}{L},
		\label{eq:projected_gradient_update}
	\end{equation}
	where $\Pi_{\mathcal{X}}(\cdot)$ denotes the Euclidean projection
	onto $\mathcal{X}$. The first-order optimality condition of the
	projection gives
	\begin{equation}
		\left\langle
		\bm{\psi}^{(k)}
		+
		\alpha\nabla g_\theta(\bm{\psi}^{(k)})
		-
		\bm{\psi}^{(k+1)},
		\bm{\psi}
		-
		\bm{\psi}^{(k+1)}
		\right\rangle
		\le0,
		\qquad
		\forall\,\bm{\psi}\in\mathcal{X}.
		\label{eq:projection_optimality}
	\end{equation}
	Setting $\bm{\psi}=\bm{\psi}^{(k)}$ in
	Eq.~(\ref{eq:projection_optimality}) yields
	\begin{equation}
		\left\langle
		\nabla g_\theta(\bm{\psi}^{(k)}),
		\bm{\psi}^{(k+1)}-\bm{\psi}^{(k)}
		\right\rangle
		\ge
		\frac{1}{\alpha}
		\left\|
		\bm{\psi}^{(k+1)}-\bm{\psi}^{(k)}
		\right\|_2^2.
		\label{eq:projected_direction_bound}
	\end{equation}
	
	Combining Eq.~(\ref{eq:projected_direction_bound}) with
	$L$-smoothness gives
	\begin{align}
		g_\theta(\bm{\psi}^{(k+1)})
		&\ge
		g_\theta(\bm{\psi}^{(k)})
		+
		\left\langle
		\nabla g_\theta(\bm{\psi}^{(k)}),
		\bm{\psi}^{(k+1)}-\bm{\psi}^{(k)}
		\right\rangle \nonumber\\
		&\quad
		-
		\frac{L}{2}
		\left\|
		\bm{\psi}^{(k+1)}-\bm{\psi}^{(k)}
		\right\|_2^2 \nonumber\\
		&\ge
		g_\theta(\bm{\psi}^{(k)})
		+
		\left(
		\frac{1}{\alpha}-\frac{L}{2}
		\right)
		\left\|
		\bm{\psi}^{(k+1)}-\bm{\psi}^{(k)}
		\right\|_2^2 \nonumber\\
		&\ge
		g_\theta(\bm{\psi}^{(k)})
		+
		\frac{1}{2\alpha}
		\left\|
		\bm{\psi}^{(k+1)}-\bm{\psi}^{(k)}
		\right\|_2^2,
		\label{eq:smooth_ascent}
	\end{align}
	where the last inequality follows from $\alpha<1/L$.
	Thus, the objective values are monotonically nondecreasing.
	Since $g_\theta$ is continuous and $\mathcal{X}$ is compact,
	$g_\theta$ is bounded from above on $\mathcal{X}$. Summing
	Eq.~(\ref{eq:smooth_ascent}) over $k$ therefore gives
	\begin{equation}
		\sum_{k=0}^{\infty}
		\left\|
		\bm{\psi}^{(k+1)}-\bm{\psi}^{(k)}
		\right\|_2^2
		<\infty,
	\end{equation}
	and consequently
	\begin{equation}
		\lim_{k\to\infty}
		\left\|
		\bm{\psi}^{(k+1)}-\bm{\psi}^{(k)}
		\right\|_2
		=0.
		\label{eq:successive_difference}
	\end{equation}
	
	To express this result in terms of first-order stationarity, define
	the projected gradient mapping
	\begin{equation}
		\mathcal{G}_{\alpha}(\bm{\psi})
		\triangleq
		\frac{1}{\alpha}
		\left[
		\Pi_{\mathcal{X}}
		\left(
		\bm{\psi}
		+
		\alpha\nabla g_\theta(\bm{\psi})
		\right)
		-
		\bm{\psi}
		\right].
		\label{eq:projected_gradient_mapping}
	\end{equation}
	By Eqs.~(\ref{eq:projected_gradient_update}) and
	(\ref{eq:successive_difference}),
	\begin{equation}
		\lim_{k\to\infty}
		\left\|
		\mathcal{G}_{\alpha}(\bm{\psi}^{(k)})
		\right\|_2
		=0.
		\label{eq:stationarity_convergence}
	\end{equation}
	Because $\mathcal{X}$ is compact, the sequence
	$\{\bm{\psi}^{(k)}\}$ admits at least one accumulation point.
	Let $\bar{\bm{\psi}}$ be any such point. The continuity of the
	projected gradient mapping and
	Eq.~(\ref{eq:stationarity_convergence}) imply
	$\mathcal{G}_{\alpha}(\bar{\bm{\psi}})=\bm{0}$.
	Equivalently,
	\begin{equation}
		\left\langle
		\nabla g_\theta(\bar{\bm{\psi}}),
		\bm{\psi}-\bar{\bm{\psi}}
		\right\rangle
		\le0,
		\qquad
		\forall\,\bm{\psi}\in\mathcal{X}.
		\label{eq:stationary_condition}
	\end{equation}
	Therefore, projected gradient ascent converges to first-order
	stationarity in the sense that its projected gradient mapping
	vanishes and every accumulation point is stationary. Together
	with Eq.~(\ref{eq:final_approximation}), this establishes both
	the approximation and convergence guarantees stated in the
	theorem.
\end{proof}

\subsection{Proof of Corollary \ref{corollary1}} \label{appendix:Proof of Corollary1}
\ELBOCorollary*
\begin{proof}
	From the variational characterization of the Helmholtz free energy established
	above, when $k_B T = 1$, the variational free energy admits the form
	\begin{equation}\label{eq:free_energy_unitT}
		\mathcal{F}
		=
		\langle E \rangle_q
		- \mathcal{H}_q
		=
		\mathbb{E}_{q(z|x)}\left[E_\theta(x,z)\right]
		+ \mathbb{E}_{q(z|x)}\left[\log q(z|x)\right].
	\end{equation}
	Recall that the evidence lower bound (ELBO) is defined as
	\begin{equation}\label{eq:elbo_def}
		\mathrm{ELBO}
		=
		\mathbb{E}_{q(z|x)}\left[\log p_\theta(x,z)\right]
		-
		\mathbb{E}_{q(z|x)}\left[\log q(z|x)\right].
	\end{equation}
	Since the joint distribution $p_\theta(x,z)$ is defined through the energy
	function as
	\(
	\log p_\theta(x,z) = - E_\theta(x,z) - \log Z_\theta,
	\)
	substituting into Eq.~(\ref{eq:elbo_def}) yields
	\begin{equation}
		\mathrm{ELBO}
		=
		- \mathbb{E}_{q(z|x)}\left[E_\theta(x,z)\right]
		- \log Z_\theta
		- \mathbb{E}_{q(z|x)}\left[\log q(z|x)\right].
		\nonumber
	\end{equation}
	Noting that the Helmholtz free energy satisfies
	$\mathcal{F} = - \log Z_\theta$ when $k_B T = 1$, we obtain
	\begin{equation}
		\mathrm{ELBO}
		=
		- \mathcal{F}
		-
		\left(
		\mathbb{E}_{q(z|x)}\left[E_\theta(x,z)\right]
		+ \mathbb{E}_{q(z|x)}\left[\log q(z|x)\right]
		\right).
		\nonumber
	\end{equation}
	Rearranging terms and using Eq.~(\ref{eq:free_energy_unitT}) immediately gives
	\begin{equation}
		\mathcal{F} = - \mathrm{ELBO}.
		\nonumber
	\end{equation}
	In our setting, the set utility function naturally plays the role of negative
	energy, \emph{i.e.}, $F_\theta(\cdot) = - E_\theta(\cdot)$.
	Under the ground-set formulation, the expected energy term in (\ref{reformulated Helmholtz free energy}) therefore corresponds to the
	negative multilinear extension $- f_{\mathrm{mt}}^{F_\theta}(\bm{\psi})$.
	As a consequence, maximizing the ELBO in Eq.~(\ref{ELBO}) is equivalent to
	minimizing the variational free energy:
	\begin{equation}
		\max_{\bm{\psi}} \mathrm{ELBO}
		\iff
		\min_{\bm{\psi}}
		\left(
		f_{\mathrm{mt}}^{E_\theta}(\bm{\psi})
		- \mathbb{H}(q(\bm{\psi}))
		\right)
		=
		\min_{\bm{\psi}} \mathcal{F}_\theta(\bm{\psi}).
	\end{equation}
\end{proof}

\subsection{Derivation of the Discrete Approximation Guarantee}
\label{appendix:Derivation of the Discrete Approximation Guarantee}
In the idealized regime of sufficient training, we assume that the learned surrogate and the original multilinear objective induce aligned normalized assessments of solution quality, such that
\begin{equation}
	\left|
	\frac{
		f_{\mathrm{mt}}^{F_\theta}(\bm{\psi}^{*})
	}{
		F_\theta(S_{\mathrm{OPT}})
	}
	-
	\frac{
		g_\theta(\bm{\psi}^{*})
	}{
		g_\theta(\bm{\psi}_{g}^{\mathrm{OPT}})
	}
	\right|
	\le
	\varepsilon_{\mathrm{sol}},
	\label{eq:solution_quality_consistency}
\end{equation}
where $\varepsilon_{\mathrm{sol}} \ge 0$ denotes the residual solution-level discrepancy, which is expected to decrease as training progresses.

By Theorem~\ref{theorem1}, we have
\[
\frac{
	g_\theta(\bm{\psi}^{*})
}{
	g_\theta(\bm{\psi}_{g}^{\mathrm{OPT}})
}
\ge
\gamma e^{-\gamma}.
\]
Combining this result with the lower bound implied by Eq.~(\ref{eq:solution_quality_consistency}) yields
\begin{equation}
	f_{\mathrm{mt}}^{F_\theta}(\bm{\psi}^{*})
	\ge
	\left(
	\gamma e^{-\gamma}
	-
	\varepsilon_{\mathrm{sol}}
	\right)
	F_\theta(S_{\mathrm{OPT}}).
	\label{eq:multilinear_bound_with_solution_error}
\end{equation}

We next obtain a discrete solution by rounding
$S^* \sim \mathcal{R}(\bm{\psi}^*)$.
By the definition of the multilinear extension,
\[
\mathbb{E}_{S^* \sim \mathcal{R}(\bm{\psi}^*)}
\left[
F_\theta(S^*)
\right]
=
f_{\mathrm{mt}}^{F_\theta}(\bm{\psi}^*).
\]
Combining this identity with Eq.~(\ref{eq:multilinear_bound_with_solution_error}) gives
\[
\mathbb{E}_{S^* \sim \mathcal{R}(\bm{\psi}^*)}
\left[
F_\theta(S^*)
\right]
\ge
\left(
\gamma e^{-\gamma}
-
\varepsilon_{\mathrm{sol}}
\right)
F_\theta(S_{\mathrm{OPT}}).
\]
Therefore, as $\varepsilon_{\mathrm{sol}}$ becomes negligible under sufficient training, the resulting discrete solution admits the following approximate lower-bound guarantee:
\begin{equation}
	\mathbb{E}_{S^* \sim \mathcal{R}(\bm{\psi}^*)}
	\left[
	F_\theta(S^*)
	\right]
	\gtrsim
	\gamma e^{-\gamma}
	F_\theta(S_{\mathrm{OPT}}).
\end{equation}

\section{Details of Differentiable Mean Field Variational Inference in OS Oracle}

\subsection{Derivations of the Fixed Point Iteration} \label{appendix:Derivations of the Fixed Point Iteration}
Following \citet{ou2022learning}, we present a detailed derivation of the fixed-point iteration (FPI) associated with Eq.~(\ref{FPI}):
\begin{align} \label{app-fpi-formula}
	\psi_i^{(k+1)} \leftarrow \bigl(1 + \exp\bigl(- \nabla_{\psi_i^{(k)}} f_{\mathrm{mt}}^{F_\theta}(\boldsymbol{\psi}^{(k)})\bigr)\bigr)^{-1}.
\end{align}
We begin by recalling that the objective is to maximize the evidence lower bound (ELBO),
\begin{align}
	\max_{\boldsymbol{\psi} \in [0,1]^{|V|}}
	\quad
	\underbrace{
		\sum_{S \subseteq V} F_\theta(S)
		\prod_{i \in S} \psi_i
		\prod_{i \notin S} (1 - \psi_i)
	}_{f_{\mathrm{mt}}^{F_\theta}(\boldsymbol{\psi})}
	-
	\underbrace{
		\sum_{i=1}^{|V|}
		\bigl[
		\psi_i \log \psi_i
		+
		(1-\psi_i)\log(1-\psi_i)
		\bigr]
	}_{-\mathbb{H}(q(S;\boldsymbol{\psi}))}.
\end{align}
The update in Eq.~(\ref{app-fpi-formula}) is obtained by enforcing stationarity of the ELBO with respect to coordinate $\psi_i$. Specifically, setting the partial derivative to zero yields
\begin{align}
	\nabla{\psi_i} f_{\mathrm{mt}}^{F_\theta}(\boldsymbol{\psi})
	+ \nabla_{\psi_i} \mathbb{H}(q(S;\boldsymbol{\psi}))
	= \nabla_{\psi_i} f_{\mathrm{mt}}^{F_\theta}(\boldsymbol{\psi})
	+ \log\frac{1-\psi_i}{\psi_i}
	= 0. \nonumber
\end{align}
Solving for $\psi_i$ leads to
\begin{align}
	\psi_i
	= \bigl(1 + \exp\bigl(-\nabla_{\psi_i} f_{\mathrm{mt}}^{F_\theta}(\boldsymbol{\psi})\bigr)\bigr)^{-1}, \nonumber
\end{align}
which coincides with the fixed-point update employed in mean-field variational inference.

\subsection{Details of Monte Carlo Gradient Estimation} \label{appendix:Details of Monte Carlo Gradient Estimation}
According to Eq.~(\ref{app-fpi-formula}), the variational parameters can be updated via fixed-point iteration.
Nevertheless, a major computational challenge arises from evaluating the gradient of the multilinear extension $f_{\mathrm{mt}}^{F_\theta} (\boldsymbol{\psi})$ defined in Eq.~(\ref{f_mt}), as it involves summation over all $2^{|V|}$ possible subsets.
To address this issue, \citet{ou2022learning} showed that the gradient $\nabla_{\boldsymbol{\psi}} f_{\mathrm{mt}}^{F_\theta}$
admits an expectation form, which enables efficient estimation through Monte Carlo sampling. Specifically, the partial derivative with respect to $\psi_i$ can be derived as
\begin{align} \label{appendix_gradient_mt}
	\nabla_{\psi_i} f_{\mathrm{mt}}^{F_\theta}
	&= \nabla_{\psi_i} \sum\limits_{S \subseteq V} F_\theta (S) \prod_{i \in S}\psi_i \prod_{i \not\in S}(1-\psi_i) \nonumber \\
	&= \mathbb{E}_{q(S;(\boldsymbol{\psi} | \psi_i \leftarrow 1))}[F_\theta (S)] - \mathbb{E}_{q(S;(\boldsymbol{\psi} | \psi_i \leftarrow 0))}[F_\theta (S)] \nonumber \\
	&= \sum\limits_{S \subseteq V, i \in S} F_\theta (S) \prod\limits_{j\in S\backslash\{i\}} \psi_j \prod\limits_{j^\prime \not\in S} (1 - \psi_{j^\prime}) - \sum\limits_{S \subseteq V\backslash \{i\}} F_\theta (S) \prod\limits_{j\in S} \psi_j \prod\limits_{j^\prime \in S, j^\prime \not= i} (1 - \psi_{j^\prime}) \nonumber \\
	&= \sum\limits_{S \subseteq V\backslash \{i\}} [F_\theta (S + i) - F_\theta (S)] \prod\limits_{j\in S} \psi_j \prod\limits_{j^\prime \in V\backslash S \backslash\{i\} } (1 - \psi_{j^\prime}) \nonumber \\
	&= \mathbb{E}_{q(S;(\boldsymbol{\psi} | \psi_i \leftarrow 0))} \left[ F_\theta (S + i) - F_\theta (S) \right]. 
\end{align}
Here, the $S+i$ is denote the set union $S \cup \{i\}$.
Based on Eq.~(\ref{appendix_gradient_mt}), the gradient $\nabla_{\psi_i} f_{\mathrm{mt}}^{F_\theta}$ can be approximated using Monte Carlo estimation:
i) sample $m$ subsets $S_n, n=1,\dots,m$ from the surrogate distribution $q(S;(\boldsymbol{\psi} | \psi_i \leftarrow 0))$; ii) approximate the expectation by the average $\frac{1}{m} \sum_{k=1}^n [F_\theta (S_n + i) - F_\theta (S_n)]$.
This procedure provides a tractable and unbiased estimator of the gradient required for the fixed-point updates.

\subsection{Accelerating Inference with Equivariant Neural Networks}\label{appendix:Accelerating Inference with Equivariant Neural Networks}
Although Monte Carlo–based gradient estimation makes the optimization of the variational parameters $\boldsymbol{\psi}$ feasible, it is accompanied by a notorious drawback: the computational cost is often prohibitively high. In particular, the DiffMF procedure in Eq.~(\ref{MFVI}) typically involves an inner sampling loop for each data instance, which substantially increases the overall training overhead.
To improve scalability on large-scale datasets, prior works \citep{ou2022learning, xie2024enhancing} propose to amortize the approximate inference process by introducing an auxiliary recognition network. This network directly predicts the variational parameters $\boldsymbol{\psi}$ of the distribution $q_\phi(S;\boldsymbol{\psi})$, where $\phi$ denotes the learnable parameters of the recognition model.
Crucially, since the inputs are set-valued, the recognition network must satisfy the property of \emph{permutation equivariance}, as formalized in Proposition~\ref{Deep Set}. Specifically, each output coordinate is associated with a particular element of the set, and is invariant to the ordering of elements in $S$.
In our framework, optimizing each data point similarly requires multiple gradient ascent steps as in Eq.~(\ref{gradient descent}). Therefore, we likewise employ an auxiliary recognition network to accelerate the inference procedure. Concretely, the equivariant recognition network is defined as $\boldsymbol{\psi} = \operatorname{RecNet}(V; \phi): 2^V \rightarrow [0,1]^{|V|}$, where each component $\operatorname{RecNet}i := f_i$ takes the ground set $V$ as input and outputs the corresponding variational parameter, collectively specifying the distribution $q_\phi(S;\boldsymbol{\psi})$.

\section{Incorporate Sufficient Invariant Statistics of Background Information}\label{appendix:Incorporate Sufficient Invariant Statistics of Background Information}
Within our framework, we adopt the architecture proposed by \citet{xie2024enhancing} and implement its relaxed variant, INSET-$\operatorname{R}$. The central idea of INSET is to incorporate contextual information from the ground set $V$ into the set function, yielding a formulation of the form $F(V, S)$, in contrast to EquiVSet, which models the set function solely as $F(S)$. This mechanism is referred to by \citet{xie2024enhancing} as the fusion of \textit{invariant sufficient representations} of the subset $S$ and the ground set $V$.
Specifically, \citet{bloemreddy2020probabilisticsymmetriesinvariantneural} show that the empirical measure $M_S(S) = \sum_{s_i \in S} \delta(s_i)$
can be served as a valid invariant sufficient representation, where $\delta(s_i)$ denotes a unit-mass atom located at $s_i$, for example instantiated via one-hot encodings. Furthermore, building on the result of
\citet{zaheer2017deep}, this empirical measure can be efficiently approximated using the aggregation form $\rho \left( \sum_{s \in S} \phi(s) \right)$. This formulation provides a practical and effective means of realizing invariant sufficient representations in neural architectures.
\begin{proposition}
	If $f$ is a valid permutation invariant function on $S$, it can be approximated arbitrarily close in the form of $f(S) = \rho \left( \sum_{s \in S} \phi(s) \right)$, for suitable transformations $\phi$ and $\rho$.
\end{proposition}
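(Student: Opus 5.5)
We follow the standard DeepSets argument of \citet{zaheer2017deep}, stated for sets of bounded cardinality $|S|\le n$ whose elements lie in a compact domain $\mathcal{S}\subset\mathbb{R}^{d}$ (for instance, the embeddings used in Section~4). We read a ``valid'' permutation invariant $f$ as a continuous function of the multiset $S$. Approximation is in the uniform norm.

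\textbf{Fixed cardinality $m$.} We view $f$ as a continuous symmetric function on $\mathcal{S}^{m}$, which is compact.
\begin{enumerate}
\item By Stone--Weierstrass, $f$ is uniformly approximated by a polynomial $p$. Symmetrizing, $\bar p=\frac{1}{m!}\sum_{\pi}p\circ\pi$ is a symmetric polynomial. It is no worse an approximation, because $f$ is invariant and averaging cannot increase the sup-distance.
\item We invoke the fundamental theorem on (multi)symmetric polynomials. In the scalar case $d=1$, Newton's identities express every symmetric polynomial in $x_1,\dots,x_m$ as a polynomial in the power sums $\sum_i x_i^{k}$, $k=1,\dots,m$. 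For $d>1$ we use the analogous classical fact over characteristic zero: multisymmetric polynomials are generated by the multisymmetric power sums $\sum_i x_i^{\alpha}$ with multi-indices $1\le|\alpha|\le m$.
\item Taking $\phi(s)=(s^{\alpha})_{1\le|\alpha|\le m}$, we get $\bar p(S)=\rho\bigl(\sum_{s\in S}\phi(s)\bigr)$ for a polynomial $\rho$. This yields the approximation.
\end{enumerate}

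\textbf{Varying cardinality $m\le n$.} We append a constant coordinate $\phi_0(s)=1$ and take all monomials with $|\alpha|\le n$. Then $\sum_{s}\phi_0(s)=|S|$ recovers the size exactly. Let $\rho_m$ be the map obtained in the fixed-cardinality step for size $m$. Since $|S|$ takes finitely many values, we define $\rho$ on the image of each size class by $\rho_m$. These image sets are disjoint compacta, because their first coordinates differ. Hence $\rho$ is well defined and continuous on their union, and by Tietze it extends continuously to all of the ambient space.

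\textbf{Countable domains.} If $\mathcal{S}$ is countable, an exact decomposition is available. Fix an injection $c:\mathcal{S}\to\mathbb{N}$ and set $\phi(s)=4^{-c(s)}$. The sum over a finite set is then injective, since it reads off base-$4$ digits. Hence $\rho=f\circ(\sum\phi)^{-1}$ on the image is well defined.

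\textbf{Main obstacle.} The delicate step is the multidimensional generation theorem for $d>1$; the scalar Newton argument does not directly apply. If citing it is not acceptable, the plan is to avoid it. The multisymmetric power sums separate multisets: two multisets with equal power sums have equal moments up to degree $n$, and finitely supported measures with at most $n$ atoms are determined by these moments. Separation suffices for a Stone--Weierstrass argument on the compact quotient space $\mathcal{S}^{m}/\mathfrak{S}_m$. The algebra of functions $h\bigl(\sum\phi\bigr)$ with $h$ polynomial contains constants and separates points there, so it is dense. This gives the approximation without the explicit generation result.
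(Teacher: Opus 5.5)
Your main argument is correct and follows the route the paper relies on. The paper gives no proof of its own and simply defers to Zaheer et al. Their arguments are precisely your countable-domain injection $4^{-c(s)}$ and, for compact uncountable domains with fixed set size, Stone--Weierstrass followed by symmetrization and power-sum generation. Citing the classical characteristic-zero theorem that multisymmetric polynomials are generated by the power sums $p_\alpha$ with $1\le|\alpha|\le m$ is legitimate. Your count-coordinate gluing for $|S|\le n$ is a correct extension beyond the fixed-size setting in which that result is usually stated. Your formalization (continuity on multisets, compact domain, bounded cardinality, sup norm) is what the paper's vague wording needs.

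The one step that fails is the justification in your fallback. The claim ``finitely supported measures with at most $n$ atoms are determined by their moments up to degree $n$'' is false as stated. On $\mathbb{R}$, the measures $\delta_{-1}+\delta_{1}$ and $\tfrac{2}{5}\delta_{-2}+\tfrac{8}{5}\delta_{1/2}$ both have two atoms and share the moments of degrees $0,1,2$ (namely $2,0,2$). For arbitrary positive weights you would need moments up to degree $2n-1$.

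What you actually need, and what is true, uses integrality and equal cardinality: two multisets $X,Y$ of the same size $m$ with $p_\alpha(X)=p_\alpha(Y)$ for all $|\alpha|\le m$ coincide. Here is one way to prove it.
\begin{enumerate}
\item Choose $w\in\mathbb{R}^d$ so that $z\mapsto w\cdot z$ is injective on the finitely many points of $X\cup Y$. This only requires avoiding finitely many hyperplanes.
\item By the multinomial theorem, $\sum_i (w\cdot x_i)^k=\sum_{|\alpha|=k}\frac{k!}{\alpha!}\,w^{\alpha}p_\alpha(X)$. Hence the projected scalar multisets have equal power sums up to degree $m$.
\item By Newton's identities, the two projected multisets agree. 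Injectivity of the projection then gives $X=Y$.
\end{enumerate}

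With this lemma in place, your Stone--Weierstrass argument on $\mathcal{S}^m/\mathfrak{S}_m$ goes through. It even yields an exact decomposition: $S\mapsto\sum_{s\in S}\phi(s)$ is a continuous injection of a compact space, hence a homeomorphism onto its image. So $\rho=f\circ(\sum\phi)^{-1}$ is continuous there and extends by Tietze.

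A minor point: in the countable case, base $4$ is injective only for sets, or for multiplicities below $4$, since $4\cdot 4^{-2}=4^{-1}$. If you keep the multiset reading there, use base $n+1$ or larger.
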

In practice, an encoder $\phi(\cdot)$ is employed to produce element-wise embeddings, while the mapping $\rho(\cdot)$ may be instantiated by a variety of feedforward architectures, including fully connected layers equipped with nonlinear activations. In a related line of work, \citet{maron2020learningsetssymmetricelements} establish that universal approximation of invariant sufficient representations can be achieved via the aggregation form $\sum_{S \in V} \sum_{s \in S} \phi(s)$, which simplifies to the equivalent expression $\sum_{x_j \in V} \phi(x_j)$.
Therefore, the neural network architecture for set functions injected with superset background information can be expressed as
\begin{equation}
	\theta(S, V) = \sigma \left( \theta_1 \left( \sum_{i=1}^{n_i} \phi(x_i) \right) + \theta_2 \left( \sum_{i=1}^{n} \phi(x_j) \right) \right).
\end{equation}
Here, the feed-forward modules $\theta_1$ and $\theta_2$ are accompanied by a non-linear activation layer denoted
by $\sigma$.
As a comparison, we provide the set function architectures for four different optimal subset oracles in Table~\ref{tab: architectures of set function}.

\begin{table}[h]
	\centering
	\caption{Comparison of set function architectures across four different optimal subset oracles.}
	\label{tab: architectures of set function}
	\small 
	\setlength{\tabcolsep}{8pt} 
	\begin{tabular}{@{}cccc@{}}
		\toprule
		\textbf{EquiVSet} & \textbf{EquiVSet-$\operatorname{R}$} & \textbf{INSET} & \textbf{INSET-$\operatorname{R}$} \\
		\cmidrule(r){1-1} \cmidrule(lr){2-2} \cmidrule(lr){3-3} \cmidrule(l){4-4}
		$\mathrm{FC}_{S}(256 ,500, \mathrm{ReLU})$ 
		& $\mathrm{MHA}_{S}(256,4)$ 
		& $\mathrm{FC}_{V}(256 ,500, \mathrm{ReLU})$ 
		& $\mathrm{MHA}_{V}(256,4)$ \\
		
		$\mathrm{FC}_{S}(500,1, -)$ 
		& $\mathrm{FC}_{S}(256 ,512, \mathrm{ReLU})$ 
		& $\mathrm{FC}_{V}(500,1, -)$ 
		& $\mathrm{MHA}_{S}(256,4)$ \\
		
		- 
		& $\mathrm{FC}_{S}(512 ,1, -)$ 
		& $\mathrm{FC}_{S}(256,500, \mathrm{ReLU})$ 
		& $\mathrm{FC}_{V}(256,512, \mathrm{ReLU})$ \\
		
		- 
		& - 
		& $\mathrm{FC}_{S}(500,1, -)$ 
		& $\mathrm{FC}_{V}(512,1, -)$ \\
		
		- 
		& - 
		& -
		& $\mathrm{FC}_{S}(256 ,512, \mathrm{ReLU})$ \\
		
		- 
		& - 
		& -
		& $\mathrm{FC}_{S}(512 ,1, -)$ \\
		\bottomrule
	\end{tabular}
\end{table}

\begin{figure}[h]
	\centering
	\includegraphics[width=0.88\linewidth]{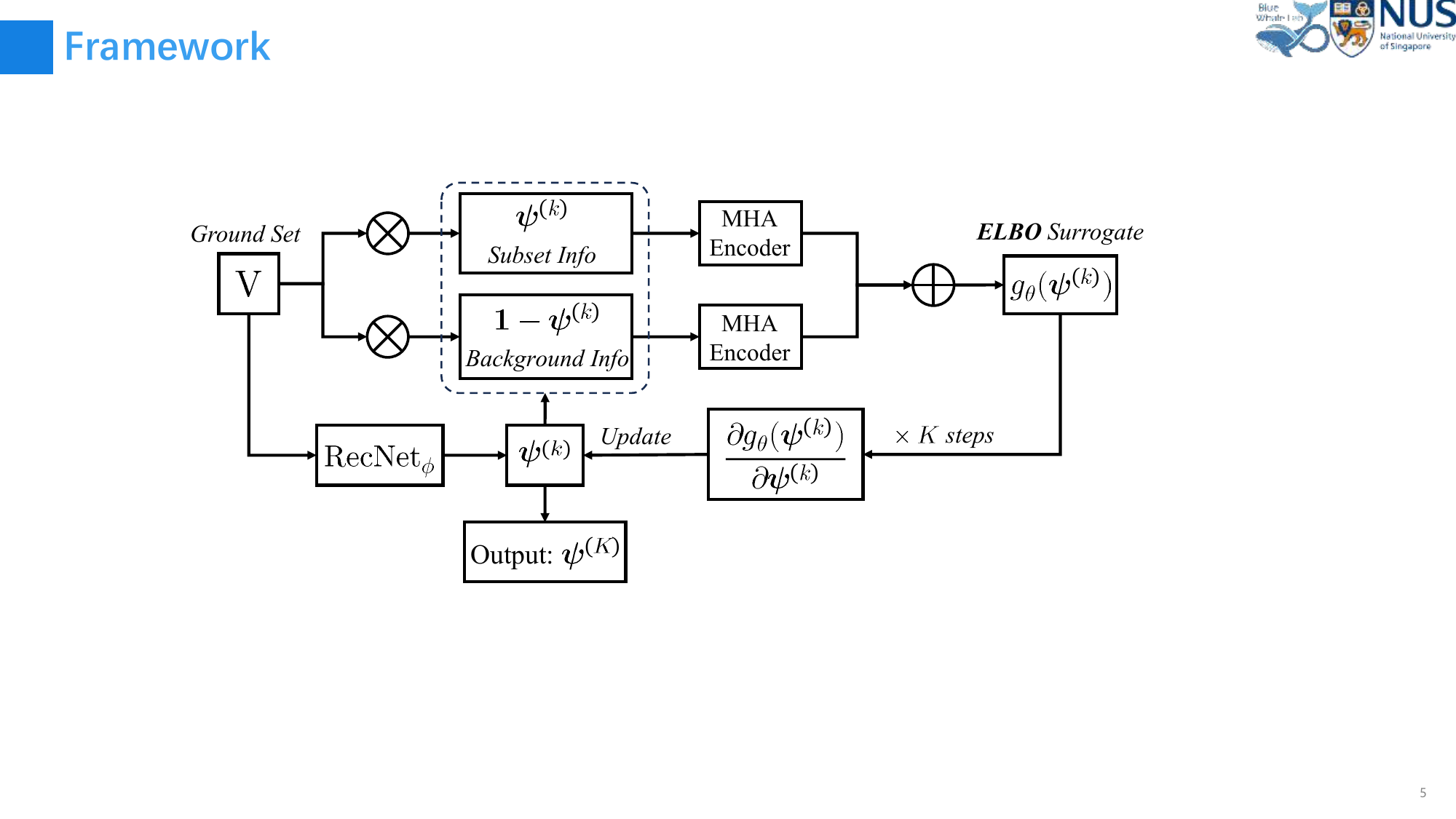}
	\caption{Overall framework of the proposed network architecture.
	}
	\label{framework_fig}
\end{figure}

\section{Detailed Procedure of ReSet}
\subsection{Overall Framework}
\label{appendix:Overall Framework}

Figure~\ref{framework_fig} illustrates the overall architecture of $\operatorname{ReSet}$. Given a ground set $V$, an auxiliary recognition network $\operatorname{RecNet}_{\phi}$ first produces an initial variational parameter $\bm{\psi}^{(0)}$, which amortizes the inference process. At each optimization step, the current soft assignment $\bm{\psi}^{(k)}$ and its complement $1-\bm{\psi}^{(k)}$ are used to construct subset-aware and background-aware representations of the input set. These two branches are encoded by multi-head attention modules to capture element-wise dependencies, and are then combined to produce the ELBO surrogate $g_{\theta}(\bm{\psi}^{(k)})$. The gradient of this surrogate with respect to $\bm{\psi}^{(k)}$ is used to iteratively update the variational parameter for $K$ steps, yielding the final prediction $\bm{\psi}^{(K)}$. This design enables efficient differentiable optimization without Monte Carlo gradient estimation.

\subsection{Detailed Pseudo Code}\label{appendix:Detailed Pseudo Code of ReSet}
We provide the pseudo-code for the training and inference procedures of $\operatorname{ReSet}$ in Algorithms~\ref{algorithm of training} and~\ref{algorithm of Inference}, respectively. The training process consists of three key steps: i) generating the initial variational parameters $\boldsymbol{\psi}^{(0)}$ via $\operatorname{RecNet}_\phi$; ii) iteratively updating $\boldsymbol{\psi}^{(k)}$ for $K$ steps by maximizing the $\mathrm{ELBO}$ using the surrogate function $g_\theta(\cdot)$ via gradient ascent; and iii) computing the cross-entropy loss based on the optimized $\boldsymbol{\psi}^{(K)}$ in Eq.~(\ref{cross-entropy loss}) and updating the network parameters through backpropagation.
During inference, only steps i) and ii) are executed. Notably, we incorporate Langevin dynamics exclusively during training to facilitate exploration, while opting for a deterministic approach during inference to ensure stability and finer convergence. In our experiments, the number of iterative steps $K$ is set to 1.

\begin{algorithm}[!h]
	\caption{ReSet (Training)}
	\label{algorithm of training}
	
	\begin{algorithmic}[1] 
		\REQUIRE $\{V_i, S^*_i\}_{i=1}^N$: training dataset; $\eta$: learning rate; $K$: number of iterations; $\alpha$: step size for gradient ascent
		\ENSURE Optimal parameters $\theta^*, \phi^*$
		
		\STATE $\theta, \phi\leftarrow$ Initialize parameter
		\WHILE{$\theta, \phi$ not converged}
		\STATE Sample training data point\\ $(V, S^*) \sim \{V_i, S^*_i\}_{i=1}^N$
		\STATE Obtain the initial variational parameter $\bm{\psi}^{(0)}$ via the auxiliary recognition network\\
		$\bm{\psi}^{(0)} \leftarrow \text{RecNet}_\phi(V)$
		
		\FOR{$k=0$ to $K-1$}
		\STATE Calculate the ELBO surrogate $g_\theta(\bm{\psi}^{(k)})$ using Eq.~(\ref{Network Architecture})
		\STATE Update the parameter $\bm{\psi}$ by gradient ascent in Eq.~(\ref{add noise})\\
		$\bm{\psi}^{(k+1)} \leftarrow \bm{\psi}^{(k)} + \alpha \nabla_{\bm{\psi}}  g_\theta(\bm{\psi}^{(k)}) + \bm{\epsilon}_k$
		\ENDFOR
		\STATE Set $\bm{\psi}^{*} \leftarrow \bm{\psi}^{(K)}$
		\STATE Update $\theta$ and $\phi$ by minimizing the cross-entropy loss in Eq.~(\ref{cross-entropy loss})\\
		$(\theta, \phi) \leftarrow (\theta, \phi) - \eta \nabla_{(\theta, \phi)} 
		\Big(
		-\sum_{i \in S^{*}} \log \psi_{i}^{*} - \sum_{i \in V\backslash S^{*}} \log (1 - \psi_{i}^{*})
		\Big)
		$
		\ENDWHILE
		\STATE Set $\theta^*, \phi^* \leftarrow \theta, \phi$  
		\RETURN $\theta^*, \phi^*$
	\end{algorithmic}
\end{algorithm}

\begin{algorithm}[h]
	\caption{ReSet (Inference)}
	\label{algorithm of Inference}
	\begin{algorithmic}[1]
		\REQUIRE $V$: test ground set; $\theta^*, \phi^*$: trained parameters; $K$: number of iterations; $\alpha$: step size for gradient ascent 
		\ENSURE $S^*$: Predicted optimal subset
		
		\STATE Obtain the initial variational parameter via the recognition network
		\STATE $\bm{\psi}^{(0)} \leftarrow \text{RecNet}_{\phi^*}(V)$
		
		\FOR{$k=0$ to $K-1$}
		\STATE Calculate the ELBO surrogate $g_{\theta^*}(\bm{\psi}^{(k)})$ using Eq.~(\ref{Network Architecture})
		\STATE Update $\bm{\psi}$ by gradient ascent:
		\STATE $\bm{\psi}^{(k+1)} \leftarrow \bm{\psi}^{(k)} + \alpha \nabla_{\bm{\psi}^{(k)}} g_{\theta^*}(\bm{\psi}^{(k)})$ 
		\ENDFOR
		
		\STATE Set $\bm{\psi}^* \leftarrow \bm{\psi}^{(K)}$
		\STATE Discretize the continuous solution
		\STATE $S^* \leftarrow \mathrm{Rounding}(\boldsymbol{\psi}^*)$
		\RETURN $S^*$
	\end{algorithmic}
\end{algorithm}

\section{Experimental Details}\label{appendix:Experimental Details}
\subsection{The Architecture of ReSet}\label{appendix:The Architecture of ReSet}
Similar to the framework of \citet{ou2022learning, xie2024enhancing}, $\operatorname{ReSet}$ consists of two neural components:
(i) a set function that is permutation invariant, and
(ii) a recognition network that is permutation equivariant.
Both components are implemented using the DeepSets architecture, with detailed architectural specifications provided in Table~\ref{tab: architectures of ReSet}.
\begin{table}[!t]
	\centering
	\caption{Detailed architectures of $\operatorname{ReSet}$.}
	\begin{tabular}{@{}cc@{}}
		\toprule
		\multicolumn{1}{c}{\textbf{Set (Surrogate) Function}} & \multicolumn{1}{c}{\textbf{Recognition Network}}\\
		\cmidrule(lr){1-1} \cmidrule(lr){2-2}
		$\mathrm{InitLayer}(V,256)$ & $\mathrm{InitLayer}(V,256)$ \\
		$\mathrm{MHA}(256,4)$ & $\mathrm{FC}(256,512, \mathrm{ReLU})$ \\
		$\mathrm{LayerNorm}(256)$& $\mathrm{FC}(512,1, \mathrm{Sigmoid})$ \\
		$\mathrm{SumPooling}$ & -\\
		$\mathrm{FC}(256,1,-)$ & -\\
		\bottomrule
	\end{tabular}
	\label{tab: architectures of ReSet}
\end{table}
Specifically, $\mathrm{InitLayer}(d_{in},d_{out})$ encodes the set objects into vector representations, $\mathrm{FC}(d_{in},d_{out}, f)$ is a fully-connected layer with activation function $f$, $\mathrm{MHA}(d, h)$ denotes a multi-head attention layer with $h$ heads.

\textbf{Synthetic datasets.}
We consider two synthetic benchmarks, namely Two-Moons and Gaussian Mixture.
Each set element is represented by a two-dimensional vector encoding its Cartesian coordinates.
For both datasets, the $\mathrm{InitLayer}$ is instantiated as a single-layer feed-forward network
with architecture $\mathrm{FC}(2, 256, -)$.

\textbf{Amazon Baby Registry.}
The Amazon Baby Registry dataset consists of sets of products, where each product is associated
with a short textual description.
We encode these descriptions using a pre-trained BERT model \citep{lee2018pre},
resulting in a $768$-dimensional embedding for each element.
The corresponding $\mathrm{InitLayer}$ is implemented as $\mathrm{FC}(768, 256, -)$.

\textbf{Double MNIST.}
The Double MNIST dataset contains digit images spanning all combinations from $00$ to $99$.
Each image has a resolution of $(64,64)$ and is reshaped into a vector of dimension $4096$.
Accordingly, we employ a fully connected initialization layer with architecture
$\mathrm{FC}(4096, 256, -)$.

\textbf{F-MNIST.}
The Fashion-MNIST dataset consists of grayscale images of fashion items from $10$ categories.
Each image has a spatial resolution of $(28,28)$ and is flattened into a $784$-dimensional vector.
Similar to Double MNIST, the $\mathrm{InitLayer}$ is implemented as a feed-forward neural network with architecture
$\mathrm{FC}(784, 256, -)$.

\textbf{CelebA.}
The CelebA dataset comprises $202{,}599$ face images, each of size $(3,64,64)$.
For this dataset, the $\mathrm{InitLayer}$ is realized using a convolutional neural network.
Specifically, its architecture is given by
\begin{align}
	\operatorname{ModuleList}([\mathrm{Conv}(32,3,2,\mathrm{ReLU}),\ \mathrm{Conv}(64,4,2,\mathrm{ReLU}),\ \nonumber\\
	\mathrm{Conv}(128,5,2,\mathrm{ReLU}),\ \operatorname{MaxPooling},\ \mathrm{FC}(128, 256, -)]),\nonumber
\end{align}
where $\mathrm{Conv}(d,k,s,f)$ denotes a convolutional layer parameterized by $d$ output channels,
kernel size $k$, stride $s$, and activation function $f$.

\textbf{CIFAR-10.}
The CIFAR-10 dataset consists of $60,000$ color images of size $32 \times 32$ pixels. For this dataset, the feature extraction network is implemented as a deep convolutional neural network consisting of three cascaded functional blocks. Specifically, the architecture is formulated as:
\begin{equation}
	\operatorname{ModuleList}\left(\left[ \operatorname{Block}(32), \operatorname{Block}(64), \operatorname{Block}(128), \operatorname{AvgPool}, \mathrm{FC}(128, 256, -) \right]\right), \nonumber
\end{equation}
where each $\operatorname{Block}(d)$ is a sub-network designed to capture hierarchical spatial features, defined as the sequence:
\begin{equation}
	[\mathrm{Conv}(d, 3, 2, \mathrm{ReLU}), \mathrm{BN}, \mathrm{Conv}(d, 3, 1, \mathrm{ReLU}), \mathrm{BN}]. \nonumber
\end{equation}
To maintain spatial resolution during convolution, zero-padding is applied such that $p = \lfloor k/2 \rfloor$. Each $\mathrm{BN}$ denotes a Batch Normalization layer applied after the convolution. 
It should be noted that the code related to this part was not publicly released in \citet{ou2022learning}, so we constructed the $\mathrm{InitLayer}$ architecture ourselves.

\textbf{PDBBind.}
The PDBBind database provides experimentally determined binding affinities for biomolecular complexes \citep{liu2015pdb},
together with high-resolution three-dimensional Cartesian coordinates of ligands and their corresponding target proteins,
typically obtained via techniques such as X-ray crystallography.
To extract informative representations, we leverage the atomic convolutional neural network (ACNN) \citep{gomes2017atomic},
which constructs nearest-neighbor graphs based on atomic spatial coordinates.
In our implementation, the representation is obtained from the penultimate layer of ACNN,
followed by additional feed-forward layers.
Formally, the $\mathrm{InitLayer}$ is defined as
\begin{align}
	\operatorname{ModuleList}([\mathrm{ACNN}[:-1],\ \mathrm{FC}(1922, 2048, \mathrm{ReLU}),\ \mathrm{FC}(2048, 256, -)]), \nonumber
\end{align}
where $\mathrm{ACNN}[:-1]$ denotes the ACNN architecture without its final prediction layer,
yielding an output dimension of $1{,}922$.

\textbf{BindingDB.}
The BindingDB dataset consists of $52{,}273$ drug--target interaction pairs.
We adopt the DeepDTA framework \citep{ozturk2018deepdta} to encode each drug--target pair into a fixed-dimensional vector.
DeepDTA represents drug compounds and target proteins as one-hot encoded sequences,
which are subsequently processed by convolutional neural networks.
The full specification of the $\mathrm{InitLayer}$ used for this dataset
is reported in Table~\ref{tab:bindingdb_initlayer_architecture}.

\begin{table}[ht]
	\centering
	\caption{Detailed architectures of InitLayer in the BindingDB dataset.}
	\begin{tabular}{@{}cc@{}}\toprule
		\textbf{Drug} & \textbf{Target} \\
		\cmidrule(lr){1-1} \cmidrule(lr){2-2}
		$\mathrm{Conv}(32,4,1,\mathrm{ReLU})$ & $\mathrm{Conv}(32,4,1,\mathrm{ReLU})$ \\
		$\mathrm{Conv}(64,6,1,\mathrm{ReLU})$ & $\mathrm{Conv}(64,8,1,\mathrm{ReLU})$ \\
		$\mathrm{Conv}(96,8,1,\mathrm{ReLU})$ & $\mathrm{Conv}(96,12,1,\mathrm{ReLU})$ \\
		$\mathrm{MaxPooling}$ & $\mathrm{MaxPooling}$ \\
		$\mathrm{FC}(96, 256, \mathrm{ReLU})$ & $\mathrm{FC}(96, 256, \mathrm{ReLU})$ \\
		\multicolumn{2}{c}{$\mathrm{Concat}$} \\
		\multicolumn{2}{c}{$\mathrm{FC}(512, 256, -)$} \\
		\bottomrule
	\end{tabular}
	\label{tab:bindingdb_initlayer_architecture}
\end{table}

\subsection{Implementation Details}\label{appendix:Implementation Details}
We provide the training details and hyperparameter settings for $\operatorname{ReSet}$.
The proposed model is trained using the Adam optimizer \citep{kingma2017adammethodstochasticoptimization} with a fixed weight decay of $1e-5$ and a batch size of $128$.
During training, we employ a cosine annealing learning rate scheduler \citep{loshchilov2016sgdr} together with an early stopping strategy. 
The initial learning rate is selected via grid search over the range $[1e-5, 1e-3]$ using half-decade intervals, with the minimum learning rate fixed at one-tenth of the chosen initial value. Training is conducted for up to 500 epochs, and early stopping is triggered if no performance improvement is observed for 20 consecutive epochs. For datasets that are more difficult to converge, we allow a longer patience.
After training, the saved models are evaluated on the test set. All experiments are repeated 5 times with different random seeds, and the mean performance along with their standard deviations are reported.

Within the $\operatorname{ReSet}$ framework, the gradient ascent step size $\alpha$ and the noise magnitude $\epsilon$ in the Langevin dynamics constitute two critical hyperparameters governing model performance. To enhance the adaptivity of the learning process, we allow both parameters to be updated during training. Specifically, the initial value of $\alpha$ is selected empirically. In contrast, although $\epsilon$ is treated as a learnable parameter, it is constrained to remain at a small scale of $1e-5$. This strategy aims to ensure numerical stability during training while enabling precise control over the determinacy of the model.

\subsection{Baselines}\label{Baselines}
Throughout our experiments, we evaluate the proposed models against four conventional approaches: Random Guess, Probabilistic Greedy Model (PGM) \citep{tschiatschek2018differentiable}, DeepSet \citep{zaheer2017deep}, and Set Transformer \citep{lee2019set}. Additionally, we compare our method with two state-of-the-art optimal subset oracles: EquiVSet \citep{ou2022learning} and INSET \citep{xie2024enhancing}. Detailed descriptions of these benchmarks are provided below.
\begin{itemize}
	\item \textbf{Random.} We report the expected Jaccard coefficient obtained by random guessing. This baseline serves as a reference for assessing the intrinsic difficulty of the task.
	
	\item \textbf{PGM} \citep{tschiatschek2018differentiable}. The probabilistic greedy model tackles Problem~(\ref{OS oracle}) by introducing a differentiable relaxation of the greedy maximization procedure, which relies on enumerating all possible element permutations. For a detailed exposition, we refer readers to Appendix~A of \citet{ou2022learning}.
	
	\item \textbf{DeepSet} \citep{zaheer2017deep}. By learning a permutation-invariant mapping from $2^{|V|}$ to $[0,1]^{|V|}$, the DeepSet architecture provides a generic backbone for modeling set functions in optimal subset oracle frameworks \citep{ou2022learning, xie2024enhancing, xie2024horse}. We therefore include it as a baseline in our evaluation.
	To adapt DeepSets for point-wise tasks, we shift from an invariant to a permutation equivariant design. Instead of global reduction, we broadcast the mean-pooled global descriptor to each element, allowing for an output of size $|V|$ that preserves the identity of individual members.
	
	\item \textbf{Set Transformer} \citep{lee2019set}. Extending the DeepSet framework, Set Transformer incorporates self-attention mechanisms to model pairwise interactions among set elements. Similar to DeepSet, it can be adapted for subset selection tasks and is included as a competitive baseline.
	Similar to our adaptation of DeepSet, we bypass the original global pooling (PMA) and invariant decoder, utilizing only the encoder blocks. This preserves the permutation equivariance of the self-attention mechanism, allowing for element-wise prediction while maintaining an output dimensionality of $|V|$.
	
	\item \textbf{EquiVSet} \citep{ou2022learning}. EquiVSet aims to learn an optimal subset oracle without explicitly specifying the underlying set function. Instead, the set function is modeled as an energy-based formulation and approximated through a variational distribution. Optimization via mean-field variational inference requires iterative unrolling with Monte Carlo sampling to estimate gradients of the multilinear extension, which introduces substantial computational overhead.
	
	\item \textbf{INSET} \citep{xie2024enhancing}. Building upon EquiVSet, INSET augments the set function $F_\theta(\cdot)$ by injecting background information from the ground set $V$. By explicitly accounting for the symmetry of the ground set, this design introduces a stronger inductive bias and more effectively exploits contextual relationships within the global set.
\end{itemize}

\subsection{Detailed Experimental Settings for Product Recommendation}\label{appendix:Detailed Experimental Settings for Product Recommendation}
The Amazon Baby Registry dataset \citep{gillenwater2014expectation} is collected from Amazon and organized into multiple subsets based on product categories, such as toys and furniture.
Within each category, which serves as a product universe, Amazon records multiple product subsets curated by different customers.
These customer-selected subsets naturally correspond to optimal subset (OS) oracles.
To ensure that each ground set $V$ is associated with a single OS oracle $S^*$, we construct data samples in the form of $(V, S^*)$ as follows.
We first discard customer-selected subsets whose cardinality is equal to $1$ or exceeds $30$.
For each remaining OS oracle $S^*$, we randomly sample $30 - |S^*|$ additional products from the same category to form the complement set $V \setminus S^*$.
Statistics of the resulting datasets across different categories are reported in Table~\ref{tab:statistic_amazon_product}.

\begin{table}[t]
	\centering 
	\small
	\caption{The statistics of Amazon product dataset. }
	\label{tab:statistic_amazon_product}
	\begin{tabular}{@{}cccccccc@{}}\toprule
		Categories & \#products & $|\mathcal{D}|$ & $|V|$ & $\sum |S^*|$ & $\mathbb{E}[|S^*|]$ & $\min_{S^*} |S^*|$ & $\max_{S^*} |S^*|$\\
		\midrule
		Toys & 62 & 2,421  & 30 & 9,924 & 4.09 & 3 & 14 \\
		Furniture & 32 & 280  & 30 & 892 & 3.18 & 3 & 6 \\
		Gear & 100 & 4,277  & 30 & 16,288 & 3.80 & 3 & 10 \\
		Carseats & 34 & 483  & 30 & 1,576 & 3.26 & 3 & 6 \\
		Bath & 100 & 3,195  & 30 & 12,147 & 3.80 & 3 & 11 \\
		Health & 62 & 2,995  & 30 & 11,053 & 3.69 & 3 & 9 \\
		Diaper & 100 & 6,108  & 30 & 25,333 & 4.14 & 3 & 15 \\
		Bedding & 100 & 4,524  & 30 & 17,509 & 3.87 & 3 & 12 \\
		Safety & 36 & 267  & 30 & 846 & 3.16 & 3 & 5 \\
		Feeding & 100 & 8,202  & 30 & 37,901 & 4.62 & 3 & 23 \\
		Apparel & 100 & 4,675  & 30 & 21,176 & 4.52 & 3 & 21 \\
		Media & 58 & 1,485  & 30 & 6,723 & 4.52 & 3 & 19 \\
		\bottomrule
	\end{tabular}
\end{table}

\subsection{Detailed Experimental Settings for Set Anomaly Detection}\label{Experimental Settings for Set Anomaly Detection}
\textbf{CelebA:} The CelebA dataset consists of $202{,}599$ images annotated with $40$ binary attributes. As illustrated in Fig.~\ref{fig:celeba}, we randomly choose two attributes and form sets of fixed cardinality $8$. For each ground set $V$, an OS oracle subset $S^{*}$ is constructed by randomly sampling $n \in \{2,3\}$ images such that neither of the selected attributes is exhibited. Based on this procedure, we generate training, validation, and test splits containing $10{,}000$, $1{,}000$, and $1{,}000$ samples, respectively.

\begin{figure}[h]
	\centering
	\includegraphics[width=0.9\linewidth]{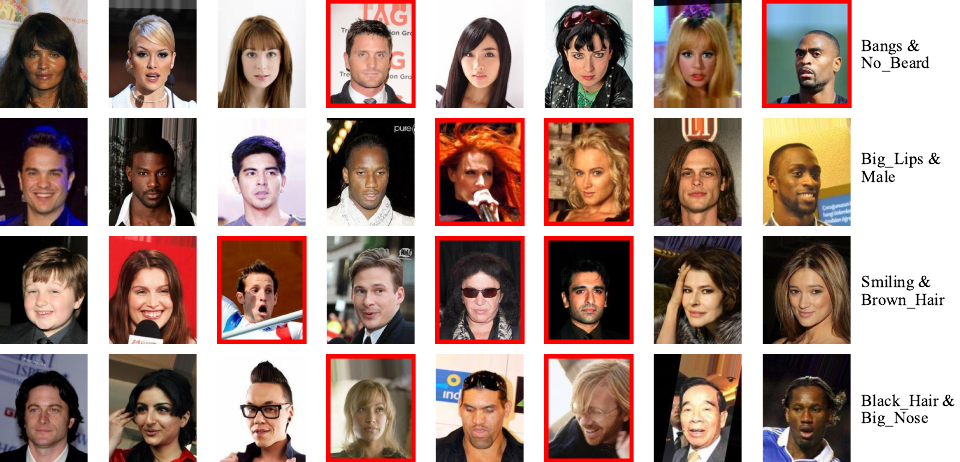}
	\caption{Example data points from the CelebA dataset adapted from \citet{ou2022learning}. Each row corresponds to one sample and contains $|S^|$ anomalous images (highlighted in red) along with $8 - |S^|$ normal images. Within each sample, normal images exhibit both selected attributes (shown in the rightmost column), whereas anomalous images exhibit neither attribute.}
	\label{fig:celeba}
\end{figure}

\textbf{Comparing with the Setting of \citet{ou2022learning}.}
Although we aim to align our experimental setup as closely as possible with that of \cite{ou2022learning}, the experimental protocols for the anomaly detection task on Double MNIST \citep{sun2019multi}, F-MNIST \citep{xiao2017fashion}, and CIFAR-10 \citep{krizhevsky2009learning} are not publicly available. Moreover, the corresponding descriptions in the original work are relatively ambiguous. For clarity and reproducibility, we therefore re-establish and standardize the experimental settings for these three datasets.
Specifically, for Double MNIST, we generate a total of $1,000$ images distributed across all digit pairs from $00$ to $99$.\footnote{The source code for dataset generation is publicly available at \url{https://github.com/shaohua0116/MultiDigitMNIST}.} 
For F-MNIST and CIFAR-10, we use the official datasets obtained via online sources. 
During sample construction, we randomly select $n \in \{2,3, 4\}$ images from the same class to form the optimal subset oracle $S^*$ for all three datasets.
For the Double MNIST dataset, $V \backslash S^*$ is constructed by selecting $20 - |S^*|$ images, whereas for F-MNIST and CIFAR-10, it is formed by drawing $10 - |S^*|$ images from different classes. Notably, the images in $V \backslash S^*$ are drawn from distinct classes, with at most one image per class.
Finally, we generate samples on the fly, producing $10,000$, $1,000$, and $1,000$ instances for the training, validation, and test sets, respectively. Visual illustrations are provided in Figures~\ref{fig:double_mnist} and~\ref{fig:fmnist-cifar10}.

\begin{figure}[h]
	\centering
	\includegraphics[width=0.82\linewidth]{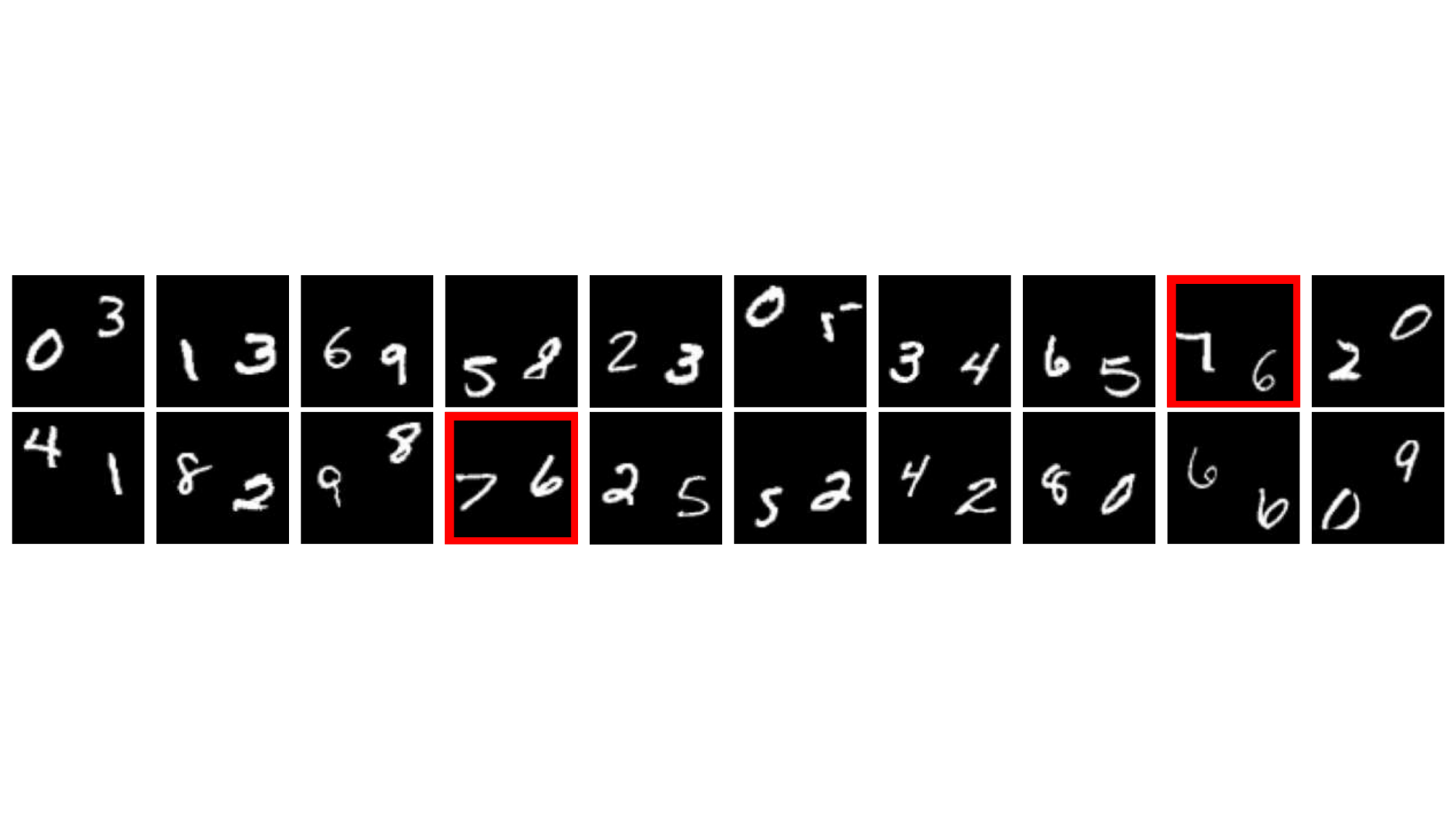}
	\caption{A sampled data for the Double MNIST dataset, which consists of $|S^*|$ images with the same digit (red box, $76$ in this case) and $20-|S^*|$ images with different digits.
	}
	\label{fig:double_mnist}
\end{figure}

\begin{figure}[h]
	\begin{minipage}[!b]{0.5\linewidth}
		\centering
		\includegraphics[width=2.7in]{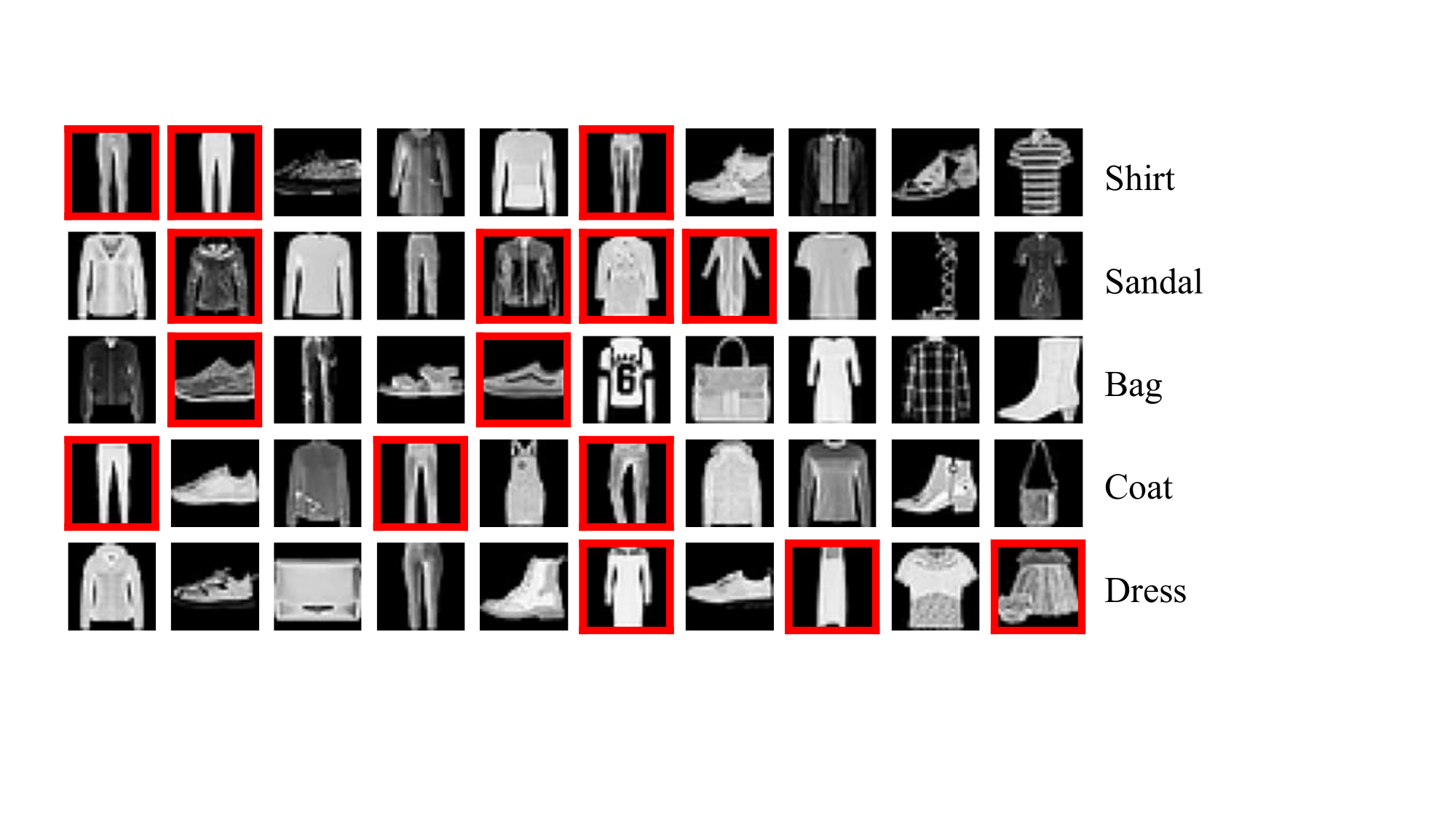}
	\end{minipage}
	\begin{minipage}[!b]{0.5\linewidth}
		\centering
		\includegraphics[width=2.7in]{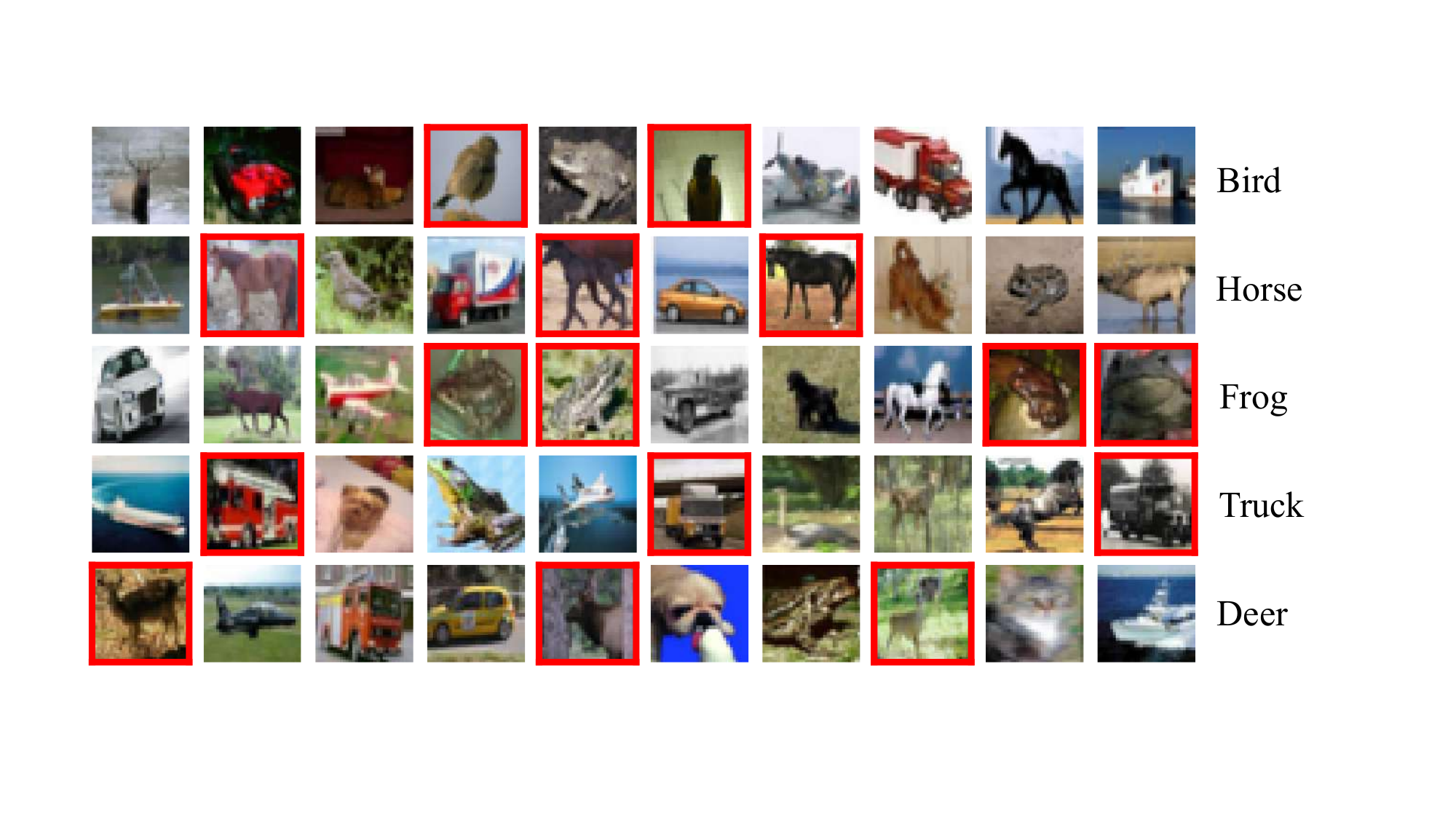}
	\end{minipage}
	\caption{Illustrative samples from the F-MNIST (left) and CIFAR-10 (right) datasets. Each row represents a single sample composed of $|S^*|$ images sharing the same class label (highlighted in red) and $10 - |S^*|$ additional images drawn from distinct classes. The corresponding labels are shown in the rightmost column.}
	\label{fig:fmnist-cifar10}
\end{figure}

\subsection{Detailed Experimental Settings for Compound Selection}
\label{appendix_compount_selection}
In the drug discovery pipeline, identifying compounds that exhibit strong bioactivity~\citep{wallach2015atomnet, li2021structure, ji2023drugood}, sufficient structural diversity, and favorable ADME (absorption, distribution, metabolism, and excretion) profiles~\citep{gimeno2019light} constitutes a fundamental screening stage.
In practice, virtual screening is commonly implemented as a sequence of filtering operations. A typical workflow first prioritizes candidates based on predicted bioactivity, subsequently enforces diversity constraints, and finally excludes compounds with undesirable ADME characteristics, yielding a refined candidate subset.
However, owing to privacy considerations and cost constraints in pharmaceutical development, intermediate supervision signals corresponding to these individual filtering stages are often inaccessible or prohibitively expensive to obtain. Consequently, learning the entire screening procedure in an end-to-end fashion becomes desirable, a setting that naturally aligns with the formulation of optimal subset (OS) oracles.
Despite this alignment, fully end-to-end modeling remains challenging and often demands substantial domain knowledge. In this work, we therefore adopt a simplified yet representative setting by focusing on the prediction of high-bioactivity compounds, following the experimental protocol of \citet{xie2024enhancing}. Experiments are conducted on the PDBBind~\citep{liu2015pdb} and BindingDB~\citep{liu2007bindingdb} datasets using a unified bioactivity-based filtering criterion.
Algorithm~\ref{alg:compound_selection} illustrates the data generation procedure used to simulate the OS oracle for the compound selection task.
Specifically, $\operatorname{random\_choose}(\mathcal{C}, n)$ denotes uniformly sampling $n$ compounds from a database $\mathcal{C}$ (either PDBBind or BindingDB) to construct the ground set $V$,
while $\operatorname{topK\_bioactivity}(V, m)$ selects the top-$m$ compounds exhibiting the highest biological activity from $V$.
These two operations together form a bioactivity-based filtering process, through which the OS oracle $S^*$ is obtained.
In our experiments, we set $(n, m) = (30, 10)$ for PDBBind and $(300, 100)$ for BindingDB, respectively.
This procedure yields a data point $(V, S^*)$, where $V$ represents a candidate compound set and $S^*$ corresponds to the subset of compounds with the highest bioactivity.

\begin{algorithm}[h]
	\caption{OS Oracle Construction Procedure for the Compound Selection Task}
	\label{alg:compound_selection}
	\begin{algorithmic}[1]
		\renewcommand{\algorithmicrequire}{\textbf{Input:}}
		\renewcommand{\algorithmicensure}{\textbf{Output:}}
		\REQUIRE $\mathcal{C}$: compound database; $n$: cardinality of the ground set; $m$: number of highly active compounds
		\ENSURE A data sample $(V, S^*)$
		
		\STATE \textbf{Ground Set Formation:}
		\STATE Uniformly sample $n$ compounds from the database $\mathcal{C}$
		\STATE $V \leftarrow \operatorname{random\_choose}(\mathcal{C}, n)$
		
		\STATE \textbf{Bioactivity-based Selection (OS Oracle):}
		\STATE Select the top-$m$ compounds with the highest bioactivity scores from $V$
		\STATE $S^* \leftarrow \operatorname{topK\_bioactivity}(V, m)$ \hfill 
		
		\RETURN $(V, S^*)$
	\end{algorithmic}
\end{algorithm}

\section{Additional Experiments}\label{appendix:Additional Experiments}
\subsection{Ablation Study}\label{appendix:Ablation Study}
Our framework diverges from the original OS Oracle baseline primarily through two key components: the integration of multi-head self-attention and the incorporation of Langevin dynamics. To address potential concerns that the performance gains might stem solely from these modules—despite our model’s significantly lower parameter count—we conduct an ablation study across several model variants. Specifically, we augment the baseline with an identical self-attention layer (denoted as “-$\operatorname{MHA}$”) and evaluate a version of our proposed method without Langevin dynamics (denoted as “w/o $\operatorname{L}$”).
Table~\ref{table:ablation} presents the ablation results. It is evident that our proposed method consistently outperforms both the original baseline and its $\operatorname{MHA}$-enhanced variant. Notably, in several cases, the performance gap remains marginal upon removing Langevin dynamics; in certain instances, the deterministic version even surpasses the full model. This suggests that the benefit of Langevin dynamics is dataset-dependent. When the objective landscape is relatively smooth, removing the stochastic term can yield comparable or slightly better performance by reducing noise in the optimization trajectory. In more challenging landscapes, Langevin dynamics can improve exploration and help stabilize training. Thus, its usefulness depends on problem complexity and can be determined empirically. Nevertheless, the overall results provide strong empirical evidence for the efficacy of our proposed framework.
\begin{table*}[h]
	\caption{Performance comparison (\%) of different OS Oracle variants in the ablation study.}
	\label{table:ablation}
	\centering
	\resizebox{0.9\textwidth}{!}{
		\begin{tabular}{l|cccccc}
			\toprule
			Variants & Toys & Furniture & Carseats & Diaper & Safety & Media \\
			\midrule
			EquiVSet
			& 68.0 $\pm$ 2.0 & 17.2 $\pm$ 0.9 & 21.0 $\pm$ 1.0 & 83.0 $\pm$ 1.0 & 25.0 $\pm$ 3.0 & 57.0 $\pm$ 1.0 \\
			EquiVSet-$\operatorname{MHA}$
			& 75.7 $\pm$ 1.4 & 15.1 $\pm$ 1.8 & 20.6 $\pm$ 3.2 & 88.5 $\pm$ 0.7 & 13.9 $\pm$ 0.6 & 62.9 $\pm$ 2.8 \\
			\midrule
			EquiVSet-$\operatorname{R}_{(\text{w/o L})}$
			& \textbf{78.7 $\pm$ 0.7}
			& 33.8 $\pm$ 4.8
			& 26.5 $\pm$ 2.4
			& 89.8 $\pm$ 1.0
			& \textbf{37.6 $\pm$ 5.3}
			& 64.6 $\pm$ 4.0 \\
			EquiVSet-$\operatorname{R}$
			& 77.9 $\pm$ 1.0
			& \textbf{35.9 $\pm$ 1.3}
			& \textbf{26.9 $\pm$ 1.0}
			& \textbf{90.4 $\pm$ 0.6}
			& 32.1 $\pm$ 9.4
			& \textbf{67.0 $\pm$ 1.0} \\
			\midrule
			\midrule
			INSET
			& 76.9 $\pm$ 0.5 & 16.9 $\pm$ 5.0 & 23.1 $\pm$ 3.4 & 88.0 $\pm$ 0.7 & 23.8 $\pm$ 1.5 & 62.0 $\pm$ 2.3 \\
			INSET-$\operatorname{MHA}$
			& 77.8 $\pm$ 1.3 & 17.4 $\pm$ 1.5 & 23.5 $\pm$ 2.0 & 90.2 $\pm$ 1.7 & 14.0 $\pm$ 0.5 & 65.5 $\pm$ 2.4 \\
			\midrule
			INSET-$\operatorname{R}_{(\text{w/o L})}$
			& \textbf{79.3 $\pm$ 0.7}
			& \textbf{36.9 $\pm$ 1.0}
			& 29.7 $\pm$ 2.8
			& 89.9 $\pm$ 0.5
			& 37.4 $\pm$ 5.0
			& 67.8 $\pm$ 1.3 \\
			INSET-$\operatorname{R}$
			& 79.0 $\pm$ 0.6
			& 36.5 $\pm$ 1.5
			& \textbf{30.2 $\pm$ 3.9}
			& \textbf{90.4 $\pm$ 0.5}
			& \textbf{38.3 $\pm$ 4.0}
			& \textbf{68.1 $\pm$ 1.1} \\
			\bottomrule
		\end{tabular}
	}
\end{table*}

\subsection{Computational Efficiency}
Table~\ref{tab:efficiency} compares the inference time and memory usage of different methods under increasing sample and set sizes. Across all settings, replacing Monte Carlo sampling with the proposed learned relaxation consistently improves computational efficiency. For EquiVSet, the relaxation variant achieves up to $2.6\times$ faster inference and reduces memory usage by up to $1.6\times$. The improvement is more pronounced for INSET, where INSET-$\operatorname{R}$ obtains up to $3.8\times$ speedup and $2.5\times$ lower memory usage. These gains become especially important as $N$ and $|V|$ grow, suggesting that the proposed relaxation effectively alleviates the computational and memory overhead caused by sampling-based gradient estimation.
\begin{table}[h]
	\centering
	\caption{Computational efficiency on the Two Moons dataset under varying sample and set sizes. The average inference time per epoch (ms) and peak memory usage (MB) are reported.
	}
	\label{tab:efficiency}
	
	\begingroup
	\small
	\setlength{\tabcolsep}{7pt}
	\renewcommand{\arraystretch}{1.2}
	
	\begin{tabular}{l|cc|cc|cc}
		\toprule
		\multirow{2}{*}{Method (\#Params)}
		& \multicolumn{2}{c|}{$N$=1000, $V$=100}
		& \multicolumn{2}{c|}{$N$=2000, $V$=200}
		& \multicolumn{2}{c}{$N$=3000, $V$=300} \\
		
		& Time & Memory
		& Time & Memory
		& Time & Memory \\
		\midrule
		
		EquiVSet (0.76M)
		& 178 & 936
		& 686 & 1942
		& 1575 & 3043 \\
		
		{EquiVSet-$\operatorname{R}$ (0.26M)}
		& \textbf{68} & \textbf{569}
		& \textbf{370} & \textbf{1352}
		& \textbf{1021} & \textbf{2468} \\
		
		\midrule
		
		INSET (1.14M)
		& 479 & 2554
		& 1892 & 5368
		& 4420 & 8505 \\
		
		{INSET-$\operatorname{R}$ (0.52M)}
		& \textbf{125} & \textbf{1021}
		& \textbf{732} & \textbf{2483}
		& \textbf{2025} & \textbf{4396} \\
		
		\bottomrule
	\end{tabular}
	
	\endgroup
\end{table}

\subsection{Comparison with Low-Variance Gradient Estimators}
Table~\ref{tab:low_variance_estimators} compares our method with low-variance gradient estimators by increasing the number of Monte Carlo samples. Although using more samples slightly improves performance, it also substantially increases inference time. In contrast, our method achieves the best performance for both EquiVSet and INSET while requiring significantly less inference time. This suggests that directly learning the relaxation provides a more favorable performance-efficiency trade-off than reducing gradient variance through additional sampling.
\begin{table}[h]
	\centering
	\caption{Comparison with low-variance gradient estimators on the Toys dataset. Performance (\%) and average inference time (ms) are reported.
	}
	\label{tab:low_variance_estimators}
	\begingroup
	\small
	\setlength{\tabcolsep}{6pt}
	\renewcommand{\arraystretch}{1.2}
	\begin{tabular}{ll|cccccc}
		\toprule
		\multicolumn{2}{c|}{Number of Samples}
		& 10 & 20 & 30 & 40 & 50 & Ours \\
		\midrule
		\multirow{2}{*}{EquiVSet}
		& MJC
		& $71.6 {\pm} 0.7$
		& $74.0 {\pm} 0.6$
		& $74.2 {\pm} 0.5$
		& $75.3 {\pm} 0.2$
		& $75.9 {\pm} 0.6$
		& $\mathbf{77.9 {\pm} 1.0}$ \\
		& Time
		& $98$
		& $160$
		& $223$
		& $290$
		& $360$
		& $\mathbf{34}$ \\
		\midrule
		\multirow{2}{*}{INSET}
		& MJC
		& $76.8{\pm}1.0$
		& $77.2{\pm}0.5$
		& $76.9{\pm}1.3$
		& $76.9{\pm}0.6$
		& $77.2{\pm}0.4$
		& $\mathbf{79.0{\pm}0.6}$ \\
		& Time
		& $165$
		& $300$
		& $423$
		& $547$
		& $669$
		& $\mathbf{54}$ \\
		\bottomrule
	\end{tabular}
	\endgroup
\end{table}

\subsection{Sensitivity Analysis of Hyperparameters}\label{appendix:Sensitivity Analysis of Hyperparameters}

\paragraph{Impact of the Gradient Ascent Step Size $\alpha$.}
In $\operatorname{ReSet}$, the step size $\alpha$ for gradient ascent in Eq.~(\ref{add noise})  serves as a critical hyperparameter governing both convergence velocity and numerical stability, directly impacting the final performance.
An excessively large step size may lead to divergence, while a disproportionately small one risks trapping the optimization in suboptimal local minima and unnecessarily prolonging inference time.
We evaluate the performance of our two proposed OS Oracle variants across different datasets under various $\alpha$ configurations, with results illustrated in Figures~\ref{fig:alpha_Equiv} and~\ref{fig:alpha_INSET}. 
Furthermore, in our practical implementation, we set $\alpha$ as a learnable parameter to achieve an adaptive step size. 
This approach alleviates the burden of manual hyperparameter tuning for unseen datasets and enables the model to dynamically calibrate its optimization dynamics for diverse tasks. 
The performance of this self-adaptive configuration is denoted as ``Adaptive" in the corresponding figures.
The empirical results indicate that a fixed step size near $0.5$ generally yields favorable outcomes.
Notably, the adaptive strategy consistently outperforms the fixed-parameter configurations in most scenarios, demonstrating its superior capability in navigating the optimization landscape.
\begin{figure}[h]
	\centering
	\includegraphics[width=1\linewidth]{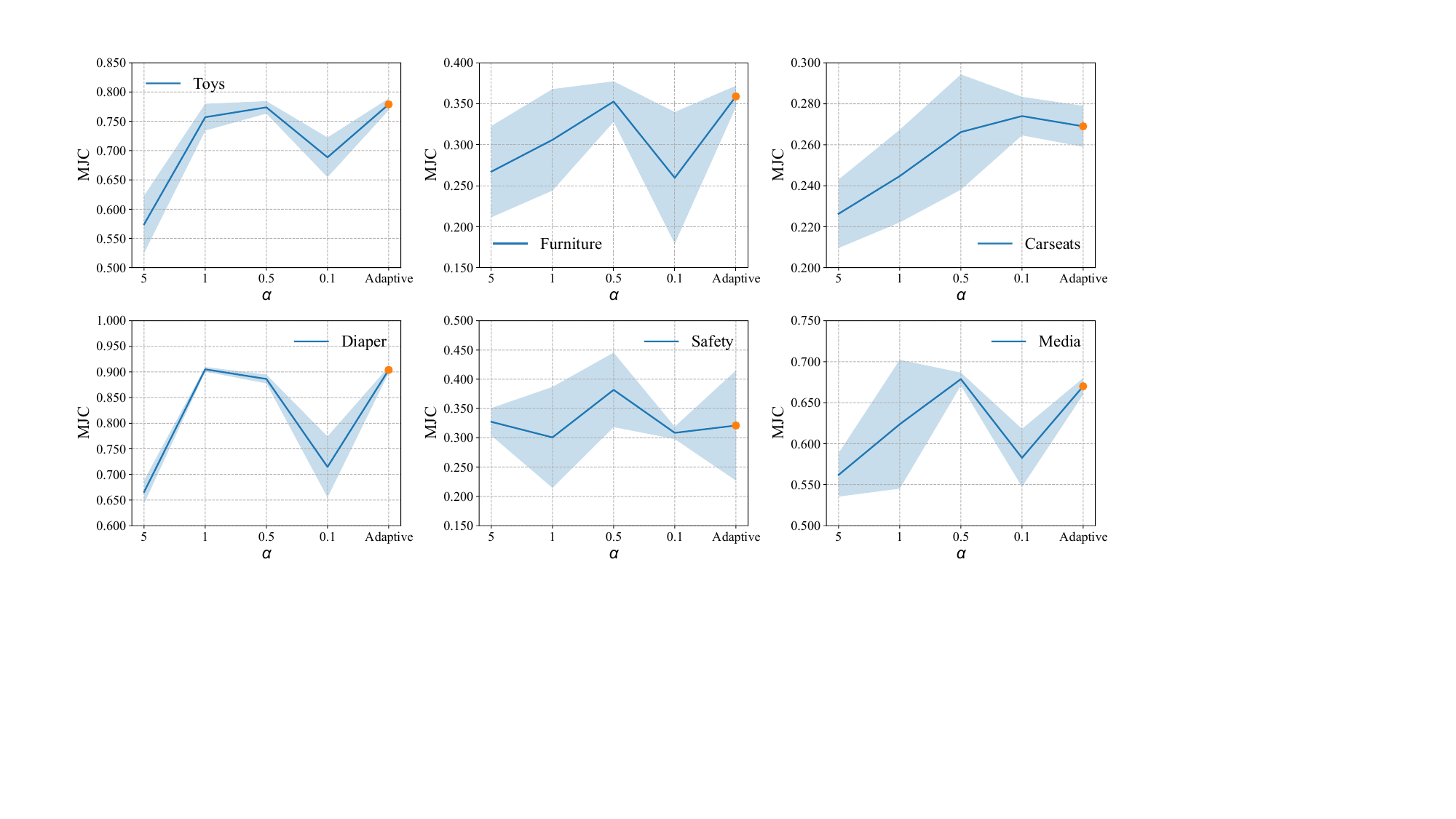}
	\caption{Sensitivity analysis of EquiVSet-$\operatorname{R}$ performance under different values of $\alpha$, where “Adaptive" denotes the variant with $\alpha$ as a learnable parameter.}
	\label{fig:alpha_Equiv}
\end{figure}

\begin{figure}[h]
	\centering
	\includegraphics[width=1\linewidth]{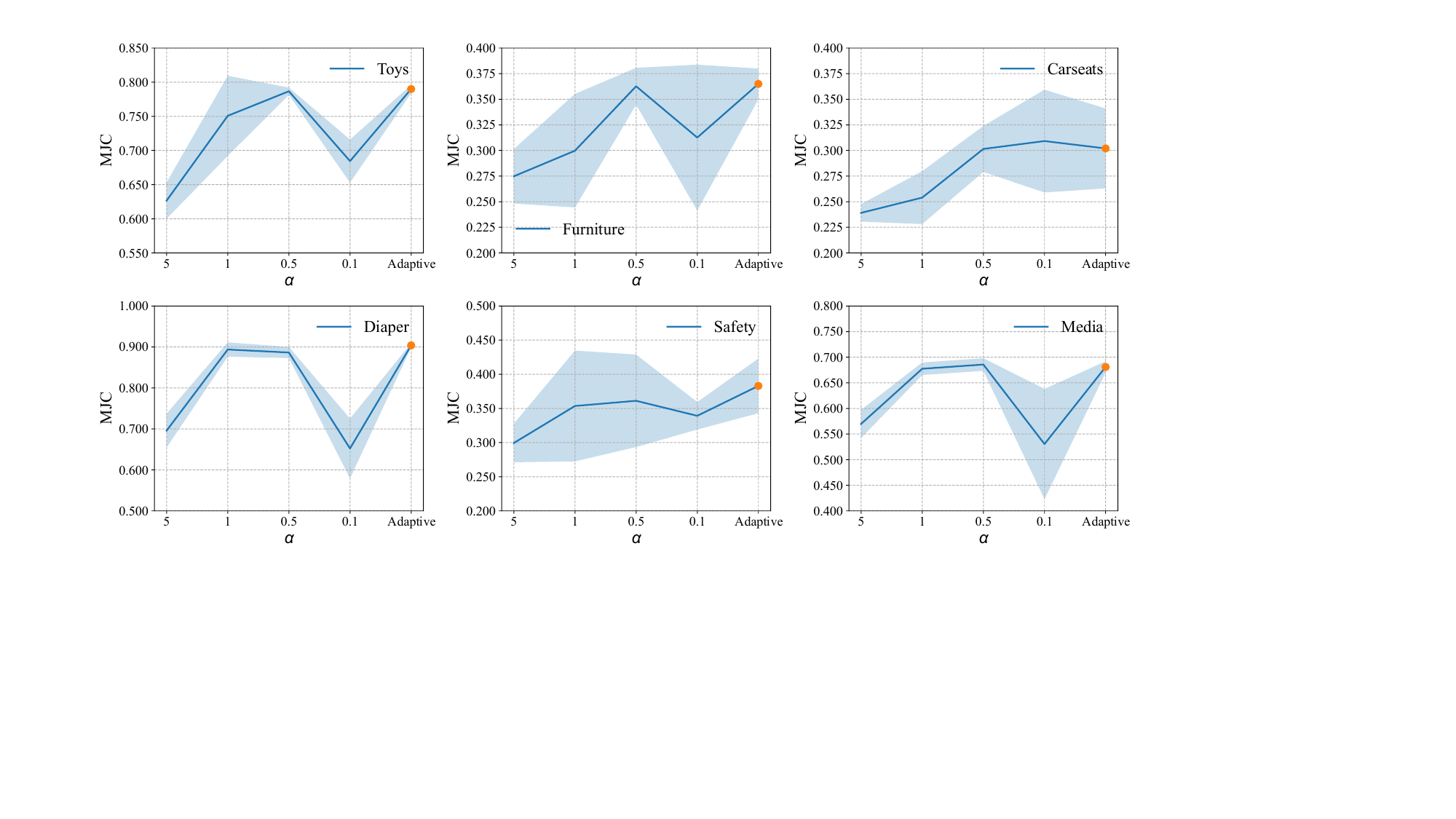}
	\caption{Sensitivity analysis of INSET-$\operatorname{R}$ performance under different values of $\alpha$, where “Adaptive" denotes the variant with $\alpha$ as a learnable parameter.}
	\label{fig:alpha_INSET}
\end{figure}

\subsection{Set Size Transferability Analysis}\label{appendix:Set Size Transferability Analysis}
Following \citet{ou2022learning},  we experiment to understand the pattern of set size transfer ability.
In this experiment, we train the models using fixed sizes of the ground set but test the trained model on different sizes. 
We fix the size of OS oracle $S^*$ to be $10$, and train the model with ground set $V$ of size $100$. After training, we test it using varying sizes of ground set in the range of $\{200, 400, 600, 800, 1000\}$. 
The experiments with two OS Oracle variants are conducted on the Two-Moons and Gaussian-Mixture datasets, respectively, with the results shown in Figures~\ref{fig:varying_Equiv} and~\ref{fig:varying_INSET}. 
As illustrated in the results, both proposed OS Oracle variants exhibit a observable degree of set-size transferability. When trained on a fixed ground set size and evaluated on increasingly larger sets, the models experience a gradual performance degradation, which aligns with theoretical expectations regarding the increased complexity of the search space.
This decline is relatively marginal on the Gaussian Mixture dataset but becomes more pronounced on the Two Moons dataset. We attribute this discrepancy to the intrinsic task complexity; the non-linear, intertwined geometry of the Two-Moons distribution likely poses a greater challenge for zero-shot size transfer compared to the more clustered Gaussian-Mixture distribution.

\begin{figure}[h]
	\centering
	\includegraphics[width=1\linewidth]{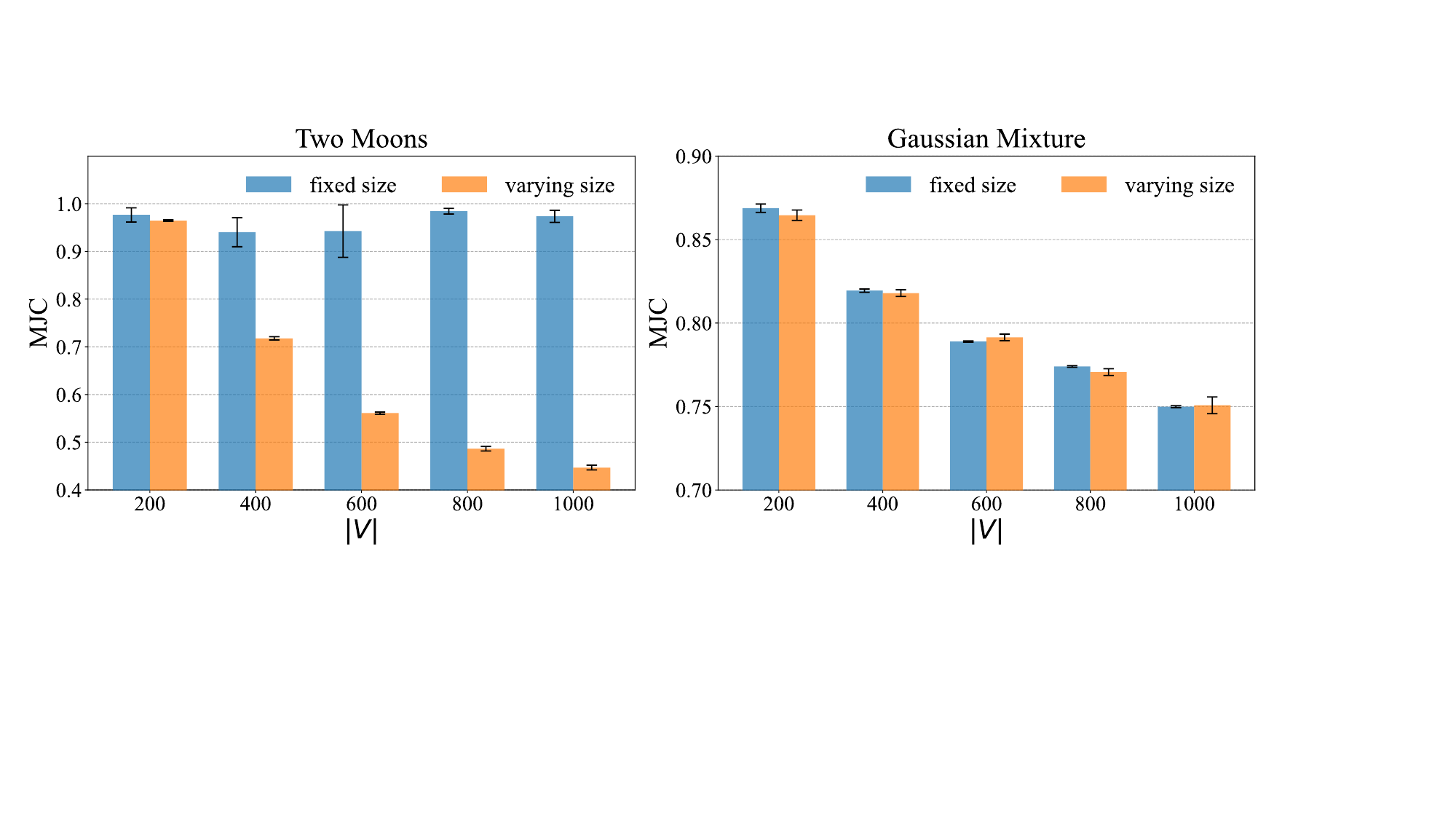}
	\caption{Synthetic results of EquiVSet-$\operatorname{R}$ under the set-size transferability setting are reported. The blue bars correspond to scenarios where the ground set size used at test time matches that employed during training, whereas the yellow bars indicate evaluations conducted with mismatched ground set sizes at inference.}
	\label{fig:varying_Equiv}
\end{figure}

\begin{figure}[h]
	\centering
	\includegraphics[width=1\linewidth]{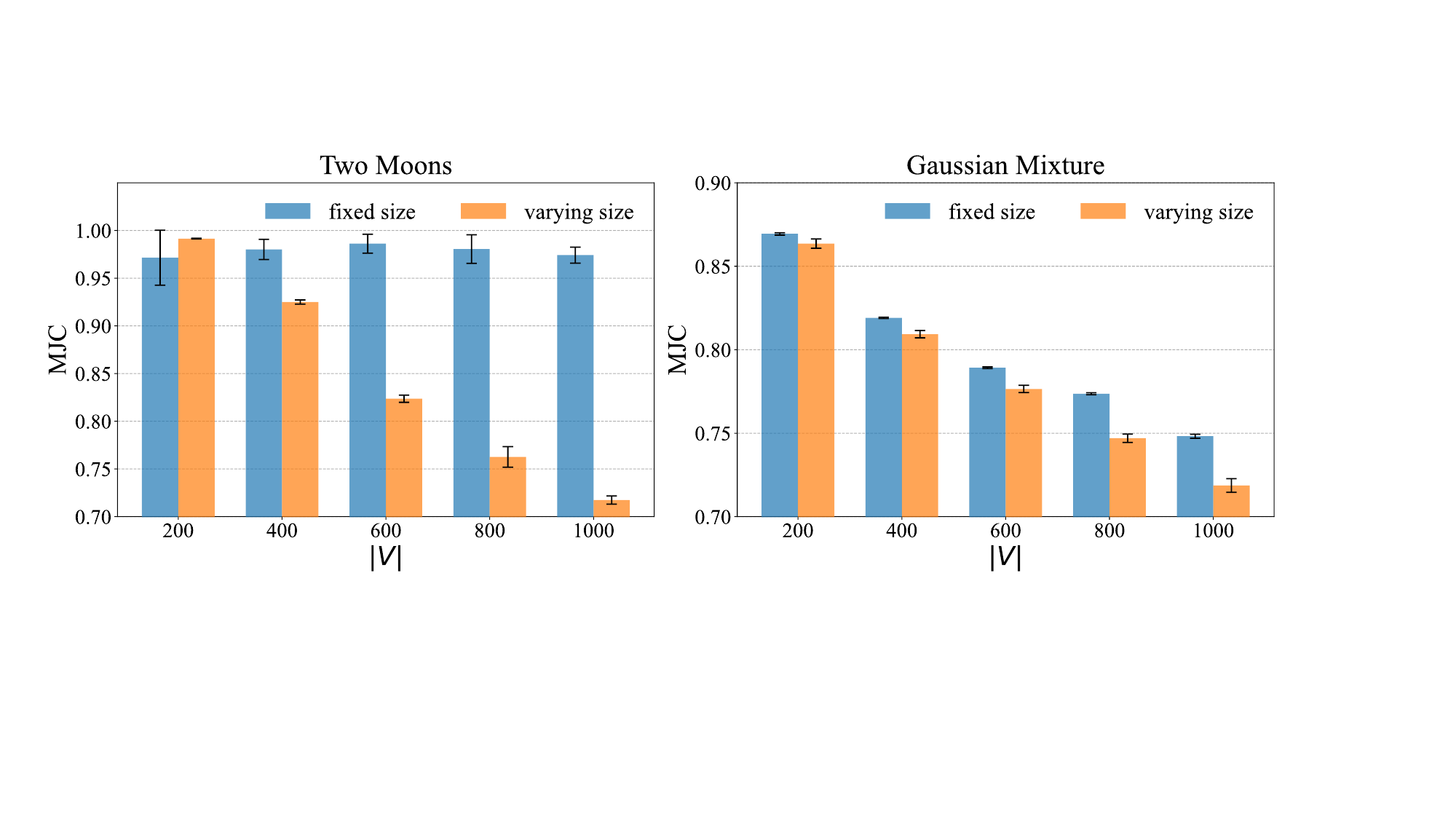}
	\caption{Synthetic results of INSET-$\operatorname{R}$ under the set-size transferability setting are reported. The blue bars correspond to scenarios where the ground set size used at test time matches that employed during training, whereas the yellow bars indicate evaluations conducted with mismatched ground set sizes at inference.}
	\label{fig:varying_INSET}
\end{figure}

\section{Related Work}\label{appendix:Related Work}
\subsection{Set Function Learning}
Many existing studies focus on learning an unknown function to predict the value of a given set \citep{wendler2019powerset, wendler2021learning, de2022neural}, commonly referred to as a function-value (FV) oracle.
\citet{zaheer2017deep} introduced the DeepSets architecture to model permutation-invariant and permutation-equivariant functions for set-valued prediction.
Building on DeepSets, \citet{lee2019set} incorporated the Transformer architecture to capture relational dependencies among elements within a set.
Additionally, several works have explored deep architectures for submodular set functions \citep{bilmes2017deep, dolhansky2016deep, balcan2018submodular}.
It is important to note that these methods typically require a substantial amount of supervision, which poses challenges for their practical deployment.

\textbf{Optimal Subset Oracle.}
\citet{ou2022learning} propose to learn latent set functions using optimal subsets rather than explicit function values as supervision, formulating the problem under an energy-based modeling framework with parameters updated via variational inference.
Building upon this framework, \citet{xie2024enhancing, xie2024horse} further incorporate contextual information from the superset and hierarchical attention mechanisms.
\citet{ozcan2025learning} further improve the fixed-point iterations in variational inference through implicit differentiation.
However, these methods require unrolled Monte Carlo sampling for gradient estimation during iterative optimization, rendering the training process inefficient.

\subsection{Energy-based Modeling}
Energy-based learning \citep{lecun2006tutorial} provides a classical framework for modeling underlying data distributions, in which model parameters are learned via principled techniques such as contrastive divergence \citep{carreira2005contrastive, tieleman2008training, du2019implicit} and score matching \citep{hyvarinen2005estimation, vincent2011connection, song2020sliced}.
In the framework of \citet{ou2022learning}, energy-based model are employed to parameterize the set mass function.
Building upon this formulation, we reinterpret the associated variational inference procedure from the perspective of free energy estimation \citep{he2025feat} and energy minimization \citep{gladstone2025energy}.
This viewpoint aligns closely with a broad line of work that casts learning and inference as energy minimization problems \citep{du2022learning, du2024learning, wang2025equilibrium}.

\subsection{Submodular Function Learning}
In practical applications, learning set functions typically necessitates the incorporation of structural priors. The underlying objectives are often assumed to be submodular, \textit{i.e.}, exhibiting the diminishing returns property. Unlike prior work that predominantly relies on function value oracles for parameter estimation \citep{dolhansky2016deep, bilmes2017deep, djolonga2017differentiable, kothawade2020deepsubmodularnetworksextractive, de2022neural, bhatt2024deepsubmodularperipteralnetworks}, this work learns latent set functions characterized by more general weak submodular properties through a more practical and computationally efficient optimal subset oracle.

\end{document}